\icmltitlerunning{Stability and Generalization of Stochastic Gradient Methods for Minimax Problems}
\newtheorem{theorem}{Theorem}
\newtheorem{lemma}[theorem]{Lemma}
\newtheorem{corollary}[theorem]{Corollary}
\theoremstyle{definition}
\newtheorem{definition}{Definition}
\newtheorem{assumption}{Assumption}
\theoremstyle{definition}
\newtheorem{remark}{Remark}
\newcommand{\nbb}{\mathbb{N}}
\newcommand{\bw}{\mathbf{w}}
\newcommand{\ibb}{\mathbb{I}}
\newcommand{\wcal}{\mathcal{W}}
\newcommand{\vcal}{\mathcal{V}}
\newcommand{\bu}{\mathbf{u}}
\newcommand{\zcal}{\mathcal{Z}}
\newcommand{\ecal}{\mathcal{E}}
\newcommand{\ocal}{O}
\newcommand{\ebb}{\mathbb{E}}
\newcommand{\bv}{\mathbf{v}}
\newcommand{\rbb}{\mathbb{R}}
\newcommand{\pbb}{\mathbb{P}}
\numberwithin{equation}{section}
\newcommand\numberthis{\addtocounter{equation}{1}\tag{\theequation}}
\begin{document}

\twocolumn[
\icmltitle{Stability and Generalization of Stochastic Gradient Methods for Minimax Problems}



\icmlsetsymbol{equal}{*}

\begin{icmlauthorlist}
\icmlauthor{Yunwen Lei}{equal,birmingham}
\icmlauthor{Zhenhuan Yang}{equal,albany}
\icmlauthor{Tianbao Yang}{iowa}
\icmlauthor{Yiming Ying}{albany}
\end{icmlauthorlist}

\icmlaffiliation{birmingham}{School of Computer Science, University of Birmingham, Birmingham B15 2TT, UK}
\icmlaffiliation{iowa}{Department of Computer Science, The University of Iowa, Iowa City, IA 52242, USA}
\icmlaffiliation{albany}{Department of Mathematics and Statistics, State University of New York at Albany, USA}

\icmlcorrespondingauthor{Yiming Ying}{yying@albany.edu}

\icmlkeywords{SGD, Stability, Generalization, Minimax Optimization}

\vskip 0.3in
]



\printAffiliationsAndNotice{\icmlEqualContribution} 

\begin{abstract}
Many machine learning problems can be formulated as minimax problems such as Generative Adversarial Networks (GANs), AUC maximization and robust estimation, to mention but a few. A substantial amount of studies are devoted to studying the convergence behavior of their stochastic gradient-type algorithms. In contrast, there is relatively little work on understanding their generalization, i.e., how the learning models built from training examples would behave on test examples. In this paper, we provide a comprehensive generalization analysis of stochastic gradient methods for minimax problems under both convex-concave and nonconvex-nonconcave cases through the lens of algorithmic stability. We establish a quantitative connection between stability and several generalization measures both in expectation and with high probability. For the convex-concave setting, our stability analysis shows that stochastic gradient descent ascent attains optimal generalization bounds for both smooth and nonsmooth minimax problems. We also establish  generalization bounds for both weakly-convex-weakly-concave and gradient-dominated problems. We report preliminary experimental results  to  verify  our  theory.

\end{abstract}

\vspace*{-0.30cm}
\section{Introduction}
\vspace*{-0.060cm}
In machine learning we often encounter minimax optimization problems, where the decision variables are partitioned into two groups: one for minimization and one for maximization. This framework covers many important problems as specific instantiations, including adversarial learning~\citep{goodfellow2014generative}, robust optimization~\citep{chen2017robust,namkoong2017variance}, reinforcement learning~\citep{du2017stochastic,dai2018sbeed} and AUC maximization~\citep{ying2016stochastic,liu2018fast,zhao2011online,gao2013one,lei2021stochastic}. To solve these problems, researchers have proposed various efficient optimization algorithms, for which a representative algorithm is the stochastic gradient descent ascent (SGDA) due to its simplicity and widespread use in real-world applications.

There is a large amount of work on the convergence analysis of  minimax optimization algorithms  in different settings such as  convex-concave~\citep{nemirovski2009robust}, strongly-convex-strongly-concave (SC-SC)~\citep{balamurugan2016stochastic}, nonconvex-concave~\citep{rafique2018non} and nonconvex-nonconcave~\citep{liu2020firstorder,yang2020global} cases.
However, there is relatively little work on  studying the generalization, i.e., how the model trained based on the training examples would generalize to test examples. Indeed, a model with good performance on training data may not generalize well if the models are too complex. It is imperative to study the generalization error of the trained models to foresee their prediction behavior. This often entails the investigation of  the tradeoff between optimization and estimation for an implicit regularization.

To our best knowledge, there is only two recent work on the generalization analysis for minimax optimization algorithms~\citep{zhang2020generalization,farnia2020train}.
The argument stability for the specific empirical saddle point (ESP) was studied~\citep{zhang2020generalization}, which implies weak generalization and strong generalization bounds. However, the discussion there ignored optimization errors and nonconvex-nonconcave cases, which can be restrictive in practice. For SC-SC, convex-concave, nonconvex-nonconcave objective functions, the uniform stability of several gradient-based minimax learners was developed in a smooth setting~\citep{farnia2020train}, including gradient descent ascent (GDA), proximal point method (PPM) and GDmax. While they developed optimal generalization bounds for PPM, their discussions did not yield vanishing risk bounds for GDA in the general convex-concave case since their generalization bounds grow exponentially in terms of the iteration number. Furthermore, the above mentioned papers only study generalization bounds in expectation, and there is a lack of high-probability analysis.

In this paper, we leverage the lens of algorithmic stability to  study the generalization behavior of minimax learners for both convex-concave and nonconvex-nonconcave problems. Our discussion shows how the optimization and generalization should be balanced for good prediction performance. Our main results are listed in Table \ref{tab:summary-convex}. In particular, our contributions can be summarized as follows.

\noindent 1. We establish a quantitative connection between stability and generalization for minimax learners in different forms including weak/strong primal-dual
generalization, primal generalization and generalization with high probability. For the technical contributions, we introduce novel decompositions to handle the correlation between the primal model and dual model for connecting stability and generalization. 

\noindent 2.  We establish stability bounds of SGDA for convex-concave problems, from which we derive its optimal population risk bounds under an appropriate early-stopping strategy. We consider several measures of generalization and show that the optimal population risk bounds can be derived even in the nonsmooth case. To the best of our knowledge, our results are the first-ever known population risk bounds for minimax problems in the nonsmooth setting and the high-probability format.

\noindent 3. We further extend our analysis to the nonconvex-nonconcave setting and give the first generalization bounds for nonsmooth objective functions. Our analysis relaxes the range of step size for a controllable stability and implies meaningful primal population risk bounds under some regularity assumptions of objective functions, e.g., a decay of weak-convexity-weak-concavity parameter along the optimization process or a two-sided PL condition.


The paper is organized as follows. The related work is discussed in Section \ref{sec:work} and  the minimax problem formulation is given in Section \ref{sec:problem}. The connection between stability and generalization is studied in Section \ref{sec:stab}. We develop population risk bounds in the convex-concave case in Section \ref{sec:convex} and extend our discussions to the nonconvex-nonconcave case in Section \ref{sec:nonconvex}. We report preliminary experiments in Section \ref{sec:exp} and conclude the paper in Section \ref{sec:conclusion}.

\vspace*{-0.02cm}
\subsection{Related Work\label{sec:work}}
\vspace*{-0.02cm}

We first review related work of stochastic optimization for minimax problems.
Convergence rates of order $O(1/\sqrt{T})$ were established for SGDA with $T$ iterations in the convex-concave case~\citep{nemirovski2009robust,nedic2009subgradient}, which can be further improved for SC-SC problems~\citep{balamurugan2016stochastic,hsieh2019convergence}. These discussions were extended to nonconvex-strongly-concave~\citep{rafique2018non,lin2020gradient,luo2020stochastic,yan2020optimal}, nonconvex-concave~\citep{thekumparampil2019efficient,lin2020gradient} and nonconvex-nonconcave~\citep{yang2020global,loizou2020stochastic,liu2020firstorder} minimax optimization problems. All the above mentioned work consider the convergence rate of optimization errors, while the generalization analysis was much less studied~\citep{zhang2020generalization,farnia2020train}. %

We now survey related work on stability and generalization. The framework of stability analysis was established in a seminal paper~\citep{bousquet2002stability}, where the celebrated concept of uniform stability was introduced. This stability was extended to study randomized algorithms~\citep{elisseeff2005stability}. It was shown that stability is closely related to the fundamental problem of learnability~\citep{shalev2010learnability,rakhlin2005stability}. \citet{hardt2016train} pioneered the generalization analysis of SGD via stability, which inspired several upcoming work to understand stochastic optimization algorithms based on different algorithmic stability measures, e.g., uniform stability~\citep{lin2016generalization,chen2018stability,mou2018generalization,richards2020graph,madden2020high}, argument stability~\citep{liu2017algorithmic,bassily2020stability,lei2020fine}, on-average stability~\citep{kuzborskij2018data,lei2021sharper}, hypothesis stability~\citep{london2017pac,charles2018stability,foster2019hypothesis}, Bayes stability~\citep{li2019generalization} and locally elastic stability~\citep{deng2020toward}.

\begin{table*}[t]
    \centering
    \begin{tabular}{|c|c|c|c|c|}
    \hline
     Algorithm  & Reference  &  Assumption & Measure & Rate \\\hline
     \multirow{2}{*}{ESP} &
     \multirow{3}{*}{\citet{zhang2020generalization}}&  $\rho$-SC-SC, Lip &  Weak PD Risk    &   $\ocal(1/(n\rho))$\\  \cline{3-5}
      & & $\rho$-SC-SC, Lip, S  &   Strong PD Risk  & $\ocal(1/(n\rho^2))$\\ \cline{1-1}\cline{3-5}
      R-ESP  & &  C-C, Lip   &  Weak PD Risk  &    $\ocal(1/\sqrt{n})$\\\hline
       SGDA, SGDmax & \multirow{2}{*}{\citet{farnia2020train}} &  $\rho$-SC-SC, Lip, S   &  Weak PD Generalization\protect\footnotemark &   $\ocal(\log(n)/(n\rho))$\\ \cline{1-1}\cline{3-5}
       PPM  & &  C-C, Lip, S   &  Weak PD Risk  &    $\ocal(1/\sqrt{n})$\\ \hline
       \multirow{4}{*}{SGDA}  & \multirow{4}{*}{This work} & C-C, Lip (S) & Weak PD Risk & $\ocal({1}/{\sqrt{n}})$\\\cline{3-5}
          &  & C-$\rho$-SC, Lip, S & (H.P.) Primal Risk  & $\ocal({1}/{(\sqrt{n}\rho)})$\\\cline{3-5}
         & & C-C, Lip & H.P. Plain Risk & $\ocal(\log(n)/{\sqrt{n}})$\\\cline{3-5}
         & & $\rho$-SC-SC, Lip & Weak PD Risk & ${\ocal}(\sqrt{\log n}/(n\rho))$\\ \hline
    SGDA & \citet{farnia2020train}\tnotex{tnote:robots-r1} &  Lip, S   &  Weak PD Generalization  &    $\ocal\big(T^{\frac{Lc}{Lc+1}}/n\big) $\\ \hline
    \multirow{2}{*}{SGDA}  & \multirow{3}{*}{This work} 
     & $\rho$-WC-WC, Lip & Weak PD Generalization & $\ocal\big(T^{\frac{2c\rho}{2c\rho+3}}/n^{\frac{2c\rho+1}{2c\rho+3}}\big)$ \\ \cline{3-5}
     & & D, Lip, S & Weak PD Generalization & $\ocal\big(1/\sqrt{n}+\sqrt{T}/n\big)$ \\ \cline{1-1}\cline{3-5}
      AGDA  &  & $\rho$-SC, PL, Lip, S & Primal Risk & $\ocal\big(n^{-\frac{cL+1}{2cL+1}}\big) $\\ \hline
    \end{tabular}
    \vspace*{-3mm}
    \caption{Summary of Results. Bounds are stated in expectation or with high probability (H.P.). For risk bounds, the optimal $T$ (number of iterations) is chosen to trade-off generalization and optimization.  Here, C-C means convex-concave, C-$\rho$-SC means convex-$\rho$-strongly-concave, $\rho$-SC means nonconvex-$\rho$-strongly-concave, Lip means Lipschitz continuity, S means the smoothness, D means a decay of weak-convexity-weak-concavity parameter along the optimization process as Eq. \eqref{wcwc-diminishing-ass} and PL means the two-sided condition as Assumption \ref{ass:pl-two} . AGDA means Alternating Gradient Descent Ascent and (R)-ESP means the (regularized)-empirical risk saddle point. $c$ is a parameter in the step size and $L$ is given in Assumption~\ref{ass:smooth}. \label{tab:summary-convex}}
    \vspace*{-2mm}
\end{table*}

\vspace*{-0.060cm}
\section{Problem Formulation\label{sec:problem}}
\vspace*{-0.060cm}

Let $\wcal$ and $\vcal$ be two parameter spaces in $\rbb^d$. Let $\pbb$ be a probability measure defined on a sample space $\zcal$ and $f:\wcal\times\vcal\times\zcal\mapsto\rbb$.
We consider the following minimax optimization problem:
\begin{equation}\label{eq:minimax-problem}
\min_{\bw\in \wcal}\max_{\bv\in\vcal} F(\bw,\bv) := \ebb_{z\sim\pbb} [f(\bw,\bv;z)].
\end{equation}
In practice, we do not know $\pbb$ but instead have access to a dataset $S=\{z_1,\ldots,z_n\}$ independently drawn from $\pbb$.  Then, we approximate $F$ by an empirical risk \[
F_S(\bw,\bv)=\frac{1}{n}\sum_{i=1}^nf(\bw,\bv;z_i).\]
We apply a (randomized) algorithm $A$ to the dataset $S$ and get a model $A(S):=(A_{\bw}(S),A_{\bv}(S))\in\wcal\times \vcal$ as an approximate solution  of the problem \eqref{eq:minimax-problem}. Since the model $A(S)$ is trained based on the training dataset $S$, its empirical behavior as measured by $F_S$ may not generalize well to a test example~\citep{bousquet2002stability}. We are interested in studying the test error (population risk) of $A(S)$. Unlike the standard statistical learning theory (SLT) setting where there is only a minimization of $\bw$, we have different measures of population risk due to the minimax structure~\citep{zhang2020generalization}. We collect the notations of these performance measures in Table \ref{tab:notation}.  Let $\ebb[\cdot]$ denote the expectation w.r.t. the randomness of both the algorithm $A$ and the dataset $S$.


\begin{definition}[Weak Primal-Dual Risk]
The weak Primal-Dual (PD) population risk of a (randomized) model $({\bw},{\bv})$ is defined as~\citep{zhang2020generalization}
\[
\triangle^w({\bw},{\bv})=\sup_{\bv'\in\vcal}\ebb\big[F({\bw},\bv')\big]-\inf_{\bw'\in\wcal}\ebb\big[F(\bw',{\bv})\big].
\]
The weak PD empirical risk of $({\bw},{\bv})$ is defined as
\[
\triangle^w_S({\bw},{\bv})=\sup_{\bv'\in\vcal}\ebb\big[F_S({\bw},\bv')\big]-\inf_{\bw'\in\wcal}\ebb\big[F_S(\bw',{\bv})\big].
\]
We refer to $\triangle^w({\bw},{\bv})-\triangle^w_S({\bw},{\bv})$ as the weak PD generalization error of the model $({\bw},{\bv})$.
\end{definition}
\begin{definition}[Strong Primal-Dual Risk]
The strong PD population risk of a model $({\bw},{\bv})$ is defined as
\[
\triangle^s({\bw},{\bv})=\sup_{\bv'\in\vcal}F({\bw},\bv')-\inf_{\bw'\in\wcal}F(\bw',{\bv}).
\]
The strong PD empirical risk of $({\bw},{\bv})$ is defined as
\[
\triangle^s_S({\bw},{\bv})=\sup_{\bv'\in\vcal}F_S({\bw},\bv')-\inf_{\bw'\in\wcal}F_S(\bw',{\bv}).
\]
We refer to $\triangle^s({\bw},{\bv})-\triangle^s_S({\bw},{\bv})$ as the strong PD generalization error of the model $({\bw},{\bv})$.
\end{definition}
\begin{definition}[Primal Risk]
The primal population risk of a model $\bw$ is defined as
$R(\bw)=\sup_{\bv\in\vcal}F(\bw,\bv).$
The primal empirical risk of $\bw$ is defined as
$R_S(\bw)=\sup_{\bv\in\vcal}F_S(\bw,\bv).$
We refer to $R(\bw)-R_S(\bw)$ as the primal generalization error of the model $\bw$, and $R(\bw)-\inf_{\bw'}R(\bw')$ as the excess primal population risk.
\end{definition}
According to the above definitions, we know
$\triangle^w({\bw},{\bv})\leq \ebb\big[\triangle^s({\bw},{\bv})\big]$ and
$R(\bw)-R_S(\bw)$ is closely related to $\triangle^s({\bw},{\bv})-\triangle^s_S({\bw},{\bv})$.
The key difference between the weak PD risk and the strong PD risk is that the expectation is inside of the supremum/infimum for weak PD risk, while outside of the supremum/infimum for strong PD risk. In this way, one does not need to consider the coupling between primal and dual models for studying weak PD risks, and has to consider this coupling for strong PD risks.
Furthermore, we refer to $F(\bw,\bv)-F_S(\bw,\bv)$ as the plain generalization error as it is standard in SLT.
An approach to handle a population risk is to decompose it into a generalization error (estimation error) and an empirical risk (optimization error)~\citep{bousquet2008tradeoffs}.
For example, the weak PD population risk can be decomposed as
\begin{equation}\label{gen-est-opt}
\!\!\!\!\!\!\triangle^w(\bw,\bv)=\big(\triangle^w(\bw,\bv)-\triangle_S^w(\bw,\bv)\big)+\triangle_S^w(\bw,\bv).
\end{equation}
The generalization error comes from the approximation of $\pbb$ with $S$, while the empirical risk comes since the algorithm may not find the saddle point of $F_S$. Our basic idea is to use algorithmic stability to study the generalization error and use optimization theory to study the empirical risk.

We now introduce necessary definitions and assumptions. Denote $\|\cdot\|_2$ as the Euclidean norm and $\langle\cdot,\cdot\rangle$ as the inner product.
A function $g:\wcal\mapsto\rbb$ is said to be $\rho$-strongly-convex ($\rho\geq0$) iff for all $\bw,\bw'\in\wcal$ there holds
$
g(\bw)\geq g(\bw')+\langle\bw-\bw',\nabla g(\bw')\rangle+\frac{\rho}{2}\|\bw-\bw'\|_2^2,
$
where $\nabla$ is the gradient operator.
We say $g$ is convex if $g$ is 0-strongly-convex. We say $g$ is $\rho$-strongly concave if $-g$ is $\rho$-strongly convex and concave if $-g$ is convex.
\begin{definition}
Let $\rho\geq0$ and $g:\wcal\times \vcal\mapsto\rbb$. We say
\begin{enumerate}[label=(\alph*)]
    \item $g$ is $\rho$-strongly-convex-strongly-concave ($\rho$-SC-SC) if
    for any $\bv\in\vcal$, the function $\bw\mapsto g(\bw,\bv)$ is $\rho$-strongly-convex and for any $\bw\in\wcal$, the function $\bv\mapsto g(\bw,\bv)$ is $\rho$-strongly-concave.
    \item $g$ is convex-concave if $g$ is $0$-SC-SC.
    \item $g$ is $\rho$-weakly-convex-weakly-concave ($\rho$-WC-WC) if $g+\frac{\rho}{2}\big(\|\bw\|_2^2-\|\bv\|_2^2)$ is convex-concave.
\end{enumerate}
\end{definition}

\addtocounter{footnote}{0}
\footnotetext{Primal generalization bounds were presented in \citet{farnia2020train}. However, the stability analysis there actually only implies bounds on weak PD risk.}

The following two assumptions are standard~\citep{zhang2020generalization,farnia2020train}. Assumption \ref{ass:lipschitz} amounts to saying $f$ is Lipschitz continuous with respect to (w.r.t.) both $\bw$ and $\bv$. Let $\nabla_{\bw} f$ denote the gradient w.r.t. $\bw$.
\begin{assumption}\label{ass:lipschitz}
  Let $G>0$. Assume for all $\bw\in\wcal, \bv\in\vcal$ and $z\in\zcal$, there holds
 $ \big\|\nabla_{\bw}f(\bw,\bv;z)\big\|_2\leq G\;\text{ and }\; \big\|\nabla_{\bv}f(\bw,\bv;z)\big\|_2\leq G.$
\end{assumption}
\begin{assumption}\label{ass:smooth}
  Let $L>0$. For any $z$, the function $(\bw,\bv)\mapsto f(\bw,\bv;z)$ is said to be $L$-smooth, if the following inequality holds for all $\bw\in\wcal,\bv\in\vcal$ and $z\in\zcal$
  \[
  \left\|\begin{pmatrix}
           \nabla_{\bw}f(\bw,\bv;z)-\nabla_{\bw}f(\bw',\bv';z) \\
           \nabla_{\bv}f(\bw,\bv;z)-\nabla_{\bv}f(\bw',\bv';z)
         \end{pmatrix}\right\|_2\!\leq\! L\left\|\begin{pmatrix}
                                             \bw\!-\!\bw' \\
                                             \bv\!-\!\bv'
                                           \end{pmatrix}\right\|_2.
  \]
\end{assumption}

\vspace*{-0.02cm}
\subsection{Motivating Examples}
\vspace*{-0.02cm}

The minimax formulation \eqref{eq:minimax-problem} has broad applications in machine learning. Here we give some examples.

\textbf{AUC Maximization.} Area Under ROC Curve (AUC) is a popular measure for binary classification. Let $h(\bw;x)$ denote a scoring function parameterized by $\bw$ at $x$. It was shown  that AUC maximization for learning $h$ under the square loss reduces to the problem  \citep{ying2016stochastic}
\begin{equation}\label{solam}
\min_{(\bw,a,b) \in \rbb^{d+2}}\max_{\alpha \in \rbb} \ebb[f(\bw,a,b,\alpha;z)],
\end{equation}
where $p = \pbb[y=1]$ and $f(\bw,a,b,\alpha;z) = (h(\bw;x) - a)^2\ibb_{[y=1]}/p + (h(\bw;x) - b)^2\ibb_{[y=-1]}/(1-p) + 2(1+\alpha)h(\bw;x)(\ibb_{[y=-1]}/(1-p) - \ibb_{[y=1]}/p) - \alpha^2$ ($\ibb_{[\cdot]}$ is the indicator function). It is clear that $\alpha\mapsto f(\bw,a,b,\alpha;z)$ is a (strongly) concave function. Depending on $h$, the function $f$ can be convex, nonconvex, smooth or nonsmooth. 

\textbf{Generative Adversarial Networks.} GAN \cite{goodfellow2014generative} refers to a popular class of generative models that consider generative modeling as a game between a generator network $G_\bv$ and a discriminator network $D_\bw$. The generator network produces synthetic data from random noise $\xi \sim \pbb_\xi$, while the
discriminator network discriminates between the true data and the synthetic data.
In particular, a popular variant of GAN named as WGAN \cite{pmlr-v70-arjovsky17a} can be written as a minimax problem
\begin{align*}
\min_{\bw}\max_{\bv} \ebb[f(\bw,\bv;z,\xi)] = \ebb_{z} [D_\bw(z)] - \ebb_{\xi} [D_\bw(G_\bv(\xi))].
\end{align*}
While this problem is generally nonconvex-nonconcave, it is weakly-convex-weakly-concave under smoothness assumptions on $D$ and $G$~\citep{liu2020firstorder}.

\textbf{Robust Estimation with minimax estimator}.
\citet{audibert2011robust} formulated robust estimation as a minimax problem as follows
\begin{align*}
\min_{\bw}\max_{\bv}\ F_S(\bw,\bv) =  \frac{1}{n}\sum_{i=1}^n \psi(\ell_1(\bw;z_i) -  \ell_2(\bv;z_i)),
\end{align*}
where $\psi:\rbb\mapsto\rbb$ is a truncated loss, and $\ell_1, \ell_2$ are Lipschitz continuous and convex loss functions. A typical truncated loss is $\psi(x) = \log(1+|x|+x^2/2)\text{sign}(x)$ to compute a mean estimator under heavy-tailed distribution of data \cite{brownlees2015empirical,xu2020learning}, where $\text{sign}(x)$ is the sign of $x$. The composition function $F_S$ can be nonconvex and nonsmooth since $\psi$ is nonconvex and $\ell_1,\ell_2$ can be nonsmooth. Following \citet{xu2020learning}, it can be shown that $F_S(\bw,\bv)$ is weakly-convex-weakly-concave.  

\vspace*{-0.060cm}
\section{Connecting Stability and Generalization\label{sec:stab}}
\vspace*{-0.060cm}
A fundamental concept in our analysis is the algorithmic stability, which measures the sensitivity of an algorithm w.r.t. the perturbation of training datasets~\citep{bousquet2002stability}.
We say $S,S'\subset\zcal$ are neighboring datasets if they differ by at most a single example. We introduce three stability measures to the minimax learning setting. The weak-stability and uniform-stability quantify the sensitivity measured by function values, while the argument-stability quantifies the sensitivity measured by arguments.
We collect these notations of stabilities in Table \ref{tab:stability} in Appendix \ref{sec:notation}.
\begin{definition}[Algorithmic Stability]
Let $A$ be a randomized algorithm, $\epsilon>0$ and $\delta\in(0,1)$. Then  we say
\begin{enumerate}[label=(\alph*)]
    \item $A$ is $\epsilon$-weakly-stable if for all neighboring $S$ and $S'$, there holds
    \begin{multline*}
    \!\sup_{z}\Big(\sup_{\bv'\in\vcal}\ebb_A\big[f(A_{\bw}(S),\bv';z)-f(A_{\bw}(S'),\bv';z)\big]\\+\sup_{\bw'\in\wcal}\ebb_A\big[f(\bw',A_{\bv}(S);z)-f(\bw',A_{\bv}(S');z)\big]\Big)\leq\epsilon.
    \end{multline*}
    \item $A$ is $\epsilon$-argument-stable in expectation if for all neighboring $S$ and $S'$, there holds
    \[
    \ebb_A\left[\left\|\begin{pmatrix}
    A_{\bw}(S)-A_{\bw}(S') \\
    A_{\bv}(S)-A_{\bv}(S')
  \end{pmatrix}\right\|_2\right]\leq\epsilon.
    \]
   $A$ is $\epsilon$-argument-stable with probability at least $1-\delta$ if with probability at least $1-\delta$ 
    \[
    \left\|\begin{pmatrix}
    A_{\bw}(S)-A_{\bw}(S') \\
    A_{\bv}(S)-A_{\bv}(S')
  \end{pmatrix}\right\|_2\leq\epsilon.
    \]
    \item $A$ is $\epsilon$-uniformly-stable with probability at least $1-\delta$ if with probability at least $1-\delta$
    \[
    \!\!\!\!\sup_{z}\big[f(A_{\bw}(S),A_{\bv}(S);z)-f(A_{\bw}(S'),A_{\bv}(S');z)\big]\leq\epsilon.
    \]
\end{enumerate}
\end{definition}

Under Assumption \ref{ass:lipschitz}, argument stability implies weak and uniform stability. As we will see, argument stability plays an important role in getting primal population risk bounds.

As our first main result, we establish a quantitative connection between algorithmic stability and generalization in the following theorem to be proved in Appendix \ref{sec:proof-stab-gen}. Part (a) establishes the connection between weak-stability and weak PD generalization error.
Part (b) and Part (c) establish the connection between argument stability and strong/primal generalization error under a further assumption on the strong convexity/concavity.
Part (d) and Part (e) establish high-probability bounds based on the uniform stability, which are much more challenging to derive than bounds in expectation and are important to understand the variation of an algorithm in several independent runs~\citep{feldman2019high,bousquet2020sharper}. Regarding the technical contributions, we introduce novel decompositions in handling the correlation between $A_{\bw}(S)$ and $\bv_S^*=\arg\sup_{\bv}F(A_{\bw}(S),\bv)$, especially for high-probability analysis.
\begin{theorem}\label{thm:stab-gen}
Let $A$ be a randomized algorithm and $\epsilon>0$.
\begin{enumerate}[label=(\alph*)]
  \item If $A$ is $\epsilon$-weakly-stable, then the weak PD generalization error of $(A_{\bw}(S),A_{\bv}(S))$ satisfies
  \[
        \triangle^w(A_{\bw}(S),A_{\bv}(S))-\triangle^w_S(A_{\bw}(S),A_{\bv}(S))\leq\epsilon.
  \]
  \item
  If $A$ is $\epsilon$-argument-stable in expectation, the function $\bv\mapsto F(\bw,\bv)$ is $\rho$-strongly-concave and Assumptions \ref{ass:lipschitz}, \ref{ass:smooth} hold, then the primal generalization error satisfies
  \[
  \ebb_{S,A}\Big[R(A_{\bw}(S))-R_S(A_{\bw}(S))\Big]\leq \big(1+L/\rho\big)G\epsilon.
  \]
  \item
  If $A$ is $\epsilon$-argument-stable in expectation, $\bv\mapsto F(\bw,\bv)$ is $\rho$-SC-SC and Assumptions \ref{ass:lipschitz}, \ref{ass:smooth}  hold, then the strong PD generalization error satisfies
  \begin{multline*}
  \ebb_{S,A}\Big[\triangle^s(A_{\bw}(S),A_{\bv}(S))-\triangle^s_S(A_{\bw}(S),A_{\bv}(S))\Big]\\
  \leq \big(1+L/\rho\big)G\sqrt{2}\epsilon.
  \end{multline*}
    \item Assume $|f(\bw,\bv;z)|\leq R$ for some $R>0$ and $\bw\in\wcal,\bv\in\vcal,z\in\zcal$. Assume for all $\bw$,
  the function $\bv\mapsto F(\bw,\bv)$ is $\rho$-strongly-concave and Assumptions \ref{ass:lipschitz}, \ref{ass:smooth} hold. Let $\delta\in(0,1)$. If $A$ is $\epsilon$-uniformly stable almost surely (a.s.), then with probability at least $1-\delta$
  \begin{multline*}
  R(A_{\bw}(S))-R_S(A_{\bw}(S))=\\O\Big(GL\rho^{-1}\epsilon\log n\log(1/\delta)+Rn^{-\frac{1}2}\sqrt{\log(1/\delta)}\Big).
  \end{multline*}
    \item Assume $|f(\bw,\bv;z)|\leq R$ for some $R>0$ and $\bw\in\wcal,\bv\in\vcal,z\in\zcal$. Let $\delta\in(0,1)$. If $A$ is $\epsilon$-uniformly-stable a.s., then with probability $1-\delta$ there holds
  \begin{multline*}
  \big|F(A_{\bw}(S),A_{\bv}(S))-F_S(A_{\bw}(S),A_{\bv}(S))\big|\\
  =O\Big(\epsilon\log n\log(1/\delta)+Rn^{-\frac{1}2}\sqrt{\log(1/\delta)}\Big).
  \end{multline*}
\end{enumerate}
\end{theorem}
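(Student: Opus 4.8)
\textbf{Proof proposal for Theorem~\ref{thm:stab-gen}(e).}

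The plan is to bound the plain generalization error $|F(A_{\bw}(S),A_{\bv}(S))-F_S(A_{\bw}(S),A_{\bv}(S))|$ by reducing it to a scalar-output learning problem and invoking an existing high-probability generalization bound for uniformly stable algorithms. First I would introduce the composite hypothesis $\bu(S):=(A_{\bw}(S),A_{\bv}(S))\in\wcal\times\vcal$ and the scalar loss $\tilde f(\bu;z):=f(\bw,\bv;z)$ where $\bu=(\bw,\bv)$. Under this reparametrization the minimax structure disappears for the purpose of \emph{this} part: we simply have a randomized algorithm $S\mapsto\bu(S)$ producing a model evaluated by the bounded loss $\tilde f$ with $|\tilde f|\le R$. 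The hypothesis $\epsilon$-uniform-stability in the sense of part (c) of the stability definition — $\sup_z[f(A_{\bw}(S),A_{\bv}(S);z)-f(A_{\bw}(S'),A_{\bv}(S');z)]\le\epsilon$ with probability $1-\delta$, and here a.s. — is exactly uniform stability of the algorithm $\bu(\cdot)$ with loss $\tilde f$ in the classical (non-minimax) sense. Hence $F(\bu(S))-F_S(\bu(S))$ is the ordinary generalization gap of a uniformly stable algorithm.

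Next I would invoke the sharp high-probability generalization bound for uniformly stable algorithms due to Feldman and Vondrak / Bousquet--Klochkov--Zhivotovskiy (cited in the excerpt as \citet{feldman2019high,bousquet2020sharper}): if an algorithm is $\epsilon$-uniformly stable with respect to a loss bounded by $R$, then with probability at least $1-\delta$ the generalization gap is $O\bigl(\epsilon\log n\log(1/\delta)+R\sqrt{\log(1/\delta)/n}\bigr)$. Applying this with $\epsilon$ the stated stability parameter and $R$ the stated uniform bound on $|f|$ gives precisely the claimed estimate, with the absolute value handled by noting the bound is symmetric (apply it to $\tilde f$ and to $-\tilde f$, or observe the concentration inequalities used are two-sided). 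The step that needs a little care — and the one I expect to be the main obstacle — is verifying that the one-directional, a.s.\ form of uniform stability stated in the paper's Definition (part (c), with $S,S'$ neighboring and a $\sup_z$ of a \emph{difference}, not an absolute value) is strong enough to feed the two-sided concentration machinery: one must either note that swapping the roles of $S$ and $S'$ immediately yields the reverse inequality (so the a.s.\ one-sided bound upgrades to a.s.\ $|\cdot|\le\epsilon$), or track through the McDiarmid-type / moment argument that only the one-sided bound on coordinate differences is actually required. Either way this is routine, so the bulk of the proof is the reduction described above plus a citation; no new minimax-specific decomposition is needed for part (e), in contrast to part (d) where the primal risk forces one to handle the coupling with $\bv_S^*=\arg\sup_\bv F(A_{\bw}(S),\bv)$.

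I would present this as roughly three short steps: (i) reparametrize to a scalar loss and observe that hypothesis-$\epsilon$-uniform-stability of $(A_{\bw},A_{\bv})$ is ordinary $\epsilon$-uniform stability for $\tilde f$; (ii) upgrade the one-sided a.s.\ bound to a two-sided a.s.\ bound by exchanging $S\leftrightarrow S'$; (iii) quote the high-probability stability-to-generalization theorem of \citet{bousquet2020sharper} (or \citet{feldman2019high}) with parameters $\epsilon$ and $R$, obtaining the $O(\epsilon\log n\log(1/\delta)+Rn^{-1/2}\sqrt{\log(1/\delta)})$ bound and hence the claim.
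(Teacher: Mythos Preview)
Your proposal is correct and matches the paper's own treatment: the paper dispatches Part (e) in a single sentence, ``Part (e) is standard in the literature~\citep{bousquet2020sharper},'' which is precisely the reduction you describe. Your observation that the one-sided a.s.\ stability bound becomes two-sided by swapping $S\leftrightarrow S'$ (since the definition quantifies over all neighboring pairs) is the right way to close the only gap, and the rest is exactly the citation the paper intends.
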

\begin{remark}
We compare Theorem \ref{thm:stab-gen} with related work. Weak and strong PD generalization error bounds were established for (R)-ESP~\citep{zhang2020generalization}. However, the discussion there does not consider the connection between stability and generalization. Primal generalization bounds were studied for stable algorithms~\citep{farnia2020train}. However, the discussion there is not rigorous since they used an identity $nR_S(A_{\bw}(S))=\sum_{i=1}^n\max_{\bv}f(A_{\bw}(S),\bv;z_i)$, which does not hold. To our best knowledge, Theorem \ref{thm:stab-gen} gives the first systematic connection between stability and generalization for minimax problems. 
\end{remark}

\begin{remark}
  We provide some intuitive understanding of Theorem \ref{thm:stab-gen} here. Part (a) shows that weak-stability is sufficient for weak PD generalization. This is as expected since both the supremum over $\bw'$ and $\bv'$ are outside of the expectation operator in the definition of weak stability/generalization. We do not need to consider the correlation between $A_{\bw}(S)$ and $\bv'$. As a comparison, the primal generalization needs the much stronger argument-stability. The  reason is that the supremum over $\bw'$ is inside the expectation and $\bv^{(i)}:=\arg\sup_{\bv'}F(A_{\bw}(S^{(i)}),\bv')$ is different for different $i$ ($\bv^{(i)}$ correlates to $A_{\bw}(S^{(i)})$ and $S^{(i)}$ is derived by replacing the $i$-th example in $S$ with $z_i'$). We need to estimate how $\bv^{(i)}$ differs from each other due to the difference among $A_{\bw}(S^{(i)})$. This explains why we need argument-stability and a strong-concavity in Parts (b), (d) for primal generalization. Similarly, the strong PD generalization assumes SC-SC functions.
\end{remark}

\vspace*{-0.060cm}
\section{SGDA: Convex-Concave Case\label{sec:convex}}
\vspace*{-0.060cm}

In this section, we are interested in SGDA for solving minimax optimization problems in the convex-concave case.
Let $\bw_1=0\in\wcal$ and $\bv_1=0\in\vcal$ be the initial point. Let $\text{Proj}_{\wcal}(\cdot)$ and $\text{Proj}_{\vcal}(\cdot)$ denote the projections onto $\wcal$ and $\vcal$, respectively. Let $\{\eta_t\}_t$ be a sequence of positive stepsizes. At each iteration, we randomly draw $i_t$ from the uniform distribution over $[n]:=\{1,2,\ldots,n\}$ and do the update
\begin{equation}\label{gda}
  \begin{cases}
    \bw_{t+1}=\text{Proj}_{\wcal}\big(\bw_t-\eta_t\nabla_{\bw} f(\bw_t,\bv_t;z_{i_t})\big), & \\
    \bv_{t+1}=\text{Proj}_{\vcal}\big(\bv_t+\eta_t\nabla_{\bv} f(\bw_t,\bv_t;z_{i_t})\big). &
  \end{cases}
\end{equation}

\vspace*{-0.02cm}
\subsection{Stability Bounds\label{sec:stab-convex}}
\vspace*{-0.02cm}

In this section, we present the stability bounds for SGDA in the convex-concave case. We consider both the nonsmooth setting and smooth setting. Part (a) and Part (b) establish stability bounds in expectation, while Part (c) and Part (d) give stability bounds with high probability. Part (e) consider the SC-SC case. The proofs are given in Appendix \ref{sec:proof-stab-gda}. 
%


\begin{theorem}\label{lem:stab-gda}
  Assume for all $z$, the function $(\bw,\bv)\mapsto f(\bw,\bv;z)$ is convex-concave. Let the algorithm $A$ be SGDA \eqref{gda} with $t$ iterations. Let $\delta\in(0,1)$.
  \begin{enumerate}[label=(\alph*)]
    \item Assume $\eta_t=\eta$. If Assumption \ref{ass:lipschitz} holds, then $A$ is $4\eta G\big(\sqrt{t}+t/n\big)$-argument-stable in expectation.
    \item If Assumptions \ref{ass:lipschitz}, \ref{ass:smooth} hold, then $A$ is $\epsilon$-argument-stable in expectation, where
    \[
    \!\!\!\!\epsilon \leq \frac{\sqrt{8e(1+t/n)}G}{\sqrt{n}}\exp\Big(2^{-1}L^2\sum_{j=1}^{t}\eta_{j}^2\Big)\Big(\sum_{k=1}^{t}\eta_k^2\Big)^{\frac{1}{2}}.
    \]
    \item Let $\eta_t=\eta$. If Assumption \ref{ass:lipschitz} holds, then $A$ is $\epsilon$-argument-stable with probability at least $1-\delta$, where
        \[\!\!\!\epsilon\leq \sqrt{8e}G\eta\Big(\sqrt{t}+t/n+\log(1/\delta)+\sqrt{2tn^{-1}\log(1/\delta)}\Big).\]
    \item Let $\eta_t=\eta$. If Assumptions \ref{ass:lipschitz}, \ref{ass:smooth} hold, then $A$ is $\epsilon$-argument-stable with probability at least $1-\delta$, where
   \begin{multline*}\epsilon\leq \sqrt{8e}G\eta\exp\big(2^{-1}L^2t\eta^2\big)\times\\
         \Big(1+t/n+\log(1/\delta)+\sqrt{2tn^{-1}\log(1/\delta)}\Big).
   \end{multline*}
\item If $(\bw,\bv)\mapsto f(\bw,\bv;z)$ is $\rho$-SC-SC, Assumption \ref{ass:lipschitz} holds and $\eta_t=1/(\rho t)$, then $A$ is $\epsilon$-argument-stable in expectation, where
    $\epsilon\leq \frac{2\sqrt{2}G}{\rho}\big(\frac{\log(et)}{t}+\frac{1}{n(n-2)}\big)^{\frac{1}{2}}.$
  \end{enumerate}
\end{theorem}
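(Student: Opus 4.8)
The plan is to run SGDA~\eqref{gda} on two neighboring datasets $S$ and $S'$ that differ only in the $i_0$-th example, using a \emph{shared} sequence of sampled indices $(i_t)_t$, to track $\delta_t=\|(\bw_t-\bw_t',\bv_t-\bv_t')\|_2$ (so $\delta_1=0$), and to bound $\ebb_A[\delta_t]$ via a one-step recursion. The key ingredient, which I would establish first and which uses \emph{only} convexity/concavity (no smoothness), is the following contraction property of the gradient-descent-ascent map: for any $\rho$-SC-SC $g$ whose partial gradients are bounded by $G$ and any $\eta>0$, the map $T_\eta(\bw,\bv)=(\bw-\eta\nabla_\bw g(\bw,\bv),\ \bv+\eta\nabla_\bv g(\bw,\bv))$ satisfies $\|T_\eta(\bw,\bv)-T_\eta(\bw',\bv')\|_2^2\le(1-2\eta\rho)\|(\bw-\bw',\bv-\bv')\|_2^2+8\eta^2G^2$. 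This is proved by expanding the square: the cross term is controlled by the ``four-point'' saddle-monotonicity inequality $\langle\bw-\bw',\nabla_\bw g(\bw,\bv)-\nabla_\bw g(\bw',\bv')\rangle-\langle\bv-\bv',\nabla_\bv g(\bw,\bv)-\nabla_\bv g(\bw',\bv')\rangle\ge\rho(\|\bw-\bw'\|_2^2+\|\bv-\bv'\|_2^2)$ (sum the strong first-order convexity inequalities for $\bw\mapsto g(\bw,\bv)$ and $\bw\mapsto g(\bw,\bv')$ and the strong concavity inequalities for $\bv\mapsto g(\bw,\bv)$ and $\bv\mapsto g(\bw',\bv)$ at the four corners; the function values telescope away), while the residual $\eta^2\|\nabla g(\bw,\bv)-\nabla g(\bw',\bv')\|_2^2\le 8\eta^2G^2$ is bounded by Assumption~\ref{ass:lipschitz}; non-expansiveness of the projections onto $\wcal,\vcal$ preserves the inequality across the projection step.

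I would then convert this into a recursion. Conditioning on the history: with probability $1-1/n$ we have $i_t\ne i_0$, both runs apply $T_{\eta_t}$ with the same summand, so $\delta_{t+1}^2\le(1-2\eta_t\rho)\delta_t^2+8\eta_t^2G^2$; with probability $1/n$ the runs use different summands and only $\delta_{t+1}\le\delta_t+2\sqrt2 G\eta_t$ (triangle inequality plus Assumption~\ref{ass:lipschitz}), hence $\delta_{t+1}^2\le\delta_t^2+4\sqrt2 G\eta_t\delta_t+8\eta_t^2G^2$. Averaging over $i_t$ and taking full expectation gives, with $U_t:=\ebb_A[\delta_t^2]$,
\[
U_{t+1}\le\big(1-2\eta_t\rho(1-1/n)\big)U_t+8\eta_t^2G^2+\tfrac{4\sqrt2 G\eta_t}{n}\ebb_A[\delta_t].
\]
Plugging in $\eta_t=1/(\rho t)$ and unrolling, using $\prod_{r=s+1}^t(1-2\eta_r\rho)\le\prod_{r=s+1}^t(1-1/r)=s/t$ to turn the variance terms $8\eta_t^2G^2$ into a harmonic sum $\sum_s s^{-1}\le\log(et)$, yields $U_t=O\big(G^2\log(et)/(\rho^2 t)\big)$ plus the accumulated perturbation term. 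A crude a~priori bound $\ebb_A[\delta_t]=O(G/\rho)$ makes the perturbation term $O(G^2/(n\rho^2))$; re-substituting this first estimate into the recursion — a one-shot bootstrap, equivalently a direct second-moment computation of the telescoped bad-step increments that exploits that distinct ``bad'' steps draw independent indices, so $\ebb_A[\delta_t]$ itself carries a factor $1/n$ once its $1/t$ part is subtracted off — upgrades $1/n$ to $1/n^2$, giving $U_t=O\Big(\tfrac{G^2}{\rho^2}\big(\tfrac{\log(et)}{t}+\tfrac{1}{n(n-2)}\big)\Big)$; Jensen's inequality $\ebb_A[\delta_t]\le\sqrt{U_t}$ and a careful accounting of the constants then deliver $\epsilon\le\tfrac{2\sqrt2 G}{\rho}\big(\tfrac{\log(et)}{t}+\tfrac{1}{n(n-2)}\big)^{1/2}$.

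The main obstacle is two-fold. First, obtaining the clean contraction $1-2\eta_t\rho$ in the ``good'' case \emph{without} smoothness: co-coercivity is unavailable, so one must verify that the residual gradient-difference term is only of order $\eta_t^2$ (hence summable against the contraction, and not destroying the decay in $t$) and that the four-point monotonicity inequality is valid for subgradients. Second, extracting the sharp $1/(n(n-2))$ rather than a naive $1/n$ dependence: the perturbation enters the recursion with the factor $1/n$ but \emph{multiplied by} $\delta_t$, so one has to exploit that $\ebb_A[\delta_t]$ is itself small and carries its own $1/n$ (from the independence of distinct bad steps), all while keeping the schedule $\eta_t=1/(\rho t)$ so the telescoped products stay explicit — the $t=1$ step, where $1-2\eta_1\rho<0$, is harmless because $\delta_1=0$ and is handled separately.
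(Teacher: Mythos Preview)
Your one-step recursion and the saddle-monotonicity lemma are exactly what the paper uses (its Lemma~\ref{lem:nonexpansive}(a) and Lemma~\ref{lem:non-expansive-key}), and the overall architecture --- track $U_t=\ebb_A[\delta_t^2]$, use contraction $(1-2\eta_t\rho)$ on good steps, unroll with $\eta_t=1/(\rho t)$, finish by Jensen --- is the same. The gap is in how you extract the $1/(n(n-2))$ term.

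Your ``one-shot bootstrap'' does not deliver $1/n^2$. From your recursion
\[
U_{t+1}\le\bigl(1-2\eta_t\rho(1-1/n)\bigr)U_t+8\eta_t^2G^2+\tfrac{4\sqrt2 G\eta_t}{n}\,\ebb_A[\delta_t],
\]
a first pass with the crude bound $\ebb_A[\delta_s]=O(G/\rho)$ yields $U_t=O\bigl(G^2/(\rho^2 t)\bigr)+O\bigl(G^2/(n\rho^2)\bigr)$, hence $\ebb_A[\delta_s]=O\bigl(G/(\rho\sqrt{s})+G/(\rho\sqrt{n})\bigr)$. Feeding this back into the unrolled sum (with your product bound $\le s/t$) gives a perturbation contribution of order $G^2/(n\rho^2\sqrt{t})+G^2/(n^{3/2}\rho^2)$, i.e.\ only $1/n^{3/2}$. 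Each further bootstrap halves the remaining exponent gap, so you would need infinitely many iterations to reach $1/n^2$; a single re-substitution does not suffice. The alternative you sketch --- a ``direct second-moment computation of the telescoped bad-step increments'' --- does not work either, because the good steps themselves inject an additive $8\eta_t^2G^2$ into $\delta_t^2$ at every iteration, so $\delta_t$ is not built solely from rare bad-step increments and does not carry an intrinsic $1/n$.

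The paper avoids this entirely by decoupling \emph{before} taking the expectation: in the bad case it applies $(a+b)^2\le(1+p)a^2+(1+1/p)b^2$ to get $\delta_{t+1}^2\le(1+p)\delta_t^2+8(1+1/p)\eta_t^2G^2$, then after averaging over $i_t$ chooses the step-dependent $p=\rho\eta_t(n-2)$. This turns the coefficient of $U_t$ into $1-\rho\eta_t$ exactly and makes the additive term $8\eta_t^2G^2\bigl(1+\tfrac{1}{n(n-2)\rho\eta_t}\bigr)$, so the $1/(n(n-2))$ factor appears immediately and the telescoping $\prod_{k=j+1}^t(1-1/k)=j/t$ finishes the job. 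Your cross term $4\sqrt2 G\eta_t\delta_t$ is the same object before Young's inequality, so the fix is simply to apply Young's inequality pointwise with this choice of $p$ rather than to bootstrap.
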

\begin{remark}
If $t=O(n^2)$, then the stability bounds in a nonsmooth case (Part a) become $O(\eta \sqrt{t})$ and we can still get non-vacuous bounds by taking small step size $\eta=o(t^{-1/2})$. If we choose $\eta_j=1/\sqrt{t}$ for $j\in[t]$, then the stability bound in Part (b) under a further smoothness assumption becomes $O(\sqrt{t}/n+n^{-\frac{1}{2}})$, which matches the existing result for SGD in a convex setting~\citep{hardt2016train}. The high-probability bounds in Part (c) and Part (d) enjoy the same behavior.
\end{remark}
\begin{remark}
The stability bounds of SGDA and GDA were discussed in \citet{farnia2020train} for SC-SC, Lipschitz continuous and smooth problems. We remove the smoothness assumption in Part (e) in the SC-SC case. The stability of GDA was also studied there for convex-concave $f$, which, however, implies non-vanishing generalization bounds growing exponentially with the iteration count~\citep{farnia2020train}. We extend their discussions to SGDA in this convex-concave case, and, as we will show in Theorem \ref{thm:gen}, our stability bounds imply optimal bounds on PD population risks. Furthermore, the existing discussions~\citep{farnia2020train} require the function $f$ to be smooth, while we show that meaningful stability bounds can be achieved in a nonsmooth setting (Parts (a), (c), (e)).
\end{remark}

\begin{remark}
We consider stability bounds under various assumptions on loss functions. We now sketch the technical difference in our analysis.
Let $\delta_t:=\|\bw_t-\bw_t'\|_2^2+\|\bv_t-\bv_t'\|_2^2$, where $(\bw_t,\bv_t),(\bw_t',\bv_t')$ are SGDA iterates for $S$ and $S'$ differing only by the last element.  For convex-concave and nonsmooth problems, we show $\delta_{t+1}=\delta_t+O(\eta_t^2)$ if $i_t\neq n$.  For convex-concave and smooth problems, we show $\delta_{t+1}=(1+O(\eta_t^2))\delta_t$ if $i_t\neq n$. For $\rho$-SC-SC and nonsmooth problems, we show $\delta_{t+1}=(1-2\rho\eta_t)\delta_t+O(\eta_t^2)$ if $i_t\neq n$. For the above cases, we first control $\delta_{t+1}$ and then take expectation w.r.t. $i_t$. A key point to tackle \emph{nonsmooth} problems is to consider the evolution $\delta_t$ instead of  $\|\bw_t-\bw_t'\|_2+\|\bv_t-\bv_t'\|_2$, which is able to yield nontrivial bounds by making $\sum_{t}\eta_t^2=o(1)$ with sufficiently small $\eta_t$.
\end{remark}

\vspace*{-0.02cm}
\subsection{Population Risks}
\vspace*{-0.02cm}

We now use stability bounds in Theorem \ref{lem:stab-gda} to develop error bounds of SGDA which outputs an average of iterates
\begin{equation}\label{bar-w-v}
\bar{\bw}_T=\frac{\sum_{t=1}^{T}\eta_t\bw_t}{\sum_{t=1}^T\eta_t}\ \text{ and }\ \bar{\bv}_T=\frac{\sum_{t=1}^{T}\eta_t\bv_t}{\sum_{t=1}^T\eta_t}.
\end{equation}
The underlying reason to introduce the average operator is to simplify the optimization error analysis~\citep{nemirovski2009robust}.
Indeed, our stability and generalization analysis applies to any individual iterates. As a comparison, the optimization error analysis
for the last iterate is much more difficult than that for the averaged iterate.
We use the notation $B\asymp \widetilde{B}$ if there exist constants $c_1,c_2>0$ such that $c_1\widetilde{B}<B\leq c_2\widetilde{B}$. The following theorem to be proved in Appendix \ref{sec:proof-gen-sgda} gives weak PD population risk bounds.

\begin{theorem}
[Weak PD risk]\label{thm:gen}
  Let $\{\bw_t,\bv_t\}$ be produced by \eqref{gda}. Assume for all $z$, the function $(\bw,\bv)\mapsto f(\bw,\bv;z)$ is convex-concave. Let $A$ be defined by $A_{\bw}(S)=\bar{\bw}_T$ and $A_{\bv}(S)=\bar{\bv}_T$ for $(\bar{\bw}_T,\bar{\bv}_T)$ in \eqref{bar-w-v}.
  Assume $\sup_{\bw\in\wcal}\|\bw\|_2\leq B_W$ and $\sup_{\bv\in\vcal}\|\bv\|_2\leq B_V$.
  \begin{enumerate}[label=(\alph*)]
    \item If  $\eta_t=\eta$ and Assumption \ref{ass:lipschitz} holds, then
    \begin{multline}\label{gen-a}
    \triangle^w(\bar{\bw}_T,\bar{\bv}_T)\leq 4\sqrt{2}\eta G^2\Big(\sqrt{T}+\frac{T}{n}\Big)+\eta G^2\\ + \frac{B_W^2+B_V^2}{2\eta T}+\frac{G(B_W+B_V)}{\sqrt{T}}.
    \end{multline}
    If we choose $T\asymp n^2$ and $\eta\asymp T^{-3/4}$, then we get $\triangle^w(\bar{\bw}_T,\bar{\bv}_T)=O(n^{-1/2})$.
    \item If $\eta_t=\eta$ and Assumptions \ref{ass:lipschitz}, \ref{ass:smooth} hold, then
    \begin{multline}\label{gen-c}
    \!\!\!\!\!\!\!\!\!\!\triangle^w(\bar{\bw}_T,\bar{\bv}_T)\leq \frac{4\sqrt{e(T+T^2/n)}G^2\eta\exp\big(LT\eta^2/2\big)}{\sqrt{n}}\\
    +\eta G^2 + \frac{B_W^2+B_V^2}{2\eta T}+\frac{G(B_W+B_V)}{\sqrt{T}}.
    \end{multline}
    We can choose $T\asymp n$ and $\eta\asymp T^{-1/2}$ to derive $\triangle^w(\bar{\bw}_T,\bar{\bv}_T)=O(n^{-1/2})$.
    \item If $(\bw,\bv)\mapsto f(\bw,\bv;z)$ is $\rho$-SC-SC ($\rho>0$), Assumption \ref{ass:lipschitz} holds, $\eta_t=1/(\rho t)$ and $T\asymp n^2$, then
    \[
    \triangle^w(\bar{\bw}_T,\bar{\bv}_T)=O(\sqrt{\log n}/(n\rho)).
    \]
    \item If $(\bw,\bv)\mapsto f(\bw,\bv;z)$ is $\rho$-SC-SC ($\rho>0$), Assumptions \ref{ass:lipschitz}, \ref{ass:smooth} hold, $\eta_t=1/(\rho (t+t_0))$ with $t_0\geq L^2/\rho^2$ and $T\asymp n$, then $\triangle^w(\bar{\bw}_T,\bar{\bv}_T)=O(\log(n)/(n\rho))$.
  \end{enumerate}
\end{theorem}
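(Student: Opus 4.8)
The plan is to bound the weak PD population risk through the decomposition \eqref{gen-est-opt},
\[
\triangle^w(\bar{\bw}_T,\bar{\bv}_T)=\big(\triangle^w(\bar{\bw}_T,\bar{\bv}_T)-\triangle^w_S(\bar{\bw}_T,\bar{\bv}_T)\big)+\triangle^w_S(\bar{\bw}_T,\bar{\bv}_T),
\]
treating the first (generalization) term with algorithmic stability and the second (optimization/empirical) term with a standard convex-concave SGDA analysis, and then to optimize $T$ and $\eta$. For the generalization term I would invoke Theorem~\ref{thm:stab-gen}(a): it suffices to control the weak-stability of $A$. Under Assumption~\ref{ass:lipschitz}, any $\epsilon$-argument-stable algorithm is $\sqrt2\,G\epsilon$-weakly-stable, since $|f(A_{\bw}(S),\bv';z)-f(A_{\bw}(S'),\bv';z)|\le G\|A_{\bw}(S)-A_{\bw}(S')\|_2$ and similarly in $\bv$, and the Euclidean norm of the displacement pair dominates the sum of the two norms up to $\sqrt2$. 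Feeding in Theorem~\ref{lem:stab-gda} then gives the generalization term $4\sqrt2\,\eta G^2(\sqrt T+T/n)$ in Part~(a), the bound of Theorem~\ref{lem:stab-gda}(b) with $\eta_t=\eta$ (carrying an $\exp(\tfrac12 L^2T\eta^2)$ factor) in Part~(b), and $O\big(G^2\sqrt{\log n}/(n\rho)\big)$ in Part~(c) via Theorem~\ref{lem:stab-gda}(e) with $t=T\asymp n^2$.

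For the optimization term I would use the convexity-concavity of each $f(\cdot,\cdot;z)$: for any fixed $\bw'\in\wcal$, $\bv'\in\vcal$,
\[
\sum_{t=1}^T\eta_t\big(F_S(\bw_t,\bv')-F_S(\bw',\bv_t)\big)\le\sum_{t=1}^T\eta_t\big(\langle\bw_t-\bw',\nabla_{\bw}F_S(\bw_t,\bv_t)\rangle+\langle\bv'-\bv_t,\nabla_{\bv}F_S(\bw_t,\bv_t)\rangle\big).
\]
I would then expand $\|\bw_{t+1}-\bw'\|_2^2$ and $\|\bv_{t+1}-\bv'\|_2^2$ from the update \eqref{gda}, use nonexpansiveness of the projections, take the conditional expectation over $i_t$ — legitimate precisely because $\bw',\bv'$ are \emph{fixed}, which is the whole point of having $\sup$/$\inf$ outside the expectation in the weak PD risk and is what lets us avoid any coupling argument — telescope, bound the stochastic gradient norms by $G$, and use $\bw_1=\bv_1=0$. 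Dividing by $\sum_t\eta_t$, applying Jensen's inequality to pass to the averages \eqref{bar-w-v}, and finally moving $\sup_{\bv'}$, $\inf_{\bw'}$ outside yields $\triangle^w_S(\bar{\bw}_T,\bar{\bv}_T)=O\big((B_W^2+B_V^2)/(\eta T)+\eta G^2+G(B_W+B_V)/\sqrt T\big)$ in the constant-step cases (Parts~(a),(b)); in the SC-SC cases the extra $\frac{\rho}{2}\|\bw_t-\bw'\|_2^2$ and $\frac{\rho}{2}\|\bv_t-\bv'\|_2^2$ terms make the distance contributions telescope to something nonpositive when $\eta_t=1/(\rho t)$, leaving an optimization error $O(G^2\log T/(\rho T))$ with the weighted averaging \eqref{bar-w-v}.

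Combining the two pieces and optimizing then gives each claim. For Part~(a), $T\asymp n^2$ and $\eta\asymp T^{-3/4}$ make $\eta\sqrt T$, $\eta T/n$ and $(\eta T)^{-1}$ all $\Theta(n^{-1/2})$. For Part~(b), the factor $\exp(\tfrac12 L^2\sum_j\eta_j^2)$ is harmless only when $\sum_j\eta_j^2=O(1)$, which forces $T\asymp n$, $\eta\asymp T^{-1/2}$, again $O(n^{-1/2})$. For Part~(c), with $T\asymp n^2$ the optimization error $O(G^2\log n/(\rho n^2))$ is dominated by the stability term, giving $O(\sqrt{\log n}/(n\rho))$. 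For Part~(d) I would first establish a sharpened stability bound in the smooth SC-SC setting: writing $\delta_t:=\|\bw_t-\bw_t'\|_2^2+\|\bv_t-\bv_t'\|_2^2$ for the two coupled SGDA trajectories, combining $L$-smoothness (co-coercivity of the gradient operator) with the $\rho$-strong monotonicity gives $\ebb[\delta_{t+1}]\le(1-c\rho\eta_t)\ebb[\delta_t]$ when $i_t\neq n$ and a perturbation of size $O(\eta_t G)$ with probability $1/n$, hence $\ebb[\delta_{t+1}]\le(1-c\rho\eta_t)\ebb[\delta_t]+O(\eta_t^2G^2/n)$; with $\eta_t=1/(\rho(t+t_0))$ and $t_0\ge L^2/\rho^2$ this solves to $\ebb[\delta_t]=O(G^2/(n^2\rho^2))$, i.e. $O(G/(n\rho))$-argument-stability, so the generalization term is $O(G^2/(n\rho))$, balanced by the optimization error $O(G^2\log n/(\rho n))$ when $T\asymp n$.

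The main obstacle I anticipate is exactly Part~(d): it is the one case not delivered by the stated stability theorem, and obtaining the $O(1/(n\rho))$ rate requires carefully arranging that the per-step contraction $1-c\rho\eta_t$ dominates the accumulated $O(\eta_t^2G^2/n)$ noise — the requirement $t_0\ge L^2/\rho^2$ is precisely what keeps $\rho\eta_t$ small enough for the smoothness-plus-strong-convexity contraction to hold at every step and for $(1-c\rho\eta_t)$ to genuinely contract. A lesser but still delicate point is the bookkeeping behind the exponential in Part~(b): because stability degrades like $\exp(\tfrac12 L^2\sum_j\eta_j^2)$, the generalization/optimization trade-off can only be closed in the regime $\sum_j\eta_j^2=O(1)$, which is what dictates $T\asymp n$.
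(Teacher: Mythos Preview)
Your proposal is correct and follows essentially the same route as the paper for Parts~(a)--(c): the same decomposition \eqref{gen-est-opt}, the same conversion of argument stability to weak stability via Assumption~\ref{ass:lipschitz}, the argument-stability bounds of Theorem~\ref{lem:stab-gda}, and the standard convex-concave SGDA optimization analysis (the paper packages the latter as Lemma~\ref{lem:opt} and Lemma~\ref{lem:opt-sc}).

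The one genuine difference is Part~(d). The paper does \emph{not} derive a fresh stability recursion here; it simply observes that $t_0\ge L^2/\rho^2$ forces $\eta_t\le\rho/L^2$ and then invokes the $O(1/(\rho n))$ argument-stability result of \citet{farnia2020train} (noting, in a footnote, that decreasing step sizes only help). Your plan to derive it from scratch is reasonable, but the recursion you wrote, $\ebb[\delta_{t+1}]\le(1-c\rho\eta_t)\ebb[\delta_t]+O(\eta_t^2G^2/n)$, is not quite right: when $i_t=n$ the squared distance picks up a cross term $O(\eta_t G\sqrt{\delta_t}/n)$, and with your noise term and $\eta_t=1/(\rho(t+t_0))$ the recursion solves only to $\ebb[\delta_T]=O\big(G^2\log T/(n\rho^2T)\big)$, not the $O(G^2/(n^2\rho^2))$ you claim. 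To reach the clean $O(1/(\rho n))$ you need the $(1+p)a^2+(1+1/p)b^2$ splitting with $p\asymp n\rho\eta_t$, exactly as in the proof of Theorem~\ref{lem:stab-gda}(e), which upgrades the noise to $O(\eta_t G^2/(n^2\rho))$. Either version, however, still lands inside the stated $O(\log n/(n\rho))$ bound once combined with the SC-SC optimization error $O(\rho t_0/T+\log T/(\rho T))$, so your proof goes through.
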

\begin{remark}
We first compare our bounds with the related work in a convex-concave setting. Weak PD population risk bounds were established for PPM under Assumptions \ref{ass:lipschitz}, \ref{ass:smooth}~\citep{farnia2020train}, which updates $(\bw_{t+1}^{\text{PPM}},\bv_{t+1}^{\text{PPM}})$ as the saddle point of the following minimax problem
\[
\min_{\bw\in\wcal}\max_{\bv\in\vcal}F_S(\bw,\bv)+\frac{1}{2\eta_t}\|\bw-\bw_{t}^{\text{PPM}}\|_2^2+\frac{1}{2\eta_t}\|\bv-\bv_{t}^{\text{PPM}}\|_2^2.
\]
In particular, they developed population risk bounds  $O(1/\sqrt{n})$ by taking $T\asymp \sqrt{n}$ for PPM. However, the implementation of PPM requires to find the exact saddle point at each iteration, which is often computationally expensive. As a comparison, Part (b) shows the minimax optimal population risk bounds $O(1/\sqrt{n})$ for SGDA with $O(n)$ iterations.  Weak PD population risk bounds $\ocal(1/\sqrt{n})$ were established for R-ESP~\citep{zhang2020generalization} without a smoothness assumption, which, however, ignore the interplay between generalization and optimization. In this setting, we show SGDA achieves the same population risk bounds $O(1/\sqrt{n})$ by taking $\eta\asymp T^{-3/4}$ and $T\asymp n^2$ in Part (a). We now consider the SC-SC setting. Weak PD risk bounds $O(1/(n\rho))$ were established for ESP ~\citep{zhang2020generalization}. Since \citet{farnia2020train} did not present an explicit risk bound, we use their stability analysis to give an explicit risk bound $O(\log(n)/(n\rho))$ in the smooth case (Part (d)). As a comparison, we establish the same population risk bounds for SGDA within a logarithmic factor by taking $\eta_t=1/(\rho t)$ and $T\asymp n^2$ without the smoothness assumption (Part (c)). 
\end{remark}
We further develop bounds on primal population risks under a strong concavity assumption on $\bv\mapsto F(\bw,\bv)$. Primal risk bounds measure the performance of  primal variables, which are of real interest in some learning problems, e.g., AUC maximization and robust optimization. We consider both bounds in expectation and bounds with high probability. Let $(\bw^*,\bv^*)$ be a saddle point of $F$, i.e., for any $\bw \in \wcal$ and $\bv \in \vcal$, there holds $F(\bw^*,\bv) \leq F(\bw,\bv) \leq F(\bw,\bv^*)$.  
\begin{theorem}
[{Excess primal risk}]\label{thm:gen-strong}
  Let $\{\bw_t,\bv_t\}$ be produced by \eqref{gda} with $\eta_t=\eta$. Assume for all $z$, the function $(\bw,\bv)\mapsto f(\bw,\bv;z)$ is convex-concave and the function $\bv\mapsto F(\bw,\bv)$ is $\rho$-strongly-concave.
  Assume $\sup_{\bw\in\wcal}\|\bw\|_2\leq B_W$ and $\sup_{\bv\in\vcal}\|\bv\|_2\leq B_V$. Let the algorithm $A$ be defined by $A_{\bw}(S)=\bar{\bw}_T$ and $A_{\bv}(S)=\bar{\bv}_T$ for $(\bar{\bw}_T,\bar{\bv}_T)$ in \eqref{bar-w-v}. If Assumptions \ref{ass:lipschitz}, \ref{ass:smooth} hold, then
    \begin{multline*}
    \!\!\!\!\ebb\big[R(\bar{\bw}_T)\big]-\inf_{\bw}R(\bw)\leq\eta G^2+\frac{B_W^2\!+\!B_V^2}{2\eta T}+\frac{G(B_W\!+\!B_V)}{\sqrt{T}}\\ +\frac{(1+L/\rho)\sqrt{32e(T+T^2/n)}G^2\eta\exp\big(L^2T\eta^2/2\big)}{\sqrt{n}}.
    \end{multline*}
    In particular, if we choose $T\asymp n$ and $\eta\asymp T^{-1/2}$ then
    \begin{equation}\label{gen-strong-b}
        \ebb\big[R(\bar{\bw}_T)\big]-\inf_{\bw\in\wcal}R(\bw)=O((L/\rho)n^{-1/2}).
    \end{equation}
    Furthermore, for any $\delta\in(0,1)$ we can choose $T\asymp n$ and $\eta\asymp T^{-1/2}$ to show with probability at least $1-\delta$
    \begin{equation}\label{primal-hp}
        \!\!\!\!R(\bar{\bw}_T)\!-\!R(\bw^*)=O\Big((L/\rho)n^{-\frac{1}{2}}\log n\log^2(1/\delta)\Big).
    \end{equation}
\end{theorem}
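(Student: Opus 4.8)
The plan is to split the excess primal population risk into a primal \emph{generalization} term and an \emph{optimization} term, the subtle point being the coupling between the averaged primal iterate $\bar{\bw}_T$ and its population best response. Let $\bw^*$ be the primal component of a saddle point of $F$; the saddle property gives $\inf_\bw R(\bw)=R(\bw^*)=\sup_\bv F(\bw^*,\bv)=F(\bw^*,\bv^*)$, so I would start from
\[
\ebb\big[R(\bar{\bw}_T)\big]-\inf_\bw R(\bw)=\underbrace{\ebb\big[R(\bar{\bw}_T)-R_S(\bar{\bw}_T)\big]}_{(\mathrm{I})}+\Big(\ebb\big[R_S(\bar{\bw}_T)\big]-R(\bw^*)\Big).
\]
Since $\bv\mapsto F(\bw,\bv)$ is $\rho$-strongly concave and Assumptions~\ref{ass:lipschitz},~\ref{ass:smooth} hold, Theorem~\ref{thm:stab-gen}(b) bounds $(\mathrm{I})$ by $(1+L/\rho)G\epsilon$, where $\epsilon$ is the argument stability of SGDA supplied by Theorem~\ref{lem:stab-gda}(b) with $\eta_j\equiv\eta$ and $t=T$, namely $\epsilon\le \sqrt{8e(T+T^2/n)}\,G\eta\exp(L^2T\eta^2/2)/\sqrt n$.

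For the second term I would use that $R_S(\bar{\bw}_T)=\sup_\bv F_S(\bar{\bw}_T,\bv)$, while the classical convex--concave SGDA analysis controls the weak duality gap $\mathrm{Gap}_S:=\sup_\bv F_S(\bar{\bw}_T,\bv)-\inf_\bw F_S(\bw,\bar{\bv}_T)$ with $\bar{\bv}_T$ the coupled dual average. Bounding $\inf_\bw F_S(\bw,\bar{\bv}_T)\le F_S(\bw^*,\bar{\bv}_T)$ at the \emph{fixed} point $\bw^*$ and then adding and subtracting $F(\bw^*,\bar{\bv}_T)$,
\[
\ebb[R_S(\bar{\bw}_T)]-R(\bw^*)\le \ebb[\mathrm{Gap}_S]+\ebb\big[F_S(\bw^*,\bar{\bv}_T)-F(\bw^*,\bar{\bv}_T)\big]+\Big(\ebb[F(\bw^*,\bar{\bv}_T)]-R(\bw^*)\Big).
\]
The last bracket is $\le 0$ since $F(\bw^*,\cdot)\le\sup_\bv F(\bw^*,\bv)=R(\bw^*)$. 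The middle term is the generalization gap of the fixed-$\bw^*$ loss $\bv\mapsto f(\bw^*,\bv;z)$ at the data-dependent $\bar{\bv}_T$; because only $\bar{\bv}_T$ varies with $S$, the standard stability symmetrization (pair $S$ with a ghost sample, swap the $i$-th coordinates) bounds it by $G$ times the argument stability of the dual iterate, which is dominated by the joint bound $\epsilon$. For $\ebb[\mathrm{Gap}_S]$ I would run the textbook argument: convexity--concavity and Jensen reduce it to $\tfrac1T\sup_{\bw,\bv}\sum_t\big(\langle\nabla_\bw F_S(\bw_t,\bv_t),\bw_t-\bw\rangle+\langle\nabla_\bv F_S(\bw_t,\bv_t),\bv-\bv_t\rangle\big)$; writing each population gradient as the stochastic gradient minus a martingale-difference noise $\xi_t$, the stochastic part telescopes through the projection/SGD lemma to $\tfrac{B_W^2+B_V^2}{2\eta T}+\eta G^2$ (using $\|\nabla f\|_2\le G$ and $\bw_1=\bv_1=0$), the inner products of $\xi_t$ with the iterates vanish in expectation by the tower rule, and the comparator-dependent noise contributes $\tfrac{B_W+B_V}{T}\,\ebb\|\sum_t\xi_t\|\le \tfrac{G(B_W+B_V)}{\sqrt T}$ via $\ebb\|\sum_t\xi_t\|\le(\sum_t\ebb\|\xi_t\|_2^2)^{1/2}\le G\sqrt T$. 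This yields $\ebb[\mathrm{Gap}_S]\le \eta G^2+\tfrac{B_W^2+B_V^2}{2\eta T}+\tfrac{G(B_W+B_V)}{\sqrt T}$.

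Assembling the three pieces gives $\ebb[R(\bar{\bw}_T)]-\inf_\bw R(\bw)\le \eta G^2+\tfrac{B_W^2+B_V^2}{2\eta T}+\tfrac{G(B_W+B_V)}{\sqrt T}+2(1+L/\rho)G\epsilon$, which is the stated inequality after inserting $\epsilon$ (note $\sqrt{32e}=2\sqrt{8e}$); taking $T\asymp n$, $\eta\asymp T^{-1/2}$ makes each of the four terms $O((1+L/\rho)n^{-1/2})$, proving \eqref{gen-strong-b}. For \eqref{primal-hp} I would repeat the same split with each estimate upgraded to a tail bound: $(\mathrm{I})$ via the high-probability primal generalization bound Theorem~\ref{thm:stab-gen}(d), fed with the uniform-stability bound that follows from Theorem~\ref{lem:stab-gda}(d) and Assumption~\ref{ass:lipschitz} (of order $Gn^{-1/2}\log(1/\delta)$ under the chosen $T,\eta$, invoked on the high-probability event where that stability bound holds); the fixed-$\bw^*$ generalization gap via the plain high-probability bound Theorem~\ref{thm:stab-gen}(e); and $\mathrm{Gap}_S$ by replacing the moment bounds on $\sum_t\xi_t$ and on the iterate--noise inner products with a vector-valued Azuma / Hoeffding--Azuma inequality, which costs a factor $\sqrt{\log(1/\delta)}$. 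A union bound over these $O(1)$ events then yields \eqref{primal-hp}: the $\log n$ and one $\log(1/\delta)$ come from Theorem~\ref{thm:stab-gen}(d)/(e), and the second $\log(1/\delta)$ is carried inside the high-probability stability bound.

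The step I expect to be the main obstacle is $(\mathrm{I})$ --- and, in the high-probability version, its tail --- because $R(\bar{\bw}_T)-R_S(\bar{\bw}_T)$ is \emph{not} a plain empirical-process quantity: the maximizer $\bv^\star_S=\arg\sup_\bv F(\bar{\bw}_T,\bv)$ at which $R(\bar{\bw}_T)$ is attained depends on $S$ through $\bar{\bw}_T$, so one cannot symmetrize directly. The resolution --- already built into Theorem~\ref{thm:stab-gen}(b)/(d) --- is that $L$-smoothness together with $\rho$-strong concavity makes $\bw\mapsto\arg\sup_\bv F(\bw,\bv)$ an $(L/\rho)$-Lipschitz map, so $(\bar{\bw}_T,\bv^\star_S)$ inherits the argument stability of SGDA up to the factor $1+L/\rho$; this is exactly why the strong-concavity hypothesis is needed. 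The remaining delicate point is purely technical: turning the in-expectation stability and optimization estimates into tail bounds while losing only the $\log n\,\log^2(1/\delta)$ overhead.
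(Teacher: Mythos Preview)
Your proposal is correct and follows essentially the same approach as the paper: the same four-term decomposition $R(\bar{\bw}_T)-R(\bw^*)=(R-R_S)+(R_S-F_S(\bw^*,\bar{\bv}_T))+(F_S-F)(\bw^*,\bar{\bv}_T)+(F(\bw^*,\bar{\bv}_T)-F(\bw^*,\bv^*))$, with term~I handled by Theorem~\ref{thm:stab-gen}(b) and the argument-stability bound of Theorem~\ref{lem:stab-gda}(b), term~II by the strong duality-gap estimate (Lemma~\ref{lem:opt} in the paper, which you rederive), term~III by a stability symmetrization at fixed $\bw^*$, and term~IV dropped by the saddle property; the high-probability argument likewise matches (Theorem~\ref{thm:stab-gen}(d) plus Theorem~\ref{lem:stab-gda}(d) for term~I, and martingale concentration for the gap). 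One trivial step you use implicitly but should state is that the argument stability of the \emph{averaged} output $(\bar{\bw}_T,\bar{\bv}_T)$ is bounded by the same $\epsilon$ via convexity of the norm, since Theorem~\ref{lem:stab-gda}(b) is stated for the individual iterates.
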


Theorem \ref{thm:gen-strong} is proved in Appendix \ref{sec:proof-gen-sgda}.
In Theorem \ref{thm:gen-hp}, we also develop high-probability bounds of order ${O}(n^{-\frac{1}{2}}\log n)$ on plain generalization errors $|F(\bar{\bw}_T,\bar{\bv}_T)-F(\bw^*,\bv^*)|$. 


\vspace*{-0.060cm}
\section{Nonconvex-Nonconcave Objectives\label{sec:nonconvex}}
\vspace*{-0.060cm}

In this section, we extend our analysis to nonconvex-nonconcave  minimax learning problems.

\vspace*{-0.02cm}
\subsection{Stability and Generalization of SGDA}
\vspace*{-0.02cm}

We first study the generalization bounds of SGDA for WC-WC problems. The proof is given in Appendix \ref{sec:sgda-wcwc}. In Appendix \ref{sec:sgda-hp-nonconvex}, we further give high-probability bounds.

\begin{theorem}[Weak generalization bound]\label{thm:stability-wcwc}
Let $\{\bw_t,\bv_t\}$ be produced by \eqref{gda} with $T$ iterations.
Assume for all $z$, the function $(\bw,\bv)\mapsto f(\bw,\bv;z)$ is $\rho$-WC-WC and $|f(\cdot, \cdot; z)| \leq 1$. If Assumption \ref{ass:lipschitz} holds and  $\eta_t=c/t$, then the weak PD generalization error of SGDA is bounded by
  \begin{align*}
  \ocal\Big(\Big(1+\frac{\sqrt{T}}{n}\Big)T^{c\rho}\Big)^{\frac{2}{2c\rho+3}}\Big(\frac{1}{n}\Big)^{\frac{2c\rho+1}{2c\rho+3}}.
  \end{align*}
\end{theorem}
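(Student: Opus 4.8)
The plan is to bound the weak PD generalization error by combining the weak-stability bound from Theorem~\ref{thm:stab-gen}(a) with an explicit stability analysis of SGDA for $\rho$-WC-WC objectives, and then to optimize a free parameter to obtain the stated rate. By Theorem~\ref{thm:stab-gen}(a), it suffices to show that SGDA with stepsize $\eta_t = c/t$ is $\epsilon$-weakly-stable with $\epsilon$ of the claimed order; since $f$ is $G$-Lipschitz, weak stability is controlled by argument stability, so I would in fact track the quantity $\delta_t := \|\bw_t - \bw_t'\|_2^2 + \|\bv_t - \bv_t'\|_2^2$ for SGDA iterates run on neighboring datasets $S, S'$ differing in the last coordinate, exactly as in the last Remark of Section~\ref{sec:stab-convex}.

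The key step is the one-step recursion for $\delta_t$. When $i_t \neq n$ (probability $1 - 1/n$), both runs use the same sample $z_{i_t}$; for a $\rho$-WC-WC function the map is convex-concave after adding $\frac{\rho}{2}(\|\bw\|_2^2 - \|\bv\|_2^2)$, so a co-coercivity/nonexpansiveness-type argument for the (shifted) gradient update gives $\delta_{t+1} \leq (1 + c_1 \rho \eta_t)\delta_t + c_2 \eta_t^2 \sqrt{\delta_t}$ or, after the standard trick of bounding $\sqrt{\delta_t}$ by $1$ times $G$-type constants, a bound of the form $\delta_{t+1} \leq (1 + O(\rho\eta_t))\delta_t + O(\eta_t^2)$; the factor $(1+O(\rho\eta_t))$ is the price of weak-convexity rather than genuine convexity. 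When $i_t = n$ (probability $1/n$), the two gradients differ, and using $G$-Lipschitzness plus nonexpansiveness of projection gives $\delta_{t+1} \leq (1 + O(\rho\eta_t))(\sqrt{\delta_t} + 2\eta_t G)^2$. Taking expectation over $i_t$ and using $\eta_t = c/t$, one gets a recursion $\mathbb{E}[\delta_{t+1}] \leq (1 + O(\rho c/t))\mathbb{E}[\delta_t] + O((\sqrt{\mathbb{E}[\delta_t]}G/n)(c/t)) + O(c^2 G^2/t^2)$. Following the now-standard device of Hardt–Ross–Recht, I would not run this recursion from $t=1$ but instead condition on the iteration $t_0$ at which the two runs first pick index $n$: for $t \leq t_0$ one has $\delta_t = 0$, and for $t > t_0$ one unrolls the recursion, using $\prod_{j=t_0}^{T}(1 + O(\rho c/j)) \leq (T/t_0)^{O(\rho c)}$, to obtain $\mathbb{E}[\delta_T \mid t_0] = O\big((T/t_0)^{c\rho} \cdot \frac{1}{n} \cdot \frac{1}{t_0}\big)$ up to the additive $\sum \eta_j^2$ term, which is lower order. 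Averaging over $t_0$ (which is essentially uniform on $[T]$ up to the $1/n$ hitting probability) and taking square roots gives $\epsilon = \sqrt{\mathbb{E}[\delta_T]} = O\big(T^{c\rho}/(n \, t_0^{(c\rho+1)/2})\big)$ for the unoptimized version.

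The final step is the optimization over the conditioning threshold. Rather than conditioning, the cleaner route — and the one that produces the exact exponents in the statement — is to split the analysis at a threshold $t^\ast$: for the first $t^\ast$ iterations use the crude bound $\delta_t = O(1)$ coming from $|f| \leq 1$ and boundedness (so the contribution to weak stability is $O(t^\ast/n)$, since with probability $\approx t^\ast/n$ the index $n$ is hit early), while for $t > t^\ast$ use the recursion initialized with $\delta_{t^\ast} = O(1)$ to get $\mathbb{E}[\delta_T] = O\big((T/t^\ast)^{2c\rho}\big)$ roughly, contributing $O\big((1 + \sqrt{T}/n)(T/t^\ast)^{c\rho}\big)$ after folding in the argument-to-function-value conversion and the extra $\sqrt{T}/n$ factor that appears because the average iterate $\bar{\bw}_T$ over $T$ steps aggregates stability. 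Balancing the two contributions $t^\ast/n$ and $(1 + \sqrt{T}/n)(T/t^\ast)^{c\rho}$ over $t^\ast$ yields $t^\ast \asymp \big((1 + \sqrt{T}/n)\,T^{c\rho}\, n\big)^{1/(c\rho+1)}$ and a generalization bound of order $\big((1 + \sqrt{T}/n) T^{c\rho}\big)^{2/(2c\rho+3)} (1/n)^{(2c\rho+1)/(2c\rho+3)}$, matching the claim. The main obstacle I expect is getting the one-step WC-WC recursion constant right: unlike the convex-concave case where the expansiveness factor is $1$ (smooth) or there is no expansion (nonsmooth), here the $\rho$-weak-convexity forces an expansion factor $(1 + \Theta(\rho\eta_t))$ even in the nonsmooth case, and one must verify that a co-coercivity inequality for the $\rho$-shifted operator still holds under only Lipschitz continuity (no smoothness) — this is exactly why $c\rho$ rather than $cL$ appears in the exponent, and care is needed to ensure the cross term $\eta_t^2\sqrt{\delta_t}$ is absorbed correctly without introducing a smoothness dependence.
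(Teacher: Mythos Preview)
Your overall strategy---the Hardt--Recht--Singer conditioning trick combined with a one-step recursion that has expansion factor $(1+O(\rho\eta_t))$, then optimizing over the threshold---is exactly what the paper does. However, your ``cleaner route'' contains a genuine error that would not produce the stated exponents. You propose to initialize the recursion at $t^\ast$ with $\delta_{t^\ast}=O(1)$; that makes the homogeneous part dominate and gives $\sqrt{\delta_T}=O\big((T/t^\ast)^{c\rho}\big)$. Balancing $t^\ast/n$ against $(T/t^\ast)^{c\rho}$ then yields $t^\ast\asymp(nT^{c\rho})^{1/(c\rho+1)}$ and a final bound $O\big(T^{c\rho/(c\rho+1)}n^{-c\rho/(c\rho+1)}\big)$, which is strictly weaker and does not match the $2/(2c\rho+3)$ exponent you claim at the end. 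The paper instead conditions on the event $\Delta_{t_0}=0$ (Lemma~\ref{lem:stab-gen-nonconvex}), so the recursion starts from \emph{zero} and the entire bound comes from the inhomogeneous accumulation $\sum_{t>t_0}O(\eta_t^2)\prod_{k>t}(1+2c\rho/k+p/n)$. With $\eta_t=c/t$ and the Young-inequality parameter $p=n/T$, this sum is $O\big((1+T/n^2)(T/t_0)^{2c\rho}\cdot t_0^{-1}\big)$; the extra $t_0^{-1}$ is what produces, after square-rooting, the $t_0^{-1/2}$ factor needed to get $2c\rho+3$ rather than $c\rho+1$ in the balancing.

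Two smaller corrections. First, the factor $(1+\sqrt{T}/n)$ has nothing to do with averaging iterates---the theorem is stated for $(\bw_T,\bv_T)$, not $(\bar\bw_T,\bar\bv_T)$---it is exactly the square root of $(1+T/n^2)$, which enters via the choice $p=n/T$ in the $i_t=n$ case of the recursion. Second, your worry about needing a co-coercivity inequality for the $\rho$-shifted operator is unnecessary: the one-step bound when $i_t\neq n$ follows directly from the definitional WC-WC monotonicity inequality $\langle \bu-\bu',\, (\nabla_\bw f,-\nabla_\bv f)(\bu)-(\nabla_\bw f,-\nabla_\bv f)(\bu')\rangle\ge -\rho\|\bu-\bu'\|_2^2$ together with the crude Lipschitz bound $\|\nabla f\|_2\le G$, yielding $\delta_{t+1}\le(1+2\rho\eta_t)\delta_t+8G^2\eta_t^2$; no smoothness or co-coercivity is invoked.
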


\begin{remark}

If $T = \ocal(n^2)$, our weak PD generalization error bound is of the order $\ocal\big(n^{-\frac{2c\rho+1}{2c\rho+3}}T^{\frac{2c\rho}{2c\rho+3}}\big)$. This is the first generalization bound of SGDA for nonsmooth and nonconvex-nonconcave objectives. \citet{farnia2020train} also studied generalization under nonconvex-nonconcave setting but required the objectives to be smooth, which is relaxed to a milder WC-WC assumption here. Our analysis readily applies to stochastic gradient descent (SGD) with nonsmooth weakly-convex functions, which has not been studied in the literature. 

\end{remark}

We further consider a variant of weak-convexity-weak-concavity.  The proof is given in Appendix \ref{sec:wcwc-diminishing}. 
\begin{theorem}[Weak generalization bound]\label{thm:wcwc-diminishing}
  Let $\{\bw_t,\bv_t\}$ be produced by \eqref{gda} with $T$ iterations. Let Assumptions \ref{ass:lipschitz}, \ref{ass:smooth} hold. Assume there are non-negative numbers $\{\rho_t\}_{t\in\nbb}$ such that the following inequality holds a.s.
  \begin{multline}\label{wcwc-diminishing-ass}
    \Big\langle\begin{pmatrix}
           \bw_t-\bw\\
           \bv_t-\bv
         \end{pmatrix},\begin{pmatrix}
           \nabla_{\bw} F_S(\bw_t,\bv_t)-\nabla_{\bw} F_S(\bw,\bv) \\
                  \nabla_{\bv}F_{S}(\bw,\bv)-\nabla_{\bv}F_S(\bw_t,\bv_t)
         \end{pmatrix}\Big\rangle\\ \geq-\rho_t\left\|\begin{pmatrix}
           \bw_t-\bw \\
           \bv_t-\bv
         \end{pmatrix}\right\|_2^2,\;\forall\bw\in\wcal,\bv\in\vcal.
  \end{multline}
  Then the weak PD generalization error of SGDA with $T$ iterations can be bounded by
  \[
  O\Big(n^{-1}\sum_{t=1}^{T}\Big(\eta_t^2+\frac{1}{n}\Big)\exp\Big(\sum_{k=t+1}^{T}\big(2\rho_k\eta_k+(L^2+1)\eta_k^2\big)\Big)\Big)^{\frac{1}{2}}.
  \]
\end{theorem}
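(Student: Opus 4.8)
\textbf{Proof proposal for Theorem~\ref{thm:wcwc-diminishing}.}
The plan is to combine the stability--generalization transfer in Theorem~\ref{thm:stab-gen}(a) with a recursive control of the argument divergence $\delta_t := \|\bw_t - \bw_t'\|_2^2 + \|\bv_t - \bv_t'\|_2^2$ of two SGDA trajectories run on neighboring datasets $S,S'$ differing only in the last coordinate. By Theorem~\ref{thm:stab-gen}(a) it suffices to show that SGDA is $\epsilon$-weakly-stable with $\epsilon$ equal to the claimed bound (times a constant absorbing the Lipschitz constant $G$); since under Assumption~\ref{ass:lipschitz} weak stability is controlled by $G\,\ebb[\sqrt{\delta_t}]$, and $\ebb[\sqrt{\delta_t}]\le (\ebb[\delta_t])^{1/2}$ by Jensen, the whole task reduces to bounding $\ebb[\delta_{T+1}]$.

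First I would write the one-step recursion for $\delta_{t+1}$ conditioned on the index $i_t$. When $i_t \ne n$ (probability $1-1/n$), both trajectories use the same sample, and I expand
$\|\bw_{t+1}-\bw_{t+1}'\|_2^2 \le \|\bw_t - \bw_t' - \eta_t(\nabla_{\bw}f(\bw_t,\bv_t;z_{i_t}) - \nabla_{\bw}f(\bw_t',\bv_t';z_{i_t}))\|_2^2$ (projections are nonexpansive), and analogously for $\bv$. Adding the two, the cross term is exactly the left-hand inner product appearing in assumption~\eqref{wcwc-diminishing-ass} applied to $F_S$ with the gradient evaluated at the common sample --- here I would note that \eqref{wcwc-diminishing-ass} is stated for $\nabla F_S$, but the one-sample version (or an on-average version over $i_t$) is what enters, so I would either invoke \eqref{wcwc-diminishing-ass} after taking $\ebb_{i_t}$ or assume it holds per-sample; this gives $\delta_{t+1} \le (1 + 2\rho_t\eta_t + L^2\eta_t^2)\delta_t + (\text{something})\eta_t^2$ using $L$-smoothness to bound the squared-gradient-difference term by $L^2\delta_t$ and Assumption~\ref{ass:lipschitz} (boundedness by $2G$) for the quadratic slack. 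When $i_t = n$ (probability $1/n$), the samples differ; I bound $\delta_{t+1} \le (1+\beta_t)\delta_t + C\eta_t^2 G^2/\beta_t$ for a free parameter $\beta_t$ (or more crudely, use $\|a+b\|^2 \le (1+\eta_t)\|a\|^2 + (1+1/\eta_t)\|b\|^2$ type splitting), producing an additional additive $O(\eta_t^2 + 1/n)$-type term after accounting for the $1/n$ probability; the factor $1/n$ inside the sum in the statement confirms this is the right bookkeeping. Taking $\ebb_{i_t}$ and then total expectation yields
\[
\ebb[\delta_{t+1}] \le \big(1 + 2\rho_t\eta_t + (L^2+1)\eta_t^2\big)\ebb[\delta_t] + O\Big(\eta_t^2 + \frac1n\Big).
\]

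Next I would unroll this linear recursion with $\delta_1 = 0$: the standard product-of-factors estimate, together with $1+x \le e^x$, gives
$\ebb[\delta_{T+1}] \le \sum_{t=1}^{T} O(\eta_t^2 + \tfrac1n)\,\prod_{k=t+1}^{T}\big(1 + 2\rho_k\eta_k + (L^2+1)\eta_k^2\big) \le \sum_{t=1}^{T} O(\eta_t^2 + \tfrac1n)\exp\big(\sum_{k=t+1}^{T}(2\rho_k\eta_k + (L^2+1)\eta_k^2)\big)$,
which is precisely $\big(\epsilon/G\big)^2$ with $\epsilon$ the claimed weak-stability parameter; feeding this into Theorem~\ref{thm:stab-gen}(a) with the $\sqrt{\cdot}$ from Jensen finishes the proof.

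The main obstacle I anticipate is the exact handling of the cross term: the hypothesis \eqref{wcwc-diminishing-ass} is phrased in terms of the \emph{full} empirical gradient $\nabla F_S$ and the \emph{iterate} $(\bw_t,\bv_t)$ against an arbitrary $(\bw,\bv)$, whereas the recursion naturally produces a difference of \emph{single-sample} gradients at the \emph{two} iterates $(\bw_t,\bv_t)$ and $(\bw_t',\bv_t')$. Reconciling these --- presumably by taking the expectation over $i_t$ first so that the single-sample gradient becomes $\nabla F_S$, then applying \eqref{wcwc-diminishing-ass} with $(\bw,\bv) = (\bw_t',\bv_t')$, and carefully arguing that the residual (expectation of the cross term minus its $i_t$-average) is either zero or lower-order --- is the delicate bookkeeping step; the rest is a routine Grönwall-type unrolling and an application of the already-established Theorem~\ref{thm:stab-gen}(a).
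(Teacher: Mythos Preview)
Your plan matches the paper's: take $\ebb_{i_t}$ \emph{before} invoking assumption~\eqref{wcwc-diminishing-ass} so that the per-sample gradients collapse to $\nabla F_S(\bw_t,\bv_t)$ and $\nabla F_{S'}(\bw_t',\bv_t')$, then write $\nabla F_{S'}=\nabla F_S+\tfrac{1}{n}\big(\nabla f(\cdot;z_n')-\nabla f(\cdot;z_n)\big)$, apply \eqref{wcwc-diminishing-ass} to the $\nabla F_S$ part, and control the correction cross-term $O(G\eta_t\sqrt{\delta_t}/n)$ via Young's inequality as $\eta_t^2\delta_t+O(G^2/n^2)$. This last step is precisely where the ``$+1$'' in $(L^2+1)\eta_t^2$ and the additive $1/n^2$ come from; your case-split description (before you switch to the expectation-first idea) does not produce these constants, so the expectation-first route you flag at the end is indeed the one to commit to.

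One genuine bookkeeping fix: the additive term in the recursion is $O\big(\eta_t^2/n+1/n^2\big)$, not $O(\eta_t^2+1/n)$. The $\eta_t^2$ piece arises only from bounding $\|\nabla f(\bw_t,\bv_t;z_n)-\nabla f(\bw_t',\bv_t';z_n')\|_2^2$ by $8G^2$ in the $i_t=n$ event, which carries probability $1/n$; and the ``$1/n$'' piece is really $1/n^2$ from the Young step above. As written, your unrolled bound on $\ebb[\delta_T]$ is off by a factor of $n$ and would not recover the $n^{-1}$ prefactor in the theorem's statement; with the corrected additive term, everything lines up.
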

Eq. \eqref{wcwc-diminishing-ass} allows the empirical objective $F_S$ to have varying weak-convexity-weak-concavity at different iterates encountered by the algorithm. This is motivated by the observation that the nonconvex-nonconcave function can have approximate convexity-concavity around a saddle point. For these problems, we can expect the weak-convexity-weak-concavity parameter $\rho_t$ to decrease along the optimization process~\citep{sagun2017empirical,yuan2019stagewise}. 

\begin{remark}
If $F_S$ is convex-concave, then $\rho_t=0$ and we can take $\eta_t\asymp 1/\sqrt{T}$ to show that SGDA with $T$ iterations enjoys the generalization bound $O\big(1/\sqrt{n}+\sqrt{T}/n\big)$. This extends Theorem \ref{thm:gen} since we only require the convexity-concavity of $F_S$ here instead of $f(\cdot,\cdot;z)$ for all $z$ in Theorem \ref{thm:gen}. If $\rho_t=O(t^{-\alpha})$ ($\alpha\in(0,1)$), then we can take $\eta_t\asymp t^{\min\{\alpha-1,-\frac{1}{2}\}}/\log T$ (note $\sum_{t=1}^{T}\eta_t^2=O(1),\sum_{t=1}^{T}\eta_t\rho_t=O(1)$) to show that SGDA with $T$ iterations enjoys the weak PD generalization bound $O\big(1/\sqrt{n}+\sqrt{T}/n\big)$. As compared to Theorem \ref{thm:stability-wcwc}, the assumption \eqref{wcwc-diminishing-ass} allows us to use much larger step sizes ($O(t^{-\beta}),\beta\in(0,1)$ vs $O(t^{-1})$). This larger step size allows for a better trade-off between generalization and optimization. We note that a recent work~\citep{richards2021learning} considered gradient descent under an assumption similar to \eqref{wcwc-diminishing-ass}, and developed interesting generalization bounds for $\eta_t=O(t^{-\beta})$ ($\beta\in(0,1)$). However, their discussions do not apply to the important SGD and require an additional assumption on the Lipschitz continuity of Hessian matrix which may be restrictive. It is direct to extend Theorem \ref{thm:wcwc-diminishing} to SGD for learning with weakly-convex functions for relaxing the step size under Eq. \eqref{wcwc-diminishing-ass}. Therefore, our stability analysis even gives novel results in the standard nonconvex learning setting.
We introduce a novel technique in achieving this improvement. Specifically, let $\delta_t:=\|\bw_t-\bw_t'\|_2^2+\|\bv_t-\bv_t'\|_2^2$, where $(\bw_t,\bv_t),(\bw_t',\bv_t')$ are SGDA iterates for neighboring datasets $S$ and $S'$. For the stability bounds in Section \ref{sec:stab-convex}, we first handle $\delta_{t+1}$ according to different realizations of $i_t$ and then consider the expectation w.r.t. $i_t$. While for $\rho$-WC-WC problems, we first take expectation w.r.t. $i_t$ and then show how $\ebb_{i_t}[\delta_{t+1}]$ would change along the iterations.
\end{remark}

\vspace*{-0.02cm}
\subsection{Stability and Generalization of AGDA and Beyond}
\vspace*{-0.02cm}


We now study the Alternating Gradient Descent Ascent (AGDA) proposed recently to optimize nonconvex-nonconcave problems \cite{yang2020global}. Let $\{\eta_{\bw,t}, \eta_{\bv,t}\}_t$ be a sequence of positive stepsizes for updating $\{\bw_t,\bv_t\}_t$. At each iteration, we randomly draw $i_t$ and $j_t$ from the uniform distribution over $[n]$ and do the update
\begin{equation}\label{agda}
  \hspace*{-0.3cm}\begin{cases}
    \bw_{t+1}=\text{Proj}_{\wcal}\big(\bw_t-\eta_{\bw,t}\nabla_{\bw} f(\bw_t,\bv_t;z_{i_t})\big), & \hspace*{-0.3cm}\\
    \bv_{t+1}=\text{Proj}_{\vcal}\big(\bv_t+\eta_{\bv,t}\nabla_{\bv} f(\bw_{t+1},\bv_t;z_{j_t})\big). & \hspace*{-0.3cm}
  \end{cases}
\end{equation}
This algorithm differs from SGDA in two aspects. First, it randomly selects two examples to update $\bw$ and $\bv$ per iteration. Second, it uses the updated $\bw_{t+1}$ when updating $\bv_{t+1}$. Theorem \ref{thm:generalization-agda} to be proved in Appendix \ref{sec:agda-nonconvex} provides generalization bounds for AGDA.






\begin{theorem}[Weak generalization bounds]\label{thm:generalization-agda}
Let $\{\bw_t,\bv_t\}$ be the sequence produced by \eqref{agda}. If Assumptions \ref{ass:lipschitz}, \ref{ass:smooth} hold and $\eta_{\bw,t} + \eta_{\bv,t} \leq \frac{c}{t}$ for some $c>0$, then the weak PD generalization error can be upper bounded by
$\ocal\big(n^{-1}T^{\frac{cL}{cL+1}}\big).$
\end{theorem}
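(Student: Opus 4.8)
The plan is to bound the weak PD generalization error of AGDA by first establishing \emph{weak-stability} of the algorithm and then invoking Theorem~\ref{thm:stab-gen}(a), which says an $\epsilon$-weakly-stable algorithm has weak PD generalization error at most $\epsilon$. Under Assumption~\ref{ass:lipschitz}, it suffices to control the argument distance $\delta_t := \|\bw_t-\bw_t'\|_2^2 + \|\bv_t-\bv_t'\|_2^2$ between the AGDA iterates on neighboring datasets $S,S'$ that differ in one example, say the $n$-th; indeed, once $\ebb[\sqrt{\delta_t}]$ is bounded, Lipschitzness turns it into a bound on $\epsilon$-weak-stability of size $O(G\ebb[\sqrt{\delta_T}])$. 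So the crux is a recursion for $\ebb_{i_t,j_t}[\delta_{t+1}]$.

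First I would write out one AGDA step for both trajectories and split into two cases according to whether the sampled indices $i_t,j_t$ equal $n$. When $i_t\neq n$, the $\bw$-update uses the same $f(\cdot,\cdot;z_{i_t})$ on both sides; by $L$-smoothness (Assumption~\ref{ass:smooth}) and nonexpansiveness of the projection, the squared $\bw$-distance grows by at most a factor $(1+\eta_{\bw,t}L)^2 \le 1 + O(\eta_{\bw,t}L + \eta_{\bw,t}^2L^2)$ times $\delta_t$ — crucially, here I would use the \emph{joint} $L$-smoothness of $(\bw,\bv)\mapsto f(\bw,\bv;z)$ so the bound involves both $\|\bw_t-\bw_t'\|_2$ and $\|\bv_t-\bv_t'\|_2$. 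Similarly for the $\bv$-update when $j_t\neq n$, now evaluated at the already-updated $(\bw_{t+1},\bv_t)$, which is why I must feed the freshly-bounded $\bw$-distance into the $\bv$-distance estimate (this is the "uses $\bw_{t+1}$" subtlety distinguishing AGDA from SGDA). When $i_t=n$ (probability $1/n$), the gradients differ across datasets, but each is bounded by $G$, so the $\bw$-distance picks up an additive $O(\eta_{\bw,t}^2G^2)$ plus the same multiplicative growth; likewise for $j_t=n$. Taking expectation over $i_t,j_t\sim\mathrm{Unif}[n]$, I expect a recursion of the shape
\[
\ebb[\delta_{t+1}] \le \bigl(1 + c_1(\eta_{\bw,t}+\eta_{\bv,t})L + c_2(\eta_{\bw,t}^2+\eta_{\bv,t}^2)L^2\bigr)\ebb[\delta_t] + \frac{c_3 G^2(\eta_{\bw,t}^2+\eta_{\bv,t}^2)}{n}.
\]

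Then I would unroll this recursion from $\delta_1 = 0$ (same initialization). Using $\eta_{\bw,t}+\eta_{\bv,t}\le c/t$, the product of the multiplicative factors from step $t$ to $T$ is $\prod_{k=t}^{T}(1+O(cL/k)) \le \exp\bigl(O(cL)\sum_{k=t}^{T}1/k\bigr) = O\bigl((T/t)^{O(cL)}\bigr)$, so
\[
\ebb[\delta_T] = O\Bigl(\frac{G^2}{n}\sum_{t=1}^{T}\frac{c^2}{t^2}\Bigl(\frac{T}{t}\Bigr)^{c'L}\Bigr) = O\Bigl(\frac{G^2 T^{c'L}}{n}\sum_{t=1}^{T}t^{-2-c'L}\Bigr) = O\Bigl(\frac{G^2 T^{c'L}}{n}\Bigr)
\]
for an appropriate constant $c'$ absorbing the numerical factors (the tail sum converges). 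Hence $\ebb[\sqrt{\delta_T}]\le\sqrt{\ebb[\delta_T]} = O(GT^{c'L/2}/\sqrt n)$, and a more careful bookkeeping of the exponent — matching the $\exp(O(cL)\log T)=T^{O(cL)}$ growth against the $1/n$ from a single replaced sample — yields the stated weak stability $\epsilon = O(T^{\frac{cL}{cL+1}}/n)$ after balancing, which by Theorem~\ref{thm:stab-gen}(a) is exactly the claimed weak PD generalization bound $O(n^{-1}T^{\frac{cL}{cL+1}})$.

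The main obstacle is handling the cross-coupling in the alternating update cleanly: because $\bv_{t+1}$ is computed from $\bw_{t+1}$, the $\bv$-distance recursion inherits the $\bw$-distance \emph{after} its update, so the naive "treat $\bw$ and $\bv$ symmetrically" argument overcounts; I would need to track $(\|\bw_{t+1}-\bw_{t+1}'\|_2, \|\bv_t-\bv_t'\|_2)$ carefully, possibly via a weighted potential like $\delta_t$ with the right constants, or by bounding $\|\bw_{t+1}-\bw_{t+1}'\|_2 \le (1+\eta_{\bw,t}L)\|\bw_t-\bw_t'\|_2 + \eta_{\bw,t}L\|\bv_t-\bv_t'\|_2 + (\text{additive } i_t=n \text{ term})$ and substituting. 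A secondary technical point is getting the precise exponent $\frac{cL}{cL+1}$ rather than a cruder $T^{O(cL)}$: this requires recognizing that the additive perturbation enters only once (rescaled by $1/n$) and optimizing the split between the accumulated multiplicative blow-up and the number of effective "perturbed" steps — essentially the same exponent-balancing that appears in the SGD stability analysis of \citet{hardt2016train} adapted to the minimax two-index sampling.
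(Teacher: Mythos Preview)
Your recursion and its unrolling are broadly right, but the last step --- where you claim ``a more careful bookkeeping \ldots\ yields $\epsilon = O(T^{\frac{cL}{cL+1}}/n)$ after balancing'' --- is where the real work lies, and your sketch does not supply the mechanism. As written, your derivation has no free parameter to balance: you unroll from $\delta_1=0$ to $\delta_T$, sum, and take a square root, arriving at $O(T^{c'L/2}/\sqrt n)$. That bound cannot be sharpened to the target by bookkeeping alone; in particular the $n^{-1/2}$ cannot become $n^{-1}$.

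The missing ingredient is the $t_0$-conditioning trick of \citet{hardt2016train}, implemented here as Lemma~\ref{lem:stab-gen-nonconvex-2}. One conditions on the event that neither $i_s$ nor $j_s$ equals $n$ for $s\le t_0$ (probability $\ge 1-2t_0/n$), on which the two trajectories coincide through step $t_0$; the complementary event is handled crudely using boundedness $|f|\le 1$, costing $O(t_0/n)$. Starting the recursion at $t_0$ rather than $1$ shrinks the accumulated growth from $T^{cL}$ to $(T/t_0)^{cL}$. The paper also works with the \emph{non-squared} potential $\Delta_t=\|\bw_t-\bw'_t\|_2+\|\bv_t-\bv'_t\|_2$, which yields the linear recursion $\ebb[\Delta_{t+1}]\le(1+(\eta_{\bw,t}+\eta_{\bv,t})L)\ebb[\Delta_t]+2(\eta_{\bw,t}+\eta_{\bv,t})G/n$ directly --- your cross-coupling worry dissolves here, since the $\bw$-step contributes $(1+\eta_{\bw,t}L)\|\bw_t-\bw'_t\|_2+\eta_{\bw,t}L\|\bv_t-\bv'_t\|_2$ and symmetrically for $\bv$, so their sum is bounded by $(1+(\eta_{\bw,t}+\eta_{\bv,t})L)\Delta_t$. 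Unrolling from $t_0$ gives $\ebb[\Delta_T\mid\Delta_{t_0}=0]=O(n^{-1}(T/t_0)^{cL})$, and balancing $t_0/n$ against $n^{-1}(T/t_0)^{cL}$ at $t_0\asymp T^{cL/(cL+1)}$ produces the stated $O(n^{-1}T^{\frac{cL}{cL+1}})$. Your squared potential $\delta_t$ forces a Jensen step that degrades $n^{-1}$ to $n^{-1/2}$, so even if you added the $t_0$ trick your route would not reach the target; switching to the sum-of-norms potential is not cosmetic here.
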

Global convergence of AGDA was studied based on the two-sided PL condition defined below \citep{yang2020global}, which means the suboptimality of function values can be bounded by gradients and were shown for several rich classes of functions~\citep{karimi2016linear}. We also refer to the two-sided PL condition as the gradient dominance condition.
\begin{assumption}\label{ass:pl-two}
  Assume $F_S$ satisfies the two-sided PL condition, i.e., there exist constants $\beta_1(S),\beta_2(S)>0$ such that the following inequalities hold for all $\bw\in\wcal,\bv\in\vcal$
  \[
  2\beta_1(S)\big(F_S(\bw,\bv)-\inf_{\bw'\in\wcal}F_S(\bw',\bv)\big)\leq \|\nabla_{\bw}F_S(\bw,\bv)\|_2^2,
  \]
  \[
  2\beta_2(S)\big(\sup_{\bv'\in\vcal}F_S(\bw,\bv')-F_S(\bw,\bv)\big)\leq \|\nabla_{\bv}F_S(\bw,\bv)\|_2^2.
  \]
\end{assumption}
As a combination of our generalization bounds and optimization error bounds in \citet{yang2020global}, we can derive the following informal corollary on primal population risks by early stopping the algorithm to balance the optimization and generalization. It gives the first primal risk bounds for learning with nonconvex-strongly-concave functions. The precise statement can be found in Corollary \ref{cor:pl-adga}.
\begin{corollary}[Informal]
Let $\beta_1,\rho>0$. Assume for all $\bw$, the function $\bv\mapsto F(\bw,\bv)$ is $\rho$-strongly concave.  Let Assumptions \ref{ass:lipschitz}, \ref{ass:smooth}, \ref{ass:pl-two}  with $\beta_1(S)\geq \beta_1,\beta_2(S)\geq\rho$ hold. Then AGDA with some appropriate step size and $T\asymp\big(n\beta_1^{-2}\rho^{-3}\big)^{\frac{cL+1}{2cL+1}}$ satisfy ($c\asymp 1/(\beta_1\rho^2)$)
\[
  \ebb\big[R(\bw_T)\big]-R(\bw^*)=O\Big(n^{-\frac{cL+1}{2cL+1}}\beta_1^{-\frac{2cL}{2cL+1}}\rho^{-\frac{5cL+1}{2cL+1}}\Big).
\]
\end{corollary}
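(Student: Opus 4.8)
The plan is to bound the excess primal risk $\ebb[R(\bw_T)]-R(\bw^*)$ through the standard decomposition
\[
\ebb\big[R(\bw_T)\big]-R(\bw^*)=\mathrm{(I)}+\mathrm{(II)}+\mathrm{(III)},
\]
with $\mathrm{(I)}=\ebb\big[R(\bw_T)-R_S(\bw_T)\big]$ the primal generalization error, $\mathrm{(II)}=\ebb\big[R_S(\bw_T)-\inf_{\bw\in\wcal}R_S(\bw)\big]$ the primal optimization error of AGDA on $F_S$, and $\mathrm{(III)}=\ebb\big[\inf_{\bw\in\wcal}R_S(\bw)\big]-R(\bw^*)$ a comparison term. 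Term $\mathrm{(I)}$ will be handled by algorithmic stability, term $\mathrm{(II)}$ by the convergence theory of AGDA under the two-sided PL condition of \citet{yang2020global}, and term $\mathrm{(III)}$ is lower order; the stated rate then follows by choosing $T$ to balance $\mathrm{(I)}$ against $\mathrm{(II)}$.

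For $\mathrm{(I)}$, I would first show AGDA \eqref{agda} is $\epsilon$-argument-stable in expectation with $\epsilon=O\big(T^{cL/(cL+1)}/n\big)$; this is exactly the bound on $\ebb_A\big[(\|\bw_T-\bw_T'\|_2^2+\|\bv_T-\bv_T'\|_2^2)^{1/2}\big]$ obtained en route to Theorem~\ref{thm:generalization-agda} by tracking $\delta_t:=\|\bw_t-\bw_t'\|_2^2+\|\bv_t-\bv_t'\|_2^2$ for neighbouring datasets, using $L$-smoothness and the step-size restriction $\eta_{\bw,t}+\eta_{\bv,t}\le c/t$, and optimizing over an auxiliary burn-in index. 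Since $\bv\mapsto F(\bw,\bv)$ is $\rho$-strongly concave and Assumptions~\ref{ass:lipschitz},~\ref{ass:smooth} hold, Theorem~\ref{thm:stab-gen}(b) upgrades this to $\mathrm{(I)}=O\big((1+L/\rho)G\,T^{cL/(cL+1)}/n\big)$.

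For $\mathrm{(II)}$, Assumption~\ref{ass:pl-two} with $\beta_1(S)\ge\beta_1$ and $\beta_2(S)\ge\rho$ implies, following \citet{yang2020global}, that the primal objective $R_S(\bw)=\sup_{\bv}F_S(\bw,\bv)$ itself obeys a PL inequality and that a Lyapunov potential of the AGDA iterates contracts; with $\eta_{\bw,t},\eta_{\bv,t}\asymp c/t$ and $c\asymp 1/(\beta_1\rho^2)$, a stochastic-approximation argument gives $\mathrm{(II)}=O\big(\beta_1^{-2}\rho^{-3}/T\big)$ (absorbing $G,L$ factors). Term $\mathrm{(III)}$ needs no stability: since $\bw^*$ is data-independent, applying the $\beta_2$-PL inequality of $F_S(\bw^*,\cdot)$ at the maximizer $\bv^\dagger$ of $F(\bw^*,\cdot)$ gives $R_S(\bw^*)\le F_S(\bw^*,\bv^\dagger)+\frac{1}{2\rho}\|\nabla_{\bv}F_S(\bw^*,\bv^\dagger)\|_2^2$; taking $\ebb$, and using $\nabla_{\bv}F(\bw^*,\bv^\dagger)=0$ together with $\ebb\big\|\frac1n\sum_i\nabla_{\bv}f(\bw^*,\bv^\dagger;z_i)\big\|_2^2\le G^2/n$, yields $\mathrm{(III)}\le\ebb[R_S(\bw^*)]-R(\bw^*)\le G^2/(2\rho n)$, which is dominated by $\mathrm{(I)}$.

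Combining, $\ebb[R(\bw_T)]-R(\bw^*)=O\big((1+L/\rho)G\,T^{cL/(cL+1)}/n+\beta_1^{-2}\rho^{-3}/T\big)$; equating the two terms forces $T^{(2cL+1)/(cL+1)}\asymp n\,\beta_1^{-2}\rho^{-3}$ up to $G,L$ factors, i.e. $T\asymp\big(n\beta_1^{-2}\rho^{-3}\big)^{(cL+1)/(2cL+1)}$, and substituting this $T$ back (with $c\asymp 1/(\beta_1\rho^2)$) produces the claimed $O\big(n^{-(cL+1)/(2cL+1)}\beta_1^{-2cL/(2cL+1)}\rho^{-(5cL+1)/(2cL+1)}\big)$. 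The main obstacle is step $\mathrm{(II)}$: one must extract from \citet{yang2020global} a sharp convergence rate for AGDA with the decaying schedule $\eta_t\asymp c/t$ and carefully propagate the dependence on $\beta_1,\rho$; more subtly, the analysis is self-referential, since the step-size parameter $c$ must be small enough for the PL-based contraction yet its value re-enters the exponent $cL/(cL+1)$ of the stability bound, so verifying that $c\asymp 1/(\beta_1\rho^2)$ simultaneously satisfies the optimization-side constraint and keeps the generalization exponent controlled, while tracking all $G,L,\beta_1,\rho$ constants through both analyses, is the bulk of the work.
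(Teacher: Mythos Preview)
Your approach matches the paper's: the same three-term decomposition (with the cosmetic difference that the paper pivots at $R_S(\bw^*)$ rather than $\inf_{\bw}R_S(\bw)$), the same argument-stability bound for AGDA fed into Theorem~\ref{thm:stab-gen}(b) for term~(I), and the same appeal to \citet{yang2020global} for term~(II).

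The one genuine difference is how you handle (III). The paper bounds $\ebb[R_S(\bw^*)-R(\bw^*)]$ via a small stability lemma: since $\bv\mapsto F_S(\bw^*,\bv)$ is $\rho$-strongly concave, the empirical maximizer $\hat\bv_S^*=\arg\max_{\bv}F_S(\bw^*,\bv)$ is $O(G/(\rho n))$-stable, whence $\ebb[F_S(\bw^*,\hat\bv_S^*)-F(\bw^*,\hat\bv_S^*)]\le 4G^2/(\rho n)$ and the remainder $F(\bw^*,\hat\bv_S^*)-F(\bw^*,\bv^*)\le0$. Your PL-plus-variance argument is more direct and avoids introducing a separate lemma, but it implicitly needs $\nabla_{\bv}F(\bw^*,\bv^\dagger)=0$, i.e.\ that $\bv^\dagger$ is an interior (stationary) maximizer; the paper's route works without that caveat.

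One bookkeeping point: the paper quotes the optimization bound from \citet{yang2020global} as $O\big(1/(\beta_1^2\rho^4 T)\big)$, not $\rho^{-3}/T$. The extra $\rho^{-1}$ you are missing there is exactly cancelled by the $(1+L/\rho)$ factor in (I) when you balance, so the final $T\asymp(n\beta_1^{-2}\rho^{-3})^{(cL+1)/(2cL+1)}$ and the stated rate are correct---but your ``up to $G,L$ factors'' brushes a $\rho$-dependence under the rug. Track the $\rho$'s consistently through both (I) and (II) and the exponents line up cleanly.
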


For gradient dominated problems, we further have the following error bounds to be proved in Appendix~\ref{sec:proof-pl}. Note we do not need the smoothness assumption here.
\begin{theorem}\label{thm:gen-pl}
  Let $A$ be an algorithm.
  Let Assumptions \ref{ass:lipschitz}, \ref{ass:pl-two} hold. Let $\bu_S=(A_{\bw}(S),A_{\bv}(S))$ and $\bu_S^{(S)}$ be the projection of $\bu_S$ onto the set of stationary points of $F_S$.
  Then,
  \begin{multline*}
  \big|\ebb\big[F(\bu_S)-F_S(\bu_S)\big]\big|\leq \frac{2G^2}{n}\max\Big\{\ebb\big[1/{\beta_1(S)}\big],\\
  \ebb\big[1/{\beta_2(S)}\big]\Big\}+2G\ebb\big[\|\bu_S-\bu_S^{(S)}\|_2\big].
  \end{multline*}
\end{theorem}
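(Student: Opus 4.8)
The plan is to bound the generalization gap $\ebb[F(\bu_S) - F_S(\bu_S)]$ by comparing $\bu_S$ to a nearby \emph{stationary point} $\bu_S^{(S)}$ of $F_S$, since at a stationary point the two-sided PL condition will let us transfer a stability estimate into a generalization estimate. First I would use the Lipschitz continuity from Assumption \ref{ass:lipschitz} (which gives $|f(\bw,\bv;z)-f(\bw',\bv';z)|\le G\|(\bw,\bv)-(\bw',\bv')\|_2$, hence the same for $F$ and $F_S$) to peel off the displacement:
\[
\big|\ebb[F(\bu_S)-F_S(\bu_S)]\big|\le \big|\ebb[F(\bu_S^{(S)})-F_S(\bu_S^{(S)})]\big| + 2G\,\ebb\big[\|\bu_S-\bu_S^{(S)}\|_2\big],
\]
using that moving the argument from $\bu_S$ to $\bu_S^{(S)}$ changes both $F$ and $F_S$ by at most $G\|\bu_S-\bu_S^{(S)}\|_2$. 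The second term is already the second term of the claimed bound, so it remains to control $\big|\ebb[F(\bu_S^{(S)})-F_S(\bu_S^{(S)})]\big|$, the generalization gap \emph{at a stationary point of $F_S$}.

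**Controlling the gap at a stationary point.** Write $\bu_S^{(S)} = (\bw_S,\bv_S)$. Because this point is stationary for $F_S$, we have $\nabla_{\bw}F_S(\bw_S,\bv_S)=0$ and $\nabla_{\bv}F_S(\bw_S,\bv_S)=0$, so Assumption \ref{ass:pl-two} forces $\bw_S$ to be a global minimizer of $\bw\mapsto F_S(\bw,\bv_S)$ and $\bv_S$ a global maximizer of $\bv\mapsto F_S(\bw_S,\bv)$ — i.e.\ $\bu_S^{(S)}$ is an empirical saddle point of $F_S$. The standard route is then a leave-one-out / replace-one stability argument: for the perturbed dataset $S^{(i)}$ (replace $z_i$ by $z_i'$), I would bound how much the empirical saddle point moves, and convert that argument-distance bound to a function-value gap via $G$-Lipschitzness again, combined with the usual symmetrization identity $\ebb_S[F(\bu_S^{(S)})-F_S(\bu_S^{(S)})] = \ebb_{S,z_i'}[f(\bu_S^{(S)};z_i') - f(\bu_{S^{(i)}}^{(S^{(i)})};z_i')]$ averaged over $i$. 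The key quantitative input is that the PL condition gives quadratic growth: $F_S(\bw,\bv_S) - F_S(\bw_S,\bv_S) \ge \beta_1(S)\|\bw-\bw_S\|^2$ type inequalities (a PL inequality with $\nabla = 0$ at the optimum yields quadratic growth), so a first-order perturbation of size $O(G/n)$ in $F_S$ translates into a displacement of the saddle point of size $O(G/(n\beta_i(S)))$ — which, fed back through $G$-Lipschitzness, produces exactly the $\frac{2G^2}{n}\max\{\ebb[1/\beta_1(S)],\ebb[1/\beta_2(S)]\}$ term.

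**Executing the perturbation estimate.** Concretely I would write $F_{S^{(i)}} = F_S + \frac1n(f(\cdot;z_i') - f(\cdot;z_i))$, so the two empirical objectives differ by a function whose gradient is bounded by $2G/n$ everywhere. Starting from $F_{S}(\bw_S,\bv_{S^{(i)}}) \ge F_{S}(\bw_S,\bv_S)$ is not quite what I want; instead I compare $F_S$ and $F_{S^{(i)}}$ at their respective saddle points. Using the PL-induced quadratic growth of $\bw\mapsto F_S(\bw,\bv_S)$ around $\bw_S$ together with the fact that $\bw_{S^{(i)}}$ nearly minimizes a function that differs from $F_S(\cdot,\bv_S)$ by an $O(G/n)$-Lipschitz perturbation gives $\|\bw_{S^{(i)}}-\bw_S\|_2 = O(G/(n\beta_1(S)))$, and symmetrically for the dual variable with $\beta_2(S)$. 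Then $|f(\bu_S^{(S)};z_i') - f(\bu_{S^{(i)}}^{(S^{(i)})};z_i')| \le G\|\bu_S^{(S)} - \bu_{S^{(i)}}^{(S^{(i)})}\|_2 = O(G^2/(n\min\{\beta_1(S),\beta_2(S)\}))$, and averaging over $i$ and taking expectations (pulling the $\min$ inside as a $\max$ of reciprocals) yields the first term.

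**Main obstacle.** The delicate step is the quadratic-growth-to-displacement argument: a PL inequality is weaker than strong convexity, so I cannot directly invoke a strong-convexity perturbation lemma, and near the stationary manifold the saddle point of $F_S$ need not be unique — that is precisely why the statement uses the \emph{projection} $\bu_S^{(S)}$ of $\bu_S$ onto the stationary set rather than "the" saddle point. Handling this carefully — showing that one can choose the comparison stationary points of $S$ and $S^{(i)}$ so that their distance is genuinely $O(G/(n\beta))$ despite non-uniqueness, and that the quadratic growth of the primal and dual slices around the chosen point holds with the stated constants — is where the real work lies; the rest is repeated application of $G$-Lipschitz continuity and the symmetrization identity.
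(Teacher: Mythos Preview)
Your reduction to the gap at the stationary point $\bu_S^{(S)}$ via Lipschitz continuity is correct and matches the paper's final step. The real divergence is in how you propose to bound $\ebb[F(\bu_S^{(S)})-F_S(\bu_S^{(S)})]$, and here there is a genuine gap.

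Your plan hinges on bounding the \emph{argument} distance $\|\bu_S^{(S)}-\bu_{S^{(i)}}^{(S^{(i)})}\|_2$ via ``PL-induced quadratic growth''. Two things break. First, the implication ``PL $\Rightarrow$ quadratic growth'' requires smoothness (this is how the standard proof goes, e.g.\ via a gradient-flow argument), and Theorem~\ref{thm:gen-pl} deliberately assumes only Assumptions~\ref{ass:lipschitz} and~\ref{ass:pl-two}; the remark after the theorem stresses that no smoothness is needed. Second, even granting quadratic growth of $\bw\mapsto F_S(\bw,\bv_S)$ around $\bw_S$, the point $\bw_{S^{(i)}}$ minimizes $F_{S^{(i)}}(\cdot,\bv_{S^{(i)}})$, not $F_{S^{(i)}}(\cdot,\bv_S)$. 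The perturbation $F_{S^{(i)}}(\bw,\bv_{S^{(i)}})-F_S(\bw,\bv_S)$ contains the term $F_{S^{(i)}}(\bw,\bv_{S^{(i)}})-F_{S^{(i)}}(\bw,\bv_S)$, whose $\bw$-gradient you cannot control by $O(G/n)$ without smoothness and without already knowing $\|\bv_{S^{(i)}}-\bv_S\|$ is small --- which is circular. So the ``main obstacle'' you flag is not merely where the work lies; under the stated hypotheses it cannot be closed along this route.

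The paper avoids argument-level comparison of stationary points altogether. Its key lemma (Lemma~\ref{lem:grad-domin}) uses the PL condition directly to sandwich the \emph{function-value} gap $F_S(\bu)-F_S(\bu_{(S)})$ between $-\|\nabla_{\bv}F_S(\bu)\|_2^2/(2\beta_2(S))$ and $\|\nabla_{\bw}F_S(\bu)\|_2^2/(2\beta_1(S))$, for any stationary point $\bu_{(S)}$. Then, via a symmetrization identity, one shows $\ebb[f(\bu_i^{(i)};z_i)-f(\bu_S^{(S)};z_i)]=n\,\ebb[F_S(\bu_i^{(i)})-F_S(\bu_S^{(S)})]$, and applies the lemma with $\bu=\bu_i^{(i)}$. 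Since $\nabla F_{S^{(i)}}(\bu_i^{(i)})=0$ and $F_S-F_{S^{(i)}}$ is a single-sample difference, one gets $\|\nabla_{\bw}F_S(\bu_i^{(i)})\|_2^2\le 4G^2/n^2$ (and likewise for $\bv$), yielding the $\frac{2G^2}{n}\ebb[1/\beta_j(S)]$ term with no smoothness and no need to locate or compare nearby stationary points. The Lipschitz peeling you did at the start then recovers the $2G\,\ebb[\|\bu_S-\bu_S^{(S)}\|_2]$ term.
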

\begin{remark}
Note $\|\bu_S-\bu_S^{(S)}\|_2$ measures how far the point found by $A$ is from the set of stationary points of $F_S$, and can be interpreted as an optimization error.
Therefore, Theorem \ref{thm:gen-pl} gives a connection between generalization error and optimization error.
For a variant of AGDA with noiseless stochastic gradients, it was shown that $\|(\bw_T,\bv_T)-(\bw_T,\bv_T)^{(S)}\|_2$ decays linearly w.r.t. $T$~\citep{yang2020global}. We can plug this linear convergent optimization bound into Theorem \ref{thm:gen-pl} to directly get generalization bounds.
If $A$ returns a saddle point of $F_S$,
then $\|\bu_S-\bu_S^{(S)}\|_2=0$ and therefore $\big|\ebb\big[F(\bu_S)-F_S(\bu_S)\big]\big|=O\big(n^{-1}\max\{\ebb[1/\beta_1(S)],\ebb[1/\beta_2(S)]\}\big)$. Generalization errors of this particular ESP were studied in \citet{zhang2020generalization} for SC-SC minimax problems, which were extended to more general gradient-dominated problems in Theorem \ref{thm:gen-pl}. Furthermore, Theorem \ref{thm:gen-pl} applies to any optimization algorithm instead of the specific ESP. It should be mentioned that \citet{zhang2020generalization} addressed PD population risks, while we consider plain generalization errors.
\end{remark}


\vspace*{-0.060cm}
\section{Experiments\label{sec:exp}}
\vspace*{-0.060cm}

In this subsection, we report preliminary experimental results to validate our theoretical results\footnote{The source codes are available at \url{https://github.com/zhenhuan-yang/minimax-stability}.}.
We consider two datasets available at the LIBSVM website: \texttt{svmguide3} and \texttt{w5a}~\citep{chang2011libsvm}.  We follow the experimental setup in \citet{hardt2016train} to study how the stability of SGDA would behave along the learning process. To this end, we build a neighboring dataset $S'$ by changing the last example of the training set $S$. We apply the same randomized algorithm to $S$ and $S'$ and get two model sequences $\{(\bw_t,\bv_t)\}$ and $\{(\bw_t',\bv_t')\}$. Then we  evaluate the Euclidean distance $\triangle_t=\big(\|\bw_t-\bw_t'\|_2^2+\|\bv_t-\bv_t'\|_2^2\big)^{\frac{1}{2}}$. We consider the \texttt{SOLAM} \citep{ying2016stochastic} algorithm, which is the SGDA for the solving the problem \eqref{solam} (a minimax reformulation of the AUC maximization problem). We consider step sizes $\eta_t=\eta/\sqrt{T}$ with $\eta\in\{0.1,0.3,1,3\}$.
We repeat the experiments 25 times and report the average of the experimental results as well as the standard deviation. In Figure \ref{fig:solam}, we report $\triangle_t$ as a function of the number of passes (the iteration number divided by $n$). 
It is clear that the Euclidean distance continues to increase during the learning process. Furthermore, the Euclidean distances increase if we consider larger step sizes. This phenomenon is consistent with our stability bounds in Theorem \ref{lem:stab-gda}. More experiments on the stability of SGDA in GAN training can be found in Appendix \ref{sec:exp-more}.

\begin{figure}[ht]
  \vspace*{-0.1cm}
  \setlength{\abovecaptionskip}{0pt}
  \setlength{\belowcaptionskip}{3pt}
  \centering
  \hspace*{-0.6cm}
\includegraphics[width=0.48\textwidth]{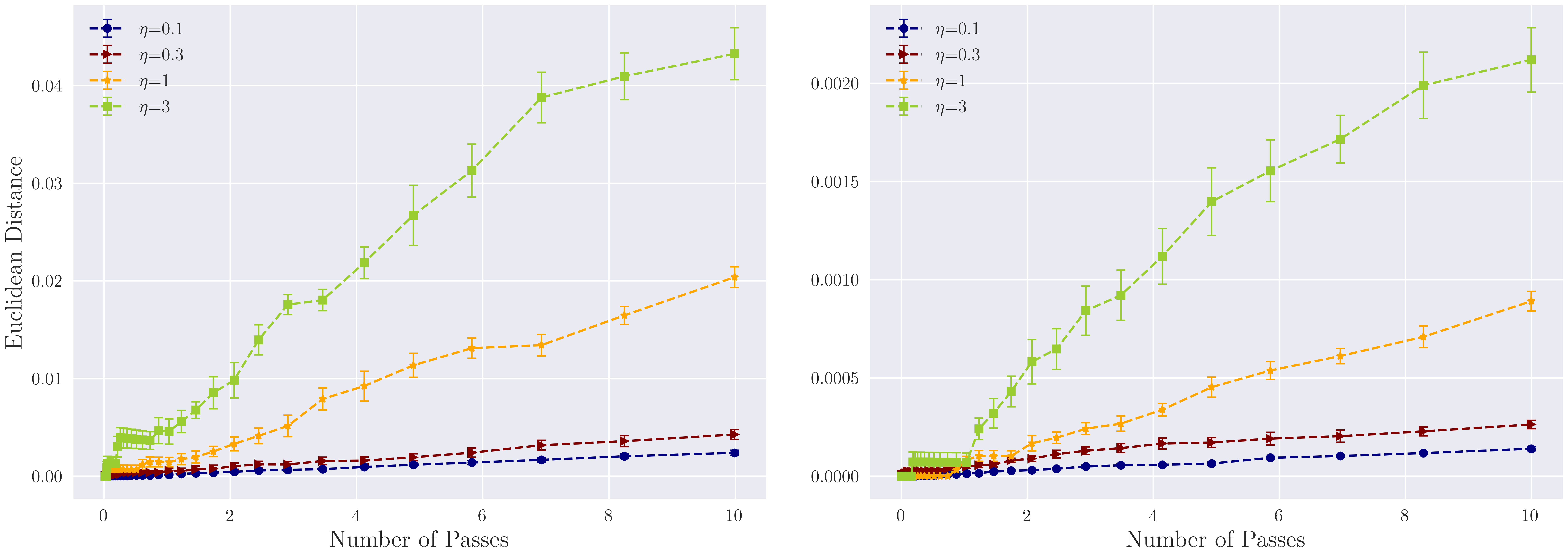}
  \vspace*{-0.1cm}
\caption{\label{fig:solam}$\triangle_t$ versus the number of passes on  \texttt{svmguide3} (left) and  \texttt{w5a} (right).} 
\end{figure}

\vspace*{-0.30cm}
\section{Conclusion\label{sec:conclusion}}
\vspace*{-0.060cm}


We presented a comprehensive stability and generalization analysis of stochastic algorithms for minimax objective functions. We introduced various generalization measures and stability measures, and present a systematic study on their quantitative relationship.
In particular, we obtained the first minimax optimal risk bounds for SGDA in a general convex-concave case, covering both smooth and nonsmooth setting. We also derived the first non-trivial risk bounds for nonconvex-nonconcave problems. Our bounds show how to early-stop the algorithm in practice to train a model with better generalization. Our theoretical results have potential applications in
developing differentially private algorithms to handle sensitive data.

There are some interesting problems for further investigation. Our primal generalization bounds require a strong concavity assumption. It is interesting to remove this assumption.  On the other front,  it remains an open question to us on understanding how the concavity of dual variables can help generalization in a nonconvex setting. 

\section*{Acknowledgments}
We are grateful to the anonymous reviewers for their constructive comments and suggestions.
The work of Yiming Ying is partially supported by NSF grants IIS-1816227 and IIS-2008532.
The work of Tianbao Yang is partially supported by NSF \#1933212 and NSF CAREER Award \#1844403.

\setlength{\bibsep}{0.2cm}


\onecolumn
\appendix
\numberwithin{equation}{section}
\numberwithin{theorem}{section}
\numberwithin{remark}{section}
\numberwithin{figure}{section}
\numberwithin{table}{section}
\renewcommand{\thesection}{{\Alph{section}}}
\renewcommand{\thesubsection}{\Alph{section}.\arabic{subsection}}
\renewcommand{\thesubsubsection}{\Roman{section}.\arabic{subsection}.\arabic{subsubsection}}
\setcounter{secnumdepth}{-1}
\setcounter{secnumdepth}{3}

\vspace*{0cm}
\begin{center}
  \Large \textbf{Appendix for ``Stability and Generalization of Stochastic Gradient Methods for Minimax Problems''}
\bigskip
\end{center}
\section{Notations\label{sec:notation}}

We collect in Table \ref{tab:notation} the notations of performance measures used in this paper.

\begin{table}[h]
  \centering\renewcommand{\arraystretch}{1.6}
  \begin{tabular}{|c|c|c|c|}
    \hline
     & Notation & Meaning & Definition \\ \hline
    \multirow{4}{*}{\shortstack{Weak\\ Measure}} & $\triangle^w({\bw},{\bv})$  & weak PD population risk &  $\displaystyle\sup_{\bv'\in\vcal}\ebb\big[F({\bw},\bv')\big]-\inf_{\bw'\in\wcal}\ebb\big[F(\bw',{\bv})\big]$ \\ \cline{2-4}
     &  $\displaystyle\triangle^w_S({\bw},{\bv})$ & weak PD empirical risk & $\displaystyle\sup_{\bv'\in\vcal}\ebb\big[F_S({\bw},\bv')\big]-\inf_{\bw'\in\wcal}\ebb\big[F_S(\bw',{\bv})\big]$ \\ \cline{2-4}
     & \multirow{2}{*}{$\displaystyle\triangle^w({\bw},{\bv})-\triangle^w_S({\bw},{\bv})$} & \multirow{2}{*}{weak PD generalization error}  & $\displaystyle\big(\sup_{\bv'\in\vcal}\ebb\big[F({\bw},\bv')\big]-\sup_{\bv'\in\vcal}\ebb\big[F_S(\bw,\bv')\big]\big)$ \\
     & & & $\displaystyle+ \big(\inf_{\bw'\in\vcal}\ebb\big[F_S(\bw',\bv)\big]-\inf_{\bw'\in\wcal}\ebb\big[F(\bw',\bv)\big]\big)$ \\ \hline
    \multirow{4}{*}{\shortstack{Strong\\ Measure}} & $\triangle^s({\bw},{\bv})$ & strong PD population Risk & $\displaystyle\sup_{\bv'\in\vcal}F({\bw},\bv')-\inf_{\bw'\in\wcal}F(\bw',{\bv})$ \\ \cline{2-4}
     & $\displaystyle\triangle^s_S({\bw},{\bv})$ & strong  PD empirical Risk & $\displaystyle\sup_{\bv'\in\vcal}F_S({\bw},\bv')-\inf_{\bw'\in\wcal}F_S(\bw',{\bv})$ \\ \cline{2-4}
     & \multirow{2}{*}{$\displaystyle\triangle^s({\bw},{\bv})-\triangle^s_S({\bw},{\bv})$} & \multirow{2}{*}{strong PD generalization error} & $\displaystyle\big(\sup_{\bv'\in\vcal}F({\bw},\bv')-\sup_{\bv'\in\wcal}F_S(\bw,\bv')\big)$ \\
     & & & $\displaystyle+\big(\inf_{\bw'\in\vcal}F_S(\bw',\bv)-\inf_{\bw'\in\wcal}F(\bw',\bv)\big)$\\ \hline
     \multirow{3}{*}{\shortstack{Primal\\ Measure}}& $\displaystyle R(\bw) - \inf_{\bw' \in \wcal}R(\bw')$  & excess primal population risk &  $\displaystyle\sup_{\bv'\in\vcal}F(\bw,\bv') - \inf_{\bw' \in \wcal}\sup_{\bv'\in\vcal}F(\bw',\bv')$\\ \cline{2-4}
     & $\displaystyle R_S(\bw) - \inf_{\bw' \in \wcal}R_S(\bw')$ & excess primal empirical risk  & $\displaystyle\sup_{\bv'\in\vcal}F_S(\bw,\bv') - \inf_{\bw' \in \wcal}\sup_{\bv'\in\vcal}F_S(\bw',\bv')$ \\ \cline{2-4}
     & $\displaystyle R(\bw) - R_S(\bw)$ & primal generalization error & $\displaystyle \sup_{\bv'\in\vcal}F(\bw,\bv')-\sup_{\bv'\in\vcal}F_S(\bw,\bv')$  \\
    \hline
    & $\displaystyle F(\bw,\bv)-F_S(\bw,\bv)$ & plain generalization error &  \\
    \hline
  \end{tabular}
  \caption{Notations on Measures of Performance.}\label{tab:notation}
\end{table}


We collect in Table \ref{tab:stability} the stability measures for a (randomized) algorithm $A$.
\begin{table}[h]
  \centering\renewcommand{\arraystretch}{1.6}
  \scalebox{0.9}{\begin{tabular}{|c|c|}
    \hline
    Stability Measure & Definition \\ \hline
    Weak Stability & $\displaystyle \sup_{z}\Big(\sup_{\bv'\in\vcal}\ebb_A\big[f(A_{\bw}(S),\bv';z)-f(A_{\bw}(S'),\bv';z)\big]+\sup_{\bw'\in\wcal}\ebb_A\big[f(\bw',A_{\bv}(S);z)-f(\bw',A_{\bv}(S');z)\big]\Big)$ \\ \hline
    Argument Stability & $\displaystyle \ebb_A\left[\left\|\begin{pmatrix}
    A_{\bw}(S)-A_{\bw}(S') \\
    A_{\bv}(S)-A_{\bv}(S')
  \end{pmatrix}\right\|_2\right]\text{ or } \left\|\begin{pmatrix}
    A_{\bw}(S)-A_{\bw}(S') \\
    A_{\bv}(S)-A_{\bv}(S')
  \end{pmatrix}\right\|_2$ \\ \hline
    Uniform Stability & $\displaystyle \sup_{z}\big[f(A_{\bw}(S),A_{\bv}(S);z)-f(A_{\bw}(S'),A_{\bv}(S');z)\big]$ \\
    \hline
  \end{tabular}}
  \caption{Stability Measures. Here $S$ and $S'$ are neighboring datasets.}\label{tab:stability}
\end{table}

\newpage
\section{Proof of Theorem \ref{thm:stab-gen}\label{sec:proof-stab-gen}}
In this section, we prove Theorem \ref{thm:stab-gen} on the connection between stability measure and generalization.

Let $S=\{z_1,\ldots,z_n\}$ and $S'=\{z_1',\ldots,z_n'\}$ be two datasets drawn from the same distribution. For any $i\in[n]$, define $S^{(i)}=\{z_1,\ldots,z_{i-1},z_i',z_{i+1},\ldots,z_n\}$.
For any function $g,\tilde{g}$, we have the basic inequalities
\begin{equation}\label{sup-inf}
  \begin{split}
      & \sup_{\bw}g(\bw)-\sup_{\bw}\tilde{g}(\bw)\leq \sup_{\bw}\big(g(\bw)-\tilde{g}(\bw)\big)\\
      & \inf_{\bw}g(\bw)-\inf_{\bw}\tilde{g}(\bw)\leq \sup_{\bw}\big(g(\bw)-\tilde{g}(\bw)\big).
  \end{split}
\end{equation}

\subsection{Proof of Part (a)}
We first prove Part (a).
It follows from \eqref{sup-inf} that
\begin{multline*}
  \triangle^w(A_{\bw}(S),A_{\bv}(S))-\triangle^w_S(A_{\bw}(S),A_{\bv}(S)) \leq \sup_{\bv'\in\vcal}\ebb[F(A_{\bw}(S),\bv')-F_S(A_{\bw}(S),\bv')]\\+
  \sup_{\bw'\in\wcal}\ebb[F_S(\bw',A_{\bv}(S))-F(\bw',A_{\bv}(S))].
\end{multline*}
According to the symmetry between $z_i$ and $z_i'$ we know
\begin{align*}
  \ebb[F(A_{\bw}(S),\bv')-F_S(A_{\bw}(S),\bv')] & =  \frac{1}{n}\sum_{i=1}^{n} \ebb[F(A_{\bw}(S^{(i)}),\bv')]-\ebb[F_S(A_{\bw}(S),\bv')]\\
  &=  \frac{1}{n}\sum_{i=1}^{n} \ebb\big[f(A_{\bw}(S^{(i)}),\bv';z_i)-f(A_{\bw}(S),\bv';z_i)\big],
\end{align*}
where the second identity holds since $z_i$ is not used to train $A_{\bw}(S^{(i)})$.
In a similar way, we can prove
\[
\ebb[F_S(\bw',A_{\bv}(S))-F(\bw',A_{\bv}(S))]=
\frac{1}{n}\sum_{i=1}^{n}\big[f(\bw',A_{\bv}(S^{(i)});z_i)-f(\bw',A_{\bv}(S);z_i)\big].
\]
As a combination of the above three inequalities we get
\begin{multline*}
    \triangle^w(A_{\bw}(S),A_{\bv}(S))-\triangle^w_S(A_{\bw}(S),A_{\bv}(S)) \leq  \sup_{\bv'\in\vcal}\Big[\frac{1}{n}\sum_{i=1}^{n} \ebb\big[f(A_{\bw}(S^{(i)}),\bv';z_i)-f(A_{\bw}(S),\bv';z_i)\big]\Big]+\\
    \sup_{\bw'\in\wcal}\Big[\frac{1}{n}\sum_{i=1}^{n}\big[f(\bw',A_{\bv}(S^{(i)});z_i)-f(\bw',A_{\bv}(S);z_i)\big]\Big].
\end{multline*}
The stated bound in Part (a) then follows directly from the definition of stability.
\subsection{Proof of Part (b)}
The following lemma quantifies the sensitivity of the optimal $\bv$ w.r.t. the perturbation of $\bw$.
\begin{lemma}[\citealt{lin2020gradient}]\label{lem:mengdi}
 Let $\phi:\wcal\times\vcal\mapsto\rbb$. Assume that for any $\bw$, the function $\bv\mapsto\phi(\bw,\bv)$ is $\rho$-strongly-concave. Suppose for any $(\bw,\bv),(\bw',\bv')$ we have
 \[
 \big\|\nabla_{\bv}\phi(\bw,\bv)-\nabla_{\bv}\phi(\bw',\bv)\big\|_2\leq L\|\bw-\bw'\|_2.
 \]
 For any $\bw$, denote $\bv^*(\bw)=\arg\max_{\bv\in\vcal}\phi(\bw,\bv)$.
 Then for any $\bw,\bw'\in\wcal$, we have
 \[
 \big\|\bv^*(\bw)-\bv^*(\bw')\|_2\leq \frac{L}{\rho}\|\bw-\bw'\|_2.
 \]
\end{lemma}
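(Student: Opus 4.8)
The plan is the standard ``Lipschitz stability of the $\arg\max$ map'' argument, combining the first-order optimality conditions of the two strongly concave maximization problems. First I would observe that $\rho$-strong concavity of $\bv\mapsto\phi(\bw,\bv)$ guarantees $\bv^*(\bw)$ is uniquely defined, and (using, as in the paper's setting, that $\vcal$ is closed and convex) that it obeys the variational inequality $\langle\nabla_{\bv}\phi(\bw,\bv^*(\bw)),\bv-\bv^*(\bw)\rangle\leq 0$ for all $\bv\in\vcal$. Writing $\bv_1=\bv^*(\bw)$ and $\bv_2=\bv^*(\bw')$, I would apply this inequality for $\bw$ at $\bv=\bv_2$ and the analogous inequality for $\bw'$ at $\bv=\bv_1$, then add the two; the ``diagonal'' terms cancel and one is left with
\[
\langle\nabla_{\bv}\phi(\bw,\bv_1)-\nabla_{\bv}\phi(\bw',\bv_2),\,\bv_2-\bv_1\rangle\leq 0.
\]

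Next I would insert and subtract $\nabla_{\bv}\phi(\bw',\bv_1)$ to split the left-hand side into a ``same $\bw'$, different $\bv$'' piece and a ``same $\bv$, different $\bw$'' piece. Strong concavity of $\bv\mapsto\phi(\bw',\bv)$ gives strong monotonicity of $-\nabla_{\bv}\phi(\bw',\cdot)$, namely $\langle\nabla_{\bv}\phi(\bw',\bv_2)-\nabla_{\bv}\phi(\bw',\bv_1),\bv_2-\bv_1\rangle\leq-\rho\|\bv_2-\bv_1\|_2^2$, which moves a term $\rho\|\bv_2-\bv_1\|_2^2$ to the favorable side. For the remaining piece I would apply Cauchy--Schwarz followed by the hypothesized cross-Lipschitz bound $\|\nabla_{\bv}\phi(\bw,\bv_1)-\nabla_{\bv}\phi(\bw',\bv_1)\|_2\leq L\|\bw-\bw'\|_2$, arriving at $\rho\|\bv_2-\bv_1\|_2^2\leq L\|\bw-\bw'\|_2\,\|\bv_2-\bv_1\|_2$. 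Dividing by $\|\bv_2-\bv_1\|_2$ (the case $\bv_1=\bv_2$ being trivial) then yields $\|\bv^*(\bw)-\bv^*(\bw')\|_2\leq(L/\rho)\|\bw-\bw'\|_2$.

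I do not anticipate a real obstacle: this is a routine computation. The only points needing care are justifying the variational-inequality form of optimality (which is why convexity of $\vcal$ matters) and the sign bookkeeping when summing the two optimality inequalities and inserting the auxiliary gradient. If one wishes to sidestep normal-cone/subgradient technicalities, an equivalent route is to use the fixed-point characterization $\bv^*(\bw)=\mathrm{Proj}_{\vcal}\big(\bv^*(\bw)+\gamma\nabla_{\bv}\phi(\bw,\bv^*(\bw))\big)$ for a suitable $\gamma>0$ together with nonexpansiveness of $\mathrm{Proj}_{\vcal}$, which reaches the same bound.
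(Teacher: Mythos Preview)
Your argument is correct and is exactly the standard proof of this ``Lipschitz continuity of the argmax'' result. Note that the paper does not supply its own proof of this lemma at all---it merely states it with a citation to \citet{lin2020gradient}---so there is nothing to compare against; your variational-inequality-plus-strong-monotonicity route is the textbook derivation and matches what one finds in the cited reference.
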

We now prove Part (b). For any $S$, let $\bv_S^*=\arg\max_{\bv\in\vcal} F(A_{\bw}(S),\bv)$.
According to the symmetry between $z_i$ and $z_i'$ we know
\begin{align*}
    \ebb\big[\sup_{\bv'\in\vcal}F(A_{\bw}(S),\bv')\big]
    & =\frac{1}{n}\sum_{i=1}^n\ebb\big[\sup_{\bv'\in\vcal}F(A_{\bw}(S^{(i)}),\bv')\big]\\
    & = \frac{1}{n}\sum_{i=1}^n\ebb\big[F(A_{\bw}(S^{(i)}),\bv^*_{S^{(i)}})\big]
      = \frac{1}{n}\sum_{i=1}^n\ebb\big[f(A_{\bw}(S^{(i)}),\bv^*_{S^{(i)}};z_i)\big],
\end{align*}
where the last identity holds since $z_i$ is independent of $A_{\bw}(S^{(i)})$ and $\bv^*_{S^{(i)}}$.


According to Assumption \ref{ass:lipschitz}, we know
\begin{align}
  & f(A_{\bw}(S^{(i)}),\bv^*_{S^{(i)}};z_i) - f(A_{\bw}(S),\bv_S^*;z_i) \notag\\
  & = f(A_{\bw}(S^{(i)}),\bv^*_{S^{(i)}};z_i) - f(A_{\bw}(S^{(i)}),\bv_S^*;z_i) + f(A_{\bw}(S^{(i)}),\bv^*_{S};z_i) - f(A_{\bw}(S),\bv_S^*;z_i)\notag\\
  & \leq G \big\|A_{\bw}(S^{(i)})-A_{\bw}(S)\big\|_2+ G\big\|\bv^*_{S^{(i)}}-\bv_S^*\big\|_2
  \leq \big(1+L/\rho\big)G\big\|A_{\bw}(S^{(i)})-A_{\bw}(S)\big\|_2,\label{stab-gen-0}
\end{align}
where in the last inequality we have used Lemma \ref{lem:mengdi} due to the strong concavity of $\bv\mapsto F(\bw,\bv)$ for any $\bw$.
As a combination of the above two inequalities, we get
\begin{align}
\ebb\big[\sup_{\bv'\in\vcal}F(A_{\bw}(S),\bv')\big]&\leq \frac{1}{n}\sum_{i=1}^n\ebb\big[f(A_{\bw}(S),\bv_S^*;z_i)\big]+ \frac{\big(1+L/\rho\big)G}{n}\sum_{i=1}^n\ebb\big[\big\|A_{\bw}(S^{(i)})-A_{\bw}(S)\big\|_2\big]\notag\\
&=
\ebb\big[F_S(A_{\bw}(S),\bv_S^*)\big]+ \frac{\big(1+L/\rho\big)G}{n}\sum_{i=1}^n\ebb\big[\big\|A_{\bw}(S^{(i)})-A_{\bw}(S)\big\|_2\big]\notag\\
& \leq \ebb\big[\sup_{\bv'\in\vcal}F_S(A_{\bw}(S),\bv')\big]+ \frac{\big(1+L/\rho\big)G}{n}\sum_{i=1}^n\ebb\big[\big\|A_{\bw}(S^{(i)})-A_{\bw}(S)\big\|_2\big].\label{stab-gen-1}
\end{align}
The stated bound in Part (b) then follows.

\subsection{Proof of Part (c)}
In a similar way, one can show that
\[
\ebb\big[\inf_{\bw'\in\wcal}F_S(\bw',A_{\bv}(S))\big]-\ebb\big[\inf_{\bw'\in\wcal}F(\bw',A_{\bv}(S))\big]
\leq \frac{\big(1+L/\rho\big)G}{n}\sum_{i=1}^n\ebb\big[\|A_{\bv}(S^{(i)})-A_{\bv}(S)\|_2\big].
\]
The above inequality together with \eqref{stab-gen-1} then implies
\begin{align*}
& \ebb\big[\triangle^s(A_{\bw}(S),A_{\bv}(S))\big]
-\ebb\big[\triangle_S^s(A_{\bw}(S),A_{\bv}(S))\big]\\
& = \ebb\big[\sup_{\bv'\in\vcal}F(A_{\bw}(S),\bv')\big]-\ebb\big[\sup_{\bv'\in\vcal}F_S(A_{\bw}(S),\bv')\big]+\ebb\big[\inf_{\bw'\in\wcal}F_S(\bw',A_{\bv}(S))\big]-\ebb\big[\inf_{\bw'\in\wcal}F(\bw',A_{\bv}(S))\big]\\
&\leq
{\big(1+L/\rho\big)G}\ebb\big[\|A_{\bw}(S^{(i)})-A_{\bw}(S)\|_2\big]+
{\big(1+L/\rho\big)G}\ebb\big[\|A_{\bv}(S^{(i)})-A_{\bv}(S)\|_2\big]\\
& \leq {\big(1+L/\rho\big)G\sqrt{2}}\ebb\Big[\Big\|\begin{pmatrix}
A_{\bw}(S^{(i)})-A_{\bw}(S) \\
A_{\bv}(S^{(i)})-A_{\bv}(S)
\end{pmatrix}\Big\|_2\Big],
\end{align*}
where we have used the elementary inequality $a+b\leq \sqrt{2(a^2+b^2)}$.
This proves the stated bound in Part (c).

\subsection{Proof of Part (d)}
To prove Part (d) on high-probability bounds, we need to introduce some lemmas.

The following lemma establishes a concentration inequality for a summation of weakly-dependent random variables.
We denote by $S\backslash\{z_i\}$ the set $\{z_1,\ldots,z_{i-1},z_{i+1},\ldots,z_n\}$.
The $L_p$-norm of a random variable $Z$ is denoted by $\|Z\|_p:=\big(\ebb[|Z|^p]\big)^{\frac{1}{p}},p\geq1$.
\begin{lemma}[\citealt{bousquet2020sharper}\label{lem:feldman}]
  Let $S=\{z_1,\ldots,z_n\}$ be a set of independent random variables each taking values in $\zcal$ and $M>0$. Let $g_1,\ldots,g_n$ be some functions $g_i:\zcal^n\mapsto\rbb$ such that the following holds for any $i\in[n]$
  \begin{itemize}
    \item $\big|\ebb_{S\backslash\{z_i\}}[g_i(S)]\big|\leq M$ almost surely (a.s.),
    \item $\ebb_{z_i}\big[g_i(S)\big]=0$ a.s.,
    \item for any $j\in[n]$ with $j\neq i$, and $z_j''\in\zcal$
    \begin{equation}\label{bounded-increment}
      \big|g_i(S)-g_i(z_1,\ldots,z_{j-1},z_j'',z_{j+1},\ldots,z_n)\big|\leq\beta.
    \end{equation}
  \end{itemize}
  Then, for any $p\geq2$
  \[
  \Big\|\sum_{i=1}^{n}g_i(S)\Big\|_p\leq12\sqrt{6}pn\beta\lceil\log_2n\rceil+3\sqrt{2}M\sqrt{pn}.
  \]
\end{lemma}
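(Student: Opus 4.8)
The plan is to isolate the two terms of the bound separately: the $M\sqrt{pn}$ term will come from a genuinely \emph{independent} part of the sum, and the $pn\beta\lceil\log_2 n\rceil$ term from a residual with a strong mean‑zero structure, handled by a dyadic recursion over $\lceil\log_2 n\rceil$ scales. Two preliminary observations set this up. First, $\ebb\big[\sum_{i=1}^n g_i(S)\big]=\sum_i\ebb_{S\backslash\{z_i\}}\ebb_{z_i}[g_i(S)]=0$, so we are bounding a centered sum. Second, for each $i$ the quantity $g_i(S)-\ebb_{S\backslash\{z_i\}}[g_i(S)]$ equals the average, over an independent copy of $S\backslash\{z_i\}$, of an expression that differs from $g_i(S)$ only in the $n-1$ coordinates $j\neq i$; telescoping these changes and using \eqref{bounded-increment} gives $\big|g_i(S)-\ebb_{S\backslash\{z_i\}}[g_i(S)]\big|\le(n-1)\beta$ a.s., hence also $|g_i(S)|\le M+(n-1)\beta$ a.s.

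First I would peel off the independent part. Put $h_i(z_i):=\ebb_{S\backslash\{z_i\}}[g_i(S)]$, a function of $z_i$ alone with $|h_i|\le M$ and $\ebb_{z_i}[h_i]=0$; the variables $h_1(z_1),\dots,h_n(z_n)$ are independent and centered, so a Marcinkiewicz--Zygmund (Rosenthal) moment inequality for sums of independent bounded variables yields $\big\|\sum_{i=1}^n h_i\big\|_p\le 3\sqrt2\,M\sqrt{pn}$, with the universal constant tuned to match. Writing $\bar g_i:=g_i-h_i$, we still have $\ebb_{z_i}[\bar g_i]=0$ and property \eqref{bounded-increment} with the same $\beta$, and now in addition $\ebb_{S\backslash\{z_i\}}[\bar g_i]=0$ and $|\bar g_i|\le(n-1)\beta$. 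Thus it suffices to show, for such ``$M=0$'' families, that $\big\|\sum_i\bar g_i\big\|_p\le 12\sqrt6\,pn\beta\lceil\log_2 n\rceil$; adding this to the bound on $\sum_i h_i$ gives the claim.

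For the $M=0$ case I would argue by induction on $m$, where $n=2^m$ (pad with zero functions to reduce to this case, which only affects $\lceil\log_2 n\rceil$). Split $[n]=I\cup J$ into two halves and write, with $T:=\sum_i\bar g_i$ and $\ebb[T]=0$,
\[
T=\ebb\big[T\mid (z_k)_{k\in I}\big]+\ebb\big[T\mid(z_k)_{k\in J}\big]+R,\qquad R:=T-\ebb\big[T\mid (z_k)_{k\in I}\big]-\ebb\big[T\mid(z_k)_{k\in J}\big].
\]
Because $\ebb_{z_i}[\bar g_i]=0$, one checks $\ebb[\bar g_j\mid(z_k)_{k\in I}]=0$ for $j\in J$, so $\ebb[T\mid(z_k)_{k\in I}]=\sum_{i\in I}p_i$ with $p_i:=\ebb_{(z_k)_{k\in J}}[\bar g_i]$; the family $\{p_i\}_{i\in I}$ is \emph{itself} an ``$M=0$'' family of $|I|=n/2$ functions of $(z_k)_{k\in I}$ (its own‑variable conditional mean vanishes, its leave‑one‑out mean is $\ebb_{S\backslash\{z_i\}}[\bar g_i]=0$, and it has bounded differences $\beta$), so the induction hypothesis gives $\big\|\ebb[T\mid(z_k)_{k\in I}]\big\|_p\le 12\sqrt6\,p\tfrac n2\beta\lceil\log_2\tfrac n2\rceil$, and symmetrically for the $J$‑block. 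Finally $R$ is centered conditionally on $(z_k)_{k\in I}$ and conditionally on $(z_k)_{k\in J}$ separately (indeed $\ebb[R\mid z_k]=0$ for every single $k$), i.e.\ it is a ``completely degenerate'' second‑order function of the $n$ coordinates; combining this with \eqref{bounded-increment} one shows $\|R\|_p\le 12\sqrt6\,pn\beta$, and since $\lceil\log_2\tfrac n2\rceil+1=\lceil\log_2 n\rceil$ the three pieces add up to close the induction.

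The main obstacle is the estimate $\|R\|_p\le 12\sqrt6\,pn\beta$. A plain bounded‑differences (McDiarmid‑type) moment inequality for $R$ would only give $O\big(n^{3/2}\beta\sqrt p\big)$ — off by a $\sqrt n$ factor, which makes the recursion diverge — because the coordinate‑wise increments of $R$ can be as large as order $n\beta$. One has to genuinely exploit that $R$ has vanishing projections onto every single coordinate and onto each of the two half‑blocks, so that only the second‑ and higher‑order chaos survives; a moment bound for such functions (in the spirit of decoupling and hypercontractivity for degenerate $U$‑statistics) is what produces the linear dependence on $p$ and the gain of a full power of $n$. Pinning down the universal constants so that they close the induction exactly (hence the specific $12\sqrt6$ and $3\sqrt2$) is the remaining bookkeeping; altogether this is precisely the content of the cited result of \citet{bousquet2020sharper}.
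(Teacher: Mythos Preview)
The paper does not prove this lemma; it is quoted from \citet{bousquet2020sharper} and invoked as a black box in the proof of Part~(d) of Theorem~\ref{thm:stab-gen}. There is therefore no argument in this paper to compare against.

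As a sketch of the proof in \citet{bousquet2020sharper}, your outline is faithful in structure: peeling off the independent sum $\sum_i h_i$ with $h_i=\ebb_{S\setminus\{z_i\}}[g_i(S)]$ to obtain the $M\sqrt{pn}$ term via a Rosenthal--Marcinkiewicz--Zygmund inequality, and then handling the residual ``$M=0$'' family by a bisection recursion over $\lceil\log_2 n\rceil$ levels, is exactly their strategy. Your verification that $p_i:=\ebb_{(z_k)_{k\in J}}[\bar g_i]$ again satisfies all three hypotheses (own-coordinate mean zero, leave-one-out mean zero, bounded differences~$\beta$) on the half index set is correct and is what makes the induction go through. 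You also correctly identify the only substantive gap in the sketch: the bound $\|R\|_p\le C\,pn\beta$ for the remainder $R=T-\ebb[T\mid(z_k)_{k\in I}]-\ebb[T\mid(z_k)_{k\in J}]$, which is centered conditionally on each half-block. In \citet{bousquet2020sharper} this is a separate moment lemma for bounded-difference functions centered in two independent blocks, and it is indeed where the linear $p$ dependence (rather than $\sqrt{p}$) and the extra $\sqrt{n}$ saving over a na\"{\i}ve McDiarmid moment bound come from. Your remark that plain bounded differences would give $O(n^{3/2}\beta\sqrt{p})$ and fail to close the recursion is the right diagnosis of why that additional structure is essential.
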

The following lemma shows how to relate moment bounds of random variables to tail behavior.
\begin{lemma}[\citealt{bousquet2020sharper,vershynin2018high}]\label{lem:tail-moment}
  Let $a,b\in\rbb_+$ and $\delta\in(0,1/e)$. Let $Z$ be a random variable with $\|Z\|_p\leq\sqrt{p}a+pb$ for any $p\geq2$. Then with probability at least $1-\delta$
  \[
  |Z|\leq e\Big(a\sqrt{\log(e/\delta)}+b\log(e/\delta)\Big).
  \]
\end{lemma}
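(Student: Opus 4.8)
The plan is to prove this as a standard moment-to-tail conversion: apply Markov's inequality to the $p$-th moment of $|Z|$, then optimize over the free order $p$. Concretely, for any $t>0$ and any $p\geq2$, Markov's inequality gives
\[
\pbb\big(|Z|\geq t\big)=\pbb\big(|Z|^p\geq t^p\big)\leq\frac{\ebb[|Z|^p]}{t^p}=\Big(\frac{\|Z\|_p}{t}\Big)^p\leq\Big(\frac{\sqrt{p}\,a+pb}{t}\Big)^p,
\]
where the last step invokes the hypothesis $\|Z\|_p\leq\sqrt{p}\,a+pb$.

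First I would choose $t$ so that the base of the exponent becomes $1/e$, namely $t=e(\sqrt{p}\,a+pb)$, which yields $\pbb(|Z|\geq t)\leq e^{-p}$. It then suffices to pick $p$ with $e^{-p}\leq\delta$ subject to $p\geq2$; the natural choice is $p=\log(e/\delta)=1+\log(1/\delta)$. Since $\delta\in(0,1/e)$ we have $\log(1/\delta)>1$, hence $p>2$ and the moment hypothesis is in force; moreover $e^{-p}=e^{-\log(e/\delta)}=\delta/e\leq\delta$. Substituting $p=\log(e/\delta)$ into $t=e(\sqrt{p}\,a+pb)$ gives precisely $t=e\big(a\sqrt{\log(e/\delta)}+b\log(e/\delta)\big)$, so on the complementary event, which has probability at least $1-\delta$, we obtain $|Z|\leq e\big(a\sqrt{\log(e/\delta)}+b\log(e/\delta)\big)$.

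The only point requiring care is that the moment control is assumed only for $p\geq2$, so the selected order must stay in that range; this is exactly why the statement restricts to $\delta<1/e$, equivalently $\log(e/\delta)>2$. Beyond this bookkeeping I do not anticipate any real obstacle: the argument is the classical device of converting uniform-in-$p$ polynomial moment bounds into a single deviation inequality, and the leading constant $e$ is merely an artifact of pinning the exponent base to $1/e$; one could shave it by minimizing $(\sqrt{p}\,a+pb)\delta^{-1/p}$ over $p$ exactly, but the clean stated form is all that is needed downstream, where it is combined with Lemma \ref{lem:feldman} to yield the high-probability stability-based generalization bounds.
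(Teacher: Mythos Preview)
Your argument is correct and is exactly the standard moment-to-tail conversion that underlies this lemma: Markov at order $p$, set $t=e(\sqrt{p}\,a+pb)$ to make the base $1/e$, then pick $p=\log(e/\delta)$, with the restriction $\delta<1/e$ ensuring $p>2$. The paper does not supply its own proof of this lemma but simply cites it from \citet{bousquet2020sharper,vershynin2018high}, where the same device is used; so your proposal matches the intended approach.
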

With the above lemmas we are now ready to prove Part (d). For any $S$, denote
\begin{equation}\label{bv-star}
    \bv_S^*=\arg\max_{\bv\in\vcal} F(A_{\bw}(S),\bv).
\end{equation}
We have the following error decomposition
\begin{multline*}
  nF(A_{\bw}(S),\bv_S^*)-n\sup_{\bv'\in\vcal}F_S(A_{\bw}(S),\bv) = \sum_{i=1}^{n}\ebb_Z\big[f(A_{\bw}(S),\bv_S^*;Z)-\ebb_{z_i'}[f(A_{\bw}(S^{(i)}),\bv_{S^{(i)}}^*;Z)]\big]+ \\
  \sum_{i=1}^{n}\ebb_{z_i'}\Big[\ebb_Z[f(A_{\bw}(S^{(i)}),\bv_{S^{(i)}}^*;Z)]-f(A_{\bw}(S^{(i)}),\bv_{S^{(i)}}^*;z_i)\Big]+
  \sum_{i=1}^{n}\ebb_{z_i'}\Big[f(A_{\bw}(S^{(i)}),\bv_{S^{(i)}}^*;z_i)\Big]-n\sup_{\bv'\in\vcal}F_S(A_{\bw}(S),\bv).
\end{multline*}
By the definition of $\bv_{S^{(i)}}^*$ we know $\ebb_Z[f(A_{\bw}(S^{(i)}),\bv_{S^{(i)}}^*;Z)]\geq \ebb_Z[f(A_{\bw}(S^{(i)}),\bv_{S}^*;Z)]$.
It then follows that
\begin{multline*}
  nF(A_{\bw}(S),\bv_S^*)-n\sup_{\bv'\in\vcal}F_S(A_{\bw}(S),\bv') \leq \sum_{i=1}^{n}\ebb_Z\big[f(A_{\bw}(S),\bv_S^*;Z)-\ebb_{z_i'}[f(A_{\bw}(S^{(i)}),\bv_{S}^*;Z)]\big] + \\
  \sum_{i=1}^{n}\ebb_{z_i'}\Big[\ebb_Z[f(A_{\bw}(S^{(i)}),\bv_{S^{(i)}}^*;Z)]-f(A_{\bw}(S^{(i)}),\bv_{S^{(i)}}^*;z_i)\Big]+
  \sum_{i=1}^{n}\ebb_{z_i'}\Big[f(A_{\bw}(S^{(i)}),\bv_{S^{(i)}}^*;z_i)\Big]-n\sup_{\bv'\in\vcal}F_S(A_{\bw}(S),\bv').
\end{multline*}
According to \eqref{stab-gen-0}, we know
\begin{align*}
  \sum_{i=1}^{n}\ebb_{z_i'}\Big[f(A_{\bw}(S^{(i)}),\bv_{S^{(i)}}^*;z_i)\Big] &
  \leq \big(1+L/\rho\big)G\sum_{i=1}^n\ebb_{z_i'}\big[\big\|A_{\bw}(S^{(i)})-A_{\bw}(S)\big\|_2\big] + \sum_{i=1}^nf(A_{\bw}(S),\bv_S^*;z_i)\\
  & = \big(1+L/\rho\big)G\sum_{i=1}^n\ebb_{z_i'}\big[\big\|A_{\bw}(S^{(i)})-A_{\bw}(S)\big\|_2\big] + nF_S(A_{\bw}(S),\bv_S^*)\\
  & \leq \big(1+L/\rho\big)G\sum_{i=1}^n\ebb_{z_i'}\big[\big\|A_{\bw}(S^{(i)})-A_{\bw}(S)\big\|_2\big] + n\sup_{\bv'\in\vcal}F_S(A_{\bw}(S),\bv').
\end{align*}
As a combination of the above two inequalities, we derive
\begin{equation}\label{gen-stab-6}
nF(A_{\bw}(S),\bv_S^*)-n\sup_{\bv'\in\vcal}F_S(A_{\bw}(S),\bv')\leq \big(2+L/\rho\big)nG\epsilon+\sum_{i=1}^{n}g_i(S),
\end{equation}
where we introduce
\[
g_i(S)=\ebb_{z_i'}\Big[\ebb_Z[f(A_{\bw}(S^{(i)}),\bv_{S^{(i)}}^*;Z)]-f(A_{\bw}(S^{(i)}),\bv_{S^{(i)}}^*;z_i)\Big]
\]
and use the inequality \[f(A_{\bw}(S),\bv_S^*;Z)-f(A_{\bw}(S^{(i)}),\bv_{S}^*;Z)\leq G\|A_{\bw}(S)-A_{\bw}(S^{(i)})\|_2\leq G\epsilon.\]
Due to the symmetry between $z_i$ and $Z$, we know $\ebb_{z_i}[g_i(S)]=0$. The inequality $|\ebb_{S\backslash\{z_i\}}[g_i(S)]|\leq 2R$ is also clear.

For any $j\neq i$ and any $z_j''\in\zcal$, we know
\begin{align*}
  & \Big|\ebb_{z_i'}\Big[\ebb_Z[f(A_{\bw}(S^{(i)}),\bv_{S^{(i)}}^*;Z)]-f(A_{\bw}(S^{(i)}),\bv_{S^{(i)}}^*;z_i)\Big]
  -\ebb_{z_i'}\Big[\ebb_Z[f(A_{\bw}(S_j^{(i)}),\bv_{S_j^{(i)}}^*;Z)]-f(A_{\bw}(S_j^{(i)}),\bv_{S_j^{(i)}}^*;z_i)\Big]\Big|  \\
   & \leq \Big|\ebb_{z_i'}\Big[\ebb_Z[f(A_{\bw}(S^{(i)}),\bv_{S^{(i)}}^*;Z)]-\ebb_Z[f(A_{\bw}(S_j^{(i)}),\bv_{S_j^{(i)}}^*;Z)]\Big]\Big|+
   \Big|\ebb_{z_i'}\Big[f(A_{\bw}(S^{(i)}),\bv_{S^{(i)}}^*;z_i)-f(A_{\bw}(S_j^{(i)}),\bv_{S_j^{(i)}}^*;z_i)\Big]\Big|,
\end{align*}
where $S_j^{(i)}$ is the set derived by replacing the $j$-th element of $S^{(i)}$ with $z_j''$. For any $z$, there holds
\begin{align*}
  & \big|f(A_{\bw}(S^{(i)}),\bv_{S^{(i)}}^*;z)-f(A_{\bw}(S_j^{(i)}),\bv_{S_j^{(i)}}^*;z)\big| \\
  & \leq \big|f(A_{\bw}(S^{(i)}),\bv_{S^{(i)}}^*;z)-f(A_{\bw}(S^{(i)}),\bv_{S_j^{(i)}}^*;z)\big|
  + \big|f(A_{\bw}(S^{(i)}),\bv_{S_j^{(i)}}^*;z)-f(A_{\bw}(S_j^{(i)}),\bv_{S_j^{(i)}}^*;z)\big|\\
  & \leq G\|\bv_{S^{(i)}}^*-\bv_{S_j^{(i)}}^*\|_2+G\|A_{\bw}(S^{(i)})-A_{\bw}(S_j^{(i)})\|_2\leq \big(L/\rho+1\big)G\|A_{\bw}(S^{(i)})-A_{\bw}(S_j^{(i)})\|_2,
\end{align*}
where in the last inequality we have used the definition of $\bv_{S^{(i)}}^*$ and Lemma \ref{lem:mengdi} with $\phi=F$. Therefore $g_i(S)$ satisfies the condition \eqref{bounded-increment} with $\beta=\big(L/\rho+1\big)G\epsilon$. Therefore, all the conditions of Lemma \ref{lem:feldman} hold and we can apply Lemma \ref{lem:feldman} to derive the following inequality for any $p\geq2$
\[
\Big\|\sum_{i=1}^{n}g_i(S)\Big\|_p\leq12\sqrt{6}pn\big(L/\rho+1\big)G\epsilon\lceil\log_2n\rceil+6\sqrt{2}R\sqrt{pn}.
\]
This together with Lemma \ref{lem:tail-moment} implies the following inequality with probability $1-\delta$
\[
\Big|\sum_{i=1}^{n}g_i(S)\Big|\leq e\Big(6R\sqrt{2n\log(e/\delta)}+12\sqrt{6}n\big(L/\rho+1\big)G\epsilon\log(e/\delta)\lceil\log_2n\rceil\Big).
\]
We can plug the above inequality back into \eqref{gen-stab-6} and derive the following inequality with probability at least $1-\delta$
\[
F(A_{\bw}(S),\bv_S^*)-\sup_{\bv'\in\vcal}F_S(A_{\bw}(S),\bv')\leq \big(2+L/\rho\big)G\epsilon+
e\Big(6R\sqrt{2n^{-1}\log(e/\delta)}+12\sqrt{6}\big(L/\rho+1\big)G\epsilon\log(e/\delta)\lceil\log_2n\rceil\Big).
\]
This proves the stated bound in Part (d).
\subsection{Proof of Part (e)}
Part (e) is standard in the literature~\citep{bousquet2020sharper}.

\section{Proof of Theorem \ref{lem:stab-gda}\label{sec:proof-stab-gda}}
In this section, we present the proof of Theorem \ref{lem:stab-gda} on the argument stability of SGDA.
\subsection{Approximate Nonexpansiveness of Gradient Map}
To prove stability bounds, we need to study the expansiveness of the gradient map
\[
G_{f,\eta}:\begin{pmatrix}
  \bw \\
  \bv
\end{pmatrix}\mapsto \begin{pmatrix}
           \bw-\eta\nabla_{\bw}f(\bw,\bv) \\
           \bv+\eta\nabla_{\bv}f(\bw,\bv)
         \end{pmatrix}
\]
associated with a (strongly) convex-concave $f$. The following lemma shows that $G_{f,\eta}$ is approximately nonexpansive in both the Lipschitz continuous case and the smooth case. It also shows that $G_{f,\eta}$ is nonexpansive if $f$ is SC-SC and the step size is small. Part (b) can be found in \citet{farnia2020train}.
\begin{lemma}\label{lem:nonexpansive}
  Let $f$ be $\rho$-SC-SC with $\rho\geq0$.
  \begin{enumerate}[label=(\alph*)]
    \item If Assumption \ref{ass:lipschitz} holds, then
    \[
    \bigg\|\begin{pmatrix}
           \bw-\eta\nabla_{\bw}f(\bw,\bv) \\
           \bv+\eta\nabla_{\bv}f(\bw,\bv)
         \end{pmatrix}-\begin{pmatrix}
                         \bw'-\eta\nabla_{\bw}f(\bw',\bv') \\
                         \bv'+\eta\nabla_{\bv}f(\bw',\bv')
                       \end{pmatrix}\bigg\|_2^2\leq (1-2\rho\eta)\bigg\|\begin{pmatrix}
                                                           \bw-\bw' \\
                                                           \bv-\bv'
                                                         \end{pmatrix}\bigg\|_2^2+8G^2\eta^2.
  \]

  \item  If Assumption \ref{ass:smooth} holds, then
     \[
    \bigg\|\begin{pmatrix}
           \bw-\eta\nabla_{\bw}f(\bw,\bv) \\
           \bv+\eta\nabla_{\bv}f(\bw,\bv)
         \end{pmatrix}-\begin{pmatrix}
                         \bw'-\eta\nabla_{\bw}f(\bw',\bv') \\
                         \bv'+\eta\nabla_{\bv}f(\bw',\bv')
                       \end{pmatrix}\bigg\|_2^2\leq (1-2\rho\eta+L^2\eta^2)\bigg\|\begin{pmatrix}
                                                           \bw-\bw' \\
                                                           \bv-\bv'
                                                         \end{pmatrix}\bigg\|_2^2.
  \]
  \end{enumerate}
\end{lemma}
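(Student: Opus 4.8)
The plan is to expand the squared Euclidean norm on the left, isolate the single cross term, and control it via the strong monotonicity of the saddle-point gradient operator $(\bw,\bv)\mapsto(\nabla_{\bw}f(\bw,\bv),-\nabla_{\bv}f(\bw,\bv))$. Concretely, write $\Delta_{\bw}=\bw-\bw'$, $\Delta_{\bv}=\bv-\bv'$, $a=\nabla_{\bw}f(\bw,\bv)-\nabla_{\bw}f(\bw',\bv')$ and $b=\nabla_{\bv}f(\bw,\bv)-\nabla_{\bv}f(\bw',\bv')$. Then
\[
\Big\|\binom{\Delta_{\bw}-\eta a}{\Delta_{\bv}+\eta b}\Big\|_2^2=\|\Delta_{\bw}\|_2^2+\|\Delta_{\bv}\|_2^2-2\eta\big(\langle\Delta_{\bw},a\rangle-\langle\Delta_{\bv},b\rangle\big)+\eta^2\big(\|a\|_2^2+\|b\|_2^2\big).
\]
The heart of the argument is the claim that $\rho$-SC-SC implies $\langle\Delta_{\bw},a\rangle-\langle\Delta_{\bv},b\rangle\ge\rho\big(\|\Delta_{\bw}\|_2^2+\|\Delta_{\bv}\|_2^2\big)$.

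To establish this I would add four first-order (strong) convexity/concavity inequalities: $\rho$-strong convexity of $\bw\mapsto f(\bw,\bv)$ with base point $\bw$ and test point $\bw'$; $\rho$-strong convexity of $\bw\mapsto f(\bw,\bv')$ with base point $\bw'$ and test point $\bw$; $\rho$-strong concavity of $\bv\mapsto f(\bw,\bv)$ (equivalently $\rho$-strong convexity of $-f$) with base point $\bv$ and test point $\bv'$; and $\rho$-strong concavity of $\bv\mapsto f(\bw',\bv)$ with base point $\bv'$ and test point $\bv$. When these are summed, the function-value terms on both sides telescope to zero, the gradient inner products recombine exactly into $-\big(\langle\Delta_{\bw},a\rangle-\langle\Delta_{\bv},b\rangle\big)$, and the curvature terms contribute $+\rho(\|\Delta_{\bw}\|_2^2+\|\Delta_{\bv}\|_2^2)$; since the summed left side is $0$, the desired strong-monotonicity inequality follows. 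Substituting it back in gives, for any $\rho\ge0$,
\[
\Big\|\binom{\Delta_{\bw}-\eta a}{\Delta_{\bv}+\eta b}\Big\|_2^2\le(1-2\rho\eta)\Big\|\binom{\Delta_{\bw}}{\Delta_{\bv}}\Big\|_2^2+\eta^2\big(\|a\|_2^2+\|b\|_2^2\big).
\]

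It then remains only to bound $\|a\|_2^2+\|b\|_2^2$. For part (a), the triangle inequality together with Assumption~\ref{ass:lipschitz} gives $\|a\|_2\le 2G$ and $\|b\|_2\le 2G$, hence $\eta^2(\|a\|_2^2+\|b\|_2^2)\le 8G^2\eta^2$, which is the stated additive term. For part (b), Assumption~\ref{ass:smooth} applied to the stacked gradient difference gives $\|a\|_2^2+\|b\|_2^2=\big\|\binom{a}{b}\big\|_2^2\le L^2\big\|\binom{\Delta_{\bw}}{\Delta_{\bv}}\big\|_2^2$, turning the last term into $L^2\eta^2\|\cdot\|_2^2$ and producing the contraction factor $1-2\rho\eta+L^2\eta^2$.

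I expect the only genuinely delicate step to be the four-inequality summation: one must pick exactly the right base and test points (and keep track of signs) in each strong-convexity/concavity bound so that the function values cancel cleanly and the gradient terms assemble with the correct orientation; everything after that is routine algebra plus a one-line invocation of the Lipschitz or smoothness assumption. (Part (b) is already recorded in \citet{farnia2020train}, so in effect only the nonsmooth case (a) needs the full argument, though the computation above handles both at once.)
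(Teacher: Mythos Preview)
Your proposal is correct and follows essentially the same route as the paper: expand the square, use the $\rho$-strong monotonicity of the operator $(\nabla_{\bw}f,-\nabla_{\bv}f)$ to handle the cross term, and then bound the quadratic gradient term via Assumption~\ref{ass:lipschitz} (part (a)) or Assumption~\ref{ass:smooth} (part (b)). The only cosmetic difference is that the paper packages the monotonicity inequality as a separate lemma (Lemma~\ref{lem:non-expansive-key}, cited from \citet{rockafellar1976monotone}) rather than deriving it inline by summing four strong-convexity/concavity inequalities as you do; your derivation of that lemma is correct.
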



To prove Lemma \ref{lem:nonexpansive} we require the following standard lemma~\citep{rockafellar1976monotone}. 
\begin{lemma}\label{lem:non-expansive-key}
  Let $f$ be a $\rho$-SC-SC function, $\rho\geq0$. Then
  \begin{equation}\label{non-expansive-key}
  \bigg\langle\begin{pmatrix}
    \bw-\bw' \\
    \bv-\bv'
  \end{pmatrix},\begin{pmatrix}
                  \nabla_{\bw}f(\bw,\bv)-\nabla_{\bw}f(\bw',\bv') \\
                  \nabla_{\bv}f(\bw',\bv')-\nabla_{\bv}f(\bw,\bv)
                \end{pmatrix}\bigg\rangle \geq \rho\bigg\|\begin{pmatrix}
                                                           \bw-\bw' \\
                                                           \bv-\bv'
                                                         \end{pmatrix}\bigg\|_2^2.
  \end{equation}
\end{lemma}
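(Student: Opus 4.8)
The plan is to read off the inequality directly from the first-order characterizations of $\rho$-strong convexity and $\rho$-strong concavity, evaluated at the four ``corner'' configurations $(\bw,\bv)$, $(\bw',\bv')$, $(\bw',\bv)$ and $(\bw,\bv')$. First I would note that the left-hand side of \eqref{non-expansive-key} expands as
\[
\langle \bw-\bw',\,\nabla_{\bw}f(\bw,\bv)-\nabla_{\bw}f(\bw',\bv')\rangle+\langle \bv-\bv',\,\nabla_{\bv}f(\bw',\bv')-\nabla_{\bv}f(\bw,\bv)\rangle,
\]
so it suffices to produce a lower bound for the first summand and an upper bound (with the opposite sign) for $\langle \bv-\bv',\nabla_{\bv}f(\bw,\bv)-\nabla_{\bv}f(\bw',\bv')\rangle$, each in terms of $\rho$ and a common ``corner'' combination of function values that will cancel. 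Equivalently, one could first subtract the quadratic $\tfrac{\rho}{2}\big(\|\bw\|_2^2-\|\bv\|_2^2\big)$ so that $f$ becomes convex--concave, prove the $\rho=0$ case (monotonicity of the saddle-point operator), and then add the quadratic back; I will describe the direct route.

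Next I would record four first-order inequalities. Using that $\bw\mapsto f(\bw,\bv)$ and $\bw\mapsto f(\bw,\bv')$ are $\rho$-strongly convex,
\[
f(\bw',\bv)\ge f(\bw,\bv)+\langle\nabla_{\bw}f(\bw,\bv),\bw'-\bw\rangle+\tfrac{\rho}{2}\|\bw-\bw'\|_2^2,
\]
\[
f(\bw,\bv')\ge f(\bw',\bv')+\langle\nabla_{\bw}f(\bw',\bv'),\bw-\bw'\rangle+\tfrac{\rho}{2}\|\bw-\bw'\|_2^2,
\]
and using that $\bv\mapsto f(\bw,\bv)$ and $\bv\mapsto f(\bw',\bv)$ are $\rho$-strongly concave,
\[
f(\bw,\bv')\le f(\bw,\bv)+\langle\nabla_{\bv}f(\bw,\bv),\bv'-\bv\rangle-\tfrac{\rho}{2}\|\bv-\bv'\|_2^2,
\]
\[
f(\bw',\bv)\le f(\bw',\bv')+\langle\nabla_{\bv}f(\bw',\bv'),\bv-\bv'\rangle-\tfrac{\rho}{2}\|\bv-\bv'\|_2^2.
\]
Adding the first two and rearranging gives $\langle\nabla_{\bw}f(\bw,\bv)-\nabla_{\bw}f(\bw',\bv'),\bw-\bw'\rangle\ge C+\rho\|\bw-\bw'\|_2^2$, and adding the last two and rearranging gives $\langle\nabla_{\bv}f(\bw,\bv)-\nabla_{\bv}f(\bw',\bv'),\bv-\bv'\rangle\le C-\rho\|\bv-\bv'\|_2^2$, where in both cases $C=f(\bw,\bv)+f(\bw',\bv')-f(\bw',\bv)-f(\bw,\bv')$ is the same corner combination. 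Subtracting the second estimate from the first makes $C$ cancel and leaves exactly $\rho\big(\|\bw-\bw'\|_2^2+\|\bv-\bv'\|_2^2\big)$, which is the claimed bound; the case $\rho=0$ is included and recovers the monotonicity used later for convex--concave $f$.

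The only place that needs care is the sign bookkeeping: the minimization block contributes $+\rho$ while the maximization block contributes $-\rho$, and one must check that the four function values at the corners enter the two intermediate estimates with identical signs so that they cancel. Once the decomposition through the mixed points $(\bw',\bv)$ and $(\bw,\bv')$ is set up, the rest is routine algebra. I would also remark that in the nonsmooth regime $\nabla_{\bw}f$ and $\nabla_{\bv}f$ should be read as subgradients of the convex block and negated subgradients of the concave block; the four first-order inequalities above hold verbatim for any such selection, so the argument is unchanged, which is consistent with citing the monotone-operator result of \citet{rockafellar1976monotone}.
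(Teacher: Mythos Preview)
Your proof is correct: the four first-order inequalities are stated accurately, the common corner term $C=f(\bw,\bv)+f(\bw',\bv')-f(\bw',\bv)-f(\bw,\bv')$ indeed cancels upon subtraction, and the sign bookkeeping is handled properly. The paper does not actually prove this lemma; it simply records it as a standard fact and cites \citet{rockafellar1976monotone}, so your explicit four-point argument supplies what the paper omits and is entirely in the spirit of the cited reference.
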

\begin{proof}[Proof of Lemma \ref{lem:nonexpansive}]
It is clear that
\begin{multline*}
  A:=\bigg\|\begin{pmatrix}
           \bw-\eta\nabla_{\bw}f(\bw,\bv) \\
           \bv+\eta\nabla_{\bv}f(\bw,\bv)
         \end{pmatrix}-\begin{pmatrix}
                         \bw'-\eta\nabla_{\bw}f(\bw',\bv') \\
                         \bv'+\eta\nabla_{\bv}f(\bw',\bv')
                       \end{pmatrix}\bigg\|_2^2 = \bigg\|\begin{pmatrix}
                                                           \bw-\bw' \\
                                                           \bv-\bv'
                                                         \end{pmatrix}\bigg\|_2^2 \\
  + \eta^2\bigg\|\begin{pmatrix}
  \nabla_{\bw}f(\bw',\bv') - \nabla_{\bw}f(\bw,\bv)\\
  \nabla_{\bv}f(\bw,\bv) - \nabla_{\bv}f(\bw',\bv')
  \end{pmatrix}\bigg\|_2^2-2\eta\bigg\langle\begin{pmatrix}
                      \bw-\bw' \\
                      \bv-\bv'
                    \end{pmatrix},\begin{pmatrix}
                                    \nabla_{\bw}f(\bw,\bv)-\nabla_{\bw}f(\bw',\bv') \\
                                    \nabla_{\bv}f(\bw',\bv')-\nabla_{\bv}f(\bw,\bv)
                                  \end{pmatrix}\bigg\rangle.
\end{multline*}
Plugging \eqref{non-expansive-key} to the above inequality, we derive
\[
  A\leq (1-2\rho\eta)\bigg\|\begin{pmatrix}
  \bw-\bw' \\
  \bv-\bv'
  \end{pmatrix}\bigg\|_2^2
  + \eta^2\bigg\|\begin{pmatrix}
  \nabla_{\bw}f(\bw',\bv') - \nabla_{\bw}f(\bw,\bv)\\
  \nabla_{\bv}f(\bw,\bv) - \nabla_{\bv}f(\bw',\bv')
  \end{pmatrix}\bigg\|_2^2.
\]
We can combine the above inequality with the Lipschitz continuity to derive Part (a). We refer the interested readers to \citet{farnia2020train} for the proof of Part (b).
\end{proof}

We now prove Theorem \ref{lem:stab-gda}.
Let $S=\{z_1,\ldots,z_n\}$ and $S'=\{z_1,\ldots,z_{n-1},z_n'\}$. Let $\{\bw_t,\bv_t\}$ and $\{\bw_t',\bv_t'\}$ be the sequence produced by \eqref{gda} w.r.t. $S$ and $S'$, respectively.
\subsection{Proof of Part (a)}
We first prove Part (a).
Note that the projection step is nonexpansive.
We consider two cases at the $t$-th iteration. If $i_t\neq n$, then it follows from Part (a) of Lemma \ref{lem:nonexpansive} that
\begin{align}
  \left\|\begin{pmatrix}
           \bw_{t+1}-\bw_{t+1}' \\
           \bv_{t+1}-\bv_{t+1}'
         \end{pmatrix}\right\|_2^2
         &\leq\left\|\begin{pmatrix}
                   \bw_t-\eta_t\nabla_{\bw}f(\bw_t,\bv_t;z_{i_t})-\bw_t'+\eta_t\nabla_{\bw}f(\bw_t',\bv'_t;z_{i_t}) \\
                   \bv_t+\eta_t\nabla_{\bv}f(\bw_t,\bv_t;z_{i_t})-\bv_t'-\eta_t\nabla_{\bv}f(\bw_t',\bv'_t;z_{i_t})
                 \end{pmatrix}\right\|_2^2\notag \\
         & \leq \left\|\begin{pmatrix}
           \bw_{t}-\bw_{t}' \\
           \bv_{t}-\bv_{t}'
         \end{pmatrix}\right\|_2^2+8G^2\eta_t^2.\label{stab-gda-1}
\end{align}
If $i_t=n$, then it follows from the elementary inequality $(a+b)^2\leq(1+p)a^2+(1+1/p)b^2$ ($p>0$) that
\begin{align}
  \left\|\begin{pmatrix}
           \bw_{t+1}-\bw_{t+1}' \\
           \bv_{t+1}-\bv_{t+1}'
         \end{pmatrix}\right\|_2^2
         &\leq\left\|\begin{pmatrix}
                   \bw_t-\eta_t\nabla_{\bw}f(\bw_t,\bv_t;z_{n})-\bw_t'+\eta_t\nabla_{\bw}f(\bw_t',\bv'_t;z'_{n}) \\
                   \bv_t+\eta_t\nabla_{\bv}f(\bw_t,\bv_t;z_{n})-\bv_t'-\eta_t\nabla_{\bv}f(\bw_t',\bv'_t;z'_{n})
                 \end{pmatrix}\right\|_2^2\notag\\
         & \leq (1+p)\left\|\begin{pmatrix}
           \bw_{t}-\bw_{t}' \\
           \bv_{t}-\bv_{t}'
         \end{pmatrix}\right\|_2^2
        +(1+1/p)\eta_t^2\left\|\begin{pmatrix}
                                 \nabla_{\bw}f(\bw_t,\bv_t;z_n)-\nabla_{\bw}f(\bw_t',\bv_t';z'_n) \\
                                 \nabla_{\bv}f(\bw_t,\bv_t;z_n)-\nabla_{\bv}f(\bw_t',\bv_t';z'_n)
                               \end{pmatrix}\right\|_2^2.\label{stab-gda-2}
\end{align}
Note that the event $i_t\neq n$ happens with probability $1-1/n$ and the event $i_t=n$ happens with probability $1/n$. Therefore,
we know
\begin{align*}
 \ebb_{i_t}\left[\left\|\begin{pmatrix}
           \bw_{t+1}-\bw_{t+1}' \\
           \bv_{t+1}-\bv_{t+1}'
         \end{pmatrix}\right\|_2^2\right] & \leq \frac{n-1}{n}\left(\left\|\begin{pmatrix}
           \bw_{t}-\bw_{t}' \\
           \bv_{t}-\bv_{t}'
         \end{pmatrix}\right\|_2^2+8G^2\eta_t^2\right) + \frac{1+p}{n}\left\|\begin{pmatrix}
           \bw_{t}-\bw_{t}' \\
           \bv_{t}-\bv_{t}'
         \end{pmatrix}\right\|_2^2+\frac{8(1+1/p)}{n}\eta_t^2G^2 \\
         & = (1+p/n)\left\|\begin{pmatrix}
           \bw_{t}-\bw_{t}' \\
           \bv_{t}-\bv_{t}'
         \end{pmatrix}\right\|_2^2+8\eta_t^2G^2(1+1/(np)).
\end{align*}
Applying this inequality recursively implies that
\begin{align*}
  \ebb_A\left[\left\|\begin{pmatrix}
           \bw_{t+1}-\bw_{t+1}' \\
           \bv_{t+1}-\bv_{t+1}'
         \end{pmatrix}\right\|_2^2\right] & \leq 8\eta^2G^2\big(1+1/(np)\big)\sum_{k=1}^{t}\Big(1+\frac{p}{n}\Big)^{t-k}
   = 8\eta^2G^2\Big(1+\frac{1}{np}\Big)\frac{n}{p}\Big(\Big(1+\frac{p}{n}\Big)^t-1\Big)\\
  & = 8\eta^2G^2\Big(\frac{n}{p}+\frac{1}{p^2}\Big)\Big(\Big(1+\frac{p}{n}\Big)^t-1\Big).
\end{align*}
By taking $p=n/t$ in the above inequality and using $(1+1/t)^t\leq e$, we get
\[
\ebb_A\left[\left\|\begin{pmatrix}
           \bw_{t+1}-\bw_{t+1}' \\
           \bv_{t+1}-\bv_{t+1}'
         \end{pmatrix}\right\|_2^2\right]\leq 16\eta^2G^2\Big(t+\frac{t^2}{n^2}\Big).
\]
The stated bound then follows by Jensen's inequality.

\subsection{Proof of Part (b)}
We now prove Part (b). Analogous to \eqref{stab-gda-1}, we can use Part (b) of  Lemma \ref{lem:nonexpansive} to derive
\[
  \left\|\begin{pmatrix}
           \bw_{t+1}-\bw_{t+1}' \\
           \bv_{t+1}-\bv_{t+1}'
         \end{pmatrix}\right\|_2^2\leq
         (1+L^2\eta_t^2) \left\|\begin{pmatrix}
           \bw_{t}-\bw_{t}' \\
           \bv_{t}-\bv_{t}'
         \end{pmatrix}\right\|_2^2
\]
in the case $i_{t}\neq n$.
We can combine the above inequality and \eqref{stab-gda-2} to derive
\begin{align*}
  \ebb_{i_t}\left[\left\|\begin{pmatrix}
           \bw_{t+1}-\bw_{t+1}' \\
           \bv_{t+1}-\bv_{t+1}'
         \end{pmatrix}\right\|_2^2\right] & \leq \frac{(n-1)(1+L^2\eta_t^2)}{n} \left\|\begin{pmatrix}
           \bw_{t}-\bw_{t}' \\
           \bv_{t}-\bv_{t}'
         \end{pmatrix}\right\|_2^2 + \frac{1+p}{n}\left\|\begin{pmatrix}
           \bw_{t}-\bw_{t}' \\
           \bv_{t}-\bv_{t}'
         \end{pmatrix}\right\|_2^2+\frac{8(1+1/p)}{n}\eta_t^2G^2 \\
         & \leq \Big(1+L^2\eta_t^2+p/n\Big)\left\|\begin{pmatrix}
           \bw_{t}-\bw_{t}' \\
           \bv_{t}-\bv_{t}'
         \end{pmatrix}\right\|_2^2+\frac{8(1+1/p)}{n}\eta_t^2G^2.
\end{align*}
Applying this inequality recursively, we derive
\[
\ebb_A\left[\left\|\begin{pmatrix}
           \bw_{t+1}-\bw_{t+1}' \\
           \bv_{t+1}-\bv_{t+1}'
         \end{pmatrix}\right\|_2^2\right]\leq
         \frac{8G^2(1+1/p)}{n}\sum_{k=1}^{t}\eta_k^2\prod_{j=k+1}^{t}\Big(1+L^2\eta_{j}^2+p/n\Big).
\]
By the elementary inequality $1+a\leq\exp(a)$, we further derive
\begin{align*}
\ebb_A\left[\left\|\begin{pmatrix}
           \bw_{t+1}-\bw_{t+1}' \\
           \bv_{t+1}-\bv_{t+1}'
         \end{pmatrix}\right\|_2^2\right] & \leq \frac{8G^2(1+1/p)}{n}\sum_{k=1}^{t}\eta_k^2\prod_{j=k+1}^{t}\exp\Big(L^2\eta_{j}^2+p/n\Big)\\
         & = \frac{8G^2(1+1/p)}{n}\sum_{k=1}^{t}\eta_k^2\exp\Big(L^2\sum_{j=k+1}^{t}\eta_{j}^2+p(t-k)/n\Big)\\
         & \leq \frac{8G^2(1+1/p)}{n}\exp\Big(L^2\sum_{j=1}^{t}\eta_{j}^2+pt/n\Big)\sum_{k=1}^{t}\eta_k^2.
\end{align*}
By taking $p=n/t$ we get
\[
\ebb_A\left[\left\|\begin{pmatrix}
           \bw_{t+1}-\bw_{t+1}' \\
           \bv_{t+1}-\bv_{t+1}'
         \end{pmatrix}\right\|_2^2\right]
         \leq
         \frac{8eG^2(1+t/n)}{n}\exp\Big(L^2\sum_{j=1}^{t}\eta_{j}^2\Big)\sum_{k=1}^{t}\eta_k^2.
\]
The stated result then follows from the Jensen's inequality.

\subsection{Proof of Part (c)}
To prove stability bounds with high probability, we first introduce a concentration inequality~\citep{chernoff1952measure}.

\begin{lemma}[Chernoff's Bound\label{lem:chernoff}]
  Let $X_1,\ldots,X_t$ be independent random variables taking values in $\{0,1\}$. Let $X=\sum_{j=1}^{t}X_j$ and $\mu=\ebb[X]$. Then for any $\tilde{\delta}>0$ with probability at least $1-\exp\big(-\mu\tilde{\delta}^2/(2+\tilde{\delta})\big)$ we have $X\leq (1+\tilde{\delta})\mu$. Furthermore, for any $\delta\in(0,1)$ with probability at least $1-\delta$ we have
  \[
  X\leq \mu+\log(1/\delta)+\sqrt{2\mu\log(1/\delta)}.
  \]
\end{lemma}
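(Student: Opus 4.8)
The plan is to use the standard exponential-moment (Chernoff) method and then invert the resulting multiplicative tail bound to obtain the additive one.

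First I would bound the moment generating function of $X$. Fix $s>0$; by Markov's inequality applied to $e^{sX}$, for any threshold $a$ we have $\pbb[X\ge a]\le e^{-sa}\ebb[e^{sX}]$. Since $X_1,\dots,X_t$ are independent with $X_j\in\{0,1\}$ and mean $p_j:=\ebb[X_j]$, the moment generating function factorizes: $\ebb[e^{sX}]=\prod_{j=1}^{t}\big(1+p_j(e^s-1)\big)\le\prod_{j=1}^{t}e^{p_j(e^s-1)}=e^{\mu(e^s-1)}$, using $1+x\le e^x$ and $\mu=\sum_{j=1}^{t}p_j$. Hence, for the threshold $a=(1+\tilde\delta)\mu$, we get $\pbb[X\ge(1+\tilde\delta)\mu]\le e^{\mu(e^s-1)-s(1+\tilde\delta)\mu}$ for every $s>0$.

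Second, I would optimize the exponent over $s$ by taking $s=\log(1+\tilde\delta)>0$, which yields $\pbb[X\ge(1+\tilde\delta)\mu]\le\big(e^{\tilde\delta}(1+\tilde\delta)^{-(1+\tilde\delta)}\big)^{\mu}$. To rewrite this in the claimed form it suffices to prove the scalar inequality $(1+\tilde\delta)\log(1+\tilde\delta)-\tilde\delta\ge\tilde\delta^2/(2+\tilde\delta)$ for all $\tilde\delta\ge0$. I would establish this by first showing $\log(1+x)\ge\frac{2x}{2+x}$ for $x\ge0$ — the function $h(x)=\log(1+x)-\frac{2x}{2+x}$ satisfies $h(0)=0$ and $h'(x)=\frac{1}{1+x}-\frac{4}{(2+x)^2}=\frac{x^2}{(1+x)(2+x)^2}\ge0$ — and then using the identity $(1+\tilde\delta)\cdot\frac{2\tilde\delta}{2+\tilde\delta}=\tilde\delta+\frac{\tilde\delta^2}{2+\tilde\delta}$. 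This gives the multiplicative Chernoff bound $\pbb[X\ge(1+\tilde\delta)\mu]\le\exp\!\big(-\mu\tilde\delta^2/(2+\tilde\delta)\big)$, i.e.\ the first assertion.

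Finally, for the additive statement I would invert the multiplicative bound. Put $a=\log(1/\delta)>0$ and choose $\tilde\delta$ so that $\mu\tilde\delta^2/(2+\tilde\delta)=a$; this is the quadratic $\mu\tilde\delta^2-a\tilde\delta-2a=0$ whose positive root is $\tilde\delta=\big(a+\sqrt{a^2+8a\mu}\,\big)/(2\mu)$ (the case $\mu=0$ is trivial since then $X=0$ almost surely). For this $\tilde\delta$ the multiplicative bound reads $\pbb[X\ge(1+\tilde\delta)\mu]\le\delta$, while $(1+\tilde\delta)\mu=\mu+\tfrac12 a+\tfrac12\sqrt{a^2+8a\mu}\le\mu+a+\sqrt{2a\mu}$ by subadditivity of the square root ($\sqrt{a^2+8a\mu}\le a+\sqrt{8a\mu}=a+2\sqrt{2a\mu}$). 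Combining these gives $\pbb\big[X\ge\mu+\log(1/\delta)+\sqrt{2\mu\log(1/\delta)}\big]\le\delta$, which is the claim. The argument is classical and presents no genuine obstacle; the only mildly delicate points are the scalar inequality $(1+\tilde\delta)\log(1+\tilde\delta)-\tilde\delta\ge\tilde\delta^2/(2+\tilde\delta)$ and the bookkeeping in passing from the multiplicative form to the additive form through the quadratic in $\tilde\delta$.
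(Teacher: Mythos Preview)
Your proof is correct and complete; it is the standard Chernoff argument, and the inversion to the additive form via the quadratic in $\tilde\delta$ is carried out cleanly. The paper does not actually prove this lemma---it simply cites it as a classical result from \citet{chernoff1952measure}---so there is nothing to compare against, and your derivation is exactly what one would write to supply the missing details.
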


We now prove Part (c).
According to the analysis in Part (a), we know
\[
  \left\|\begin{pmatrix}
           \bw_{t+1}-\bw_{t+1}' \\
           \bv_{t+1}-\bv_{t+1}'
         \end{pmatrix}\right\|_2^2
         \leq  \left(\left\|\begin{pmatrix}
           \bw_{t}-\bw_{t}' \\
           \bv_{t}-\bv_{t}'
         \end{pmatrix}\right\|_2^2+8G^2\eta_t^2\right)\ibb_{[i_t\neq n]}+
         \left((1+p)\left\|\begin{pmatrix}
           \bw_{t}-\bw_{t}' \\
           \bv_{t}-\bv_{t}'
         \end{pmatrix}\right\|_2^2
        +8(1+1/p)\eta_t^2G^2\right)\ibb_{[i_t= n]}.
\]
It then follows that
\begin{equation}
\left\|\begin{pmatrix}
           \bw_{t+1}-\bw_{t+1}' \\
           \bv_{t+1}-\bv_{t+1}'
         \end{pmatrix}\right\|_2^2
         \leq\big(1+p\ibb_{[i_t=n]}\big)\left\|\begin{pmatrix}
           \bw_{t}-\bw_{t}' \\
           \bv_{t}-\bv_{t}'
         \end{pmatrix}\right\|_2^2+8G^2\eta_t^2\big(1+\ibb_{[i_t= n]}/p\big).
\end{equation}
Applying this inequality recursively gives
\begin{align*}
\left\|\begin{pmatrix}
           \bw_{t+1}-\bw_{t+1}' \\
           \bv_{t+1}-\bv_{t+1}'
         \end{pmatrix}\right\|_2^2 & \leq 8G^2\eta^2\sum_{k=1}^{t}\big(1+\ibb_{[i_k= n]}/p\big)\prod_{j=k+1}^{t}\big(1+p\ibb_{[i_j=n]}\big)\\
         & = 8G^2\eta^2\sum_{k=1}^{t}\big(1+\ibb_{[i_k= n]}/p\big)\prod_{j=k+1}^{t}(1+p)^{\ibb_{[i_j=n]}}\\
         & \leq 8G^2\eta^2(1+p)^{\sum_{j=1}^{t}\ibb_{[i_j=n]}}\big(t+\sum_{k=1}^{t}\ibb_{[i_k= n]}/p\big).
\end{align*}
Applying Lemma \ref{lem:chernoff} with $X_j=\ibb_{[i_j=n]}$ and $\mu=t/n$ (note $\ebb_A[X_j]=1/n$), with probability $1-\delta$ there holds
\begin{equation}\label{chernoff-1}
\sum_{j=1}^{t}\ibb_{[i_j=n]}\leq t/n+\log(1/\delta)+\sqrt{2tn^{-1}\log(1/\delta)}. 
\end{equation}
The following inequality then holds with probability at least $1-\delta$
\[
\left\|\begin{pmatrix}
           \bw_{t+1}-\bw_{t+1}' \\
           \bv_{t+1}-\bv_{t+1}'
         \end{pmatrix}\right\|_2^2\leq 8G^2\eta^2(1+p)^{t/n+\log(1/\delta)+\sqrt{2tn^{-1}\log(1/\delta)}}\Big(t+t/(pn)+p^{-1}\log(1/\delta)+p^{-1}\sqrt{2tn^{-1}\log(1/\delta)}\Big).
\]
We can choose $p=\frac{1}{t/n+\log(1/\delta)+\sqrt{2tn^{-1}\log(1/\delta)}}$ (note $(1+x)^{1/x}\leq e$) and derive the following inequality with probability at least $1-\delta$
\[
\left\|\begin{pmatrix}
           \bw_{t+1}-\bw_{t+1}' \\
           \bv_{t+1}-\bv_{t+1}'
         \end{pmatrix}\right\|_2^2\leq 8eG^2\eta^2\Big(t+\big(t/n+\log(1/\delta)
         +\sqrt{2tn^{-1}\log(1/\delta)}\big)^2\Big).
\]
This finishes the proof of Part (c).
\subsection{Proof of Part (d)}
We now turn to Part (d). Under the smoothness assumption, the analysis in Part (b) implies
\[
  \left\|\begin{pmatrix}
           \bw_{t+1}-\bw_{t+1}' \\
           \bv_{t+1}-\bv_{t+1}'
         \end{pmatrix}\right\|_2^2
         \leq  (1+L^2\eta_t^2)\left\|\begin{pmatrix}
           \bw_{t}-\bw_{t}' \\
           \bv_{t}-\bv_{t}'
         \end{pmatrix}\right\|_2^2\ibb_{[i_t\neq n]}+
         \left((1+p)\left\|\begin{pmatrix}
           \bw_{t}-\bw_{t}' \\
           \bv_{t}-\bv_{t}'
         \end{pmatrix}\right\|_2^2
        +8(1+1/p)\eta_t^2G^2\right)\ibb_{[i_t= n]}.
\]
It then follows that
\[
  \left\|\begin{pmatrix}
           \bw_{t+1}-\bw_{t+1}' \\
           \bv_{t+1}-\bv_{t+1}'
         \end{pmatrix}\right\|_2^2
         \leq  \big(1+L^2\eta_t^2+p\ibb_{[i_t= n]}\big)\left\|\begin{pmatrix}
           \bw_{t}-\bw_{t}' \\
           \bv_{t}-\bv_{t}'
         \end{pmatrix}\right\|_2^2
        +8(1+1/p)\eta_t^2G^2\ibb_{[i_t= n]}.
\]
We can apply the above inequality recursively and derive
\begin{align*}
\left\|\begin{pmatrix}
           \bw_{t+1}-\bw_{t+1}' \\
           \bv_{t+1}-\bv_{t+1}'
         \end{pmatrix}\right\|_2^2 &\leq 8(1+1/p)G^2\sum_{k=1}^{t}\eta_k^2\ibb_{[i_k= n]}\prod_{j=k+1}^{t}\big(1+L^2\eta_j^2+p\ibb_{[i_j= n]}\big)\\
         & \leq 8(1+1/p)G^2\eta^2\sum_{k=1}^{t}\ibb_{[i_k= n]}\prod_{j=k+1}^{t}\big(1+L^2\eta_j^2\big)\prod_{j=k+1}^{t}\big(1+p\ibb_{[i_j= n]}\big)\\
         & = 8(1+1/p)G^2\eta^2\sum_{k=1}^{t}\ibb_{[i_k= n]}\prod_{j=k+1}^{t}\big(1+L^2\eta_j^2\big)\prod_{j=k+1}^{t}\big(1+p\big)^{\ibb_{[i_j= n]}}\\
         & \leq 8(1+1/p)G^2\eta^2\prod_{j=1}^{t}\big(1+L^2\eta_j^2\big)\prod_{j=1}^{t}\big(1+p\big)^{\ibb_{[i_j= n]}}\sum_{k=1}^{t}\ibb_{[i_k= n]}.
\end{align*}
It then follows from the elementary inequality $1+x\leq e^x$ that
\[
\left\|\begin{pmatrix}
           \bw_{t+1}-\bw_{t+1}' \\
           \bv_{t+1}-\bv_{t+1}'
         \end{pmatrix}\right\|_2^2\leq 8(1+1/p)G^2\eta^2\exp\Big(L^2\sum_{j=1}^{t}\eta_j^2\Big)\big(1+p\big)^{\sum_{j=1}^{t}\ibb_{[i_j= n]}}\sum_{k=1}^{t}\ibb_{[i_k= n]}
\]
According to \eqref{chernoff-1}, we get the following inequality with probability at least $1-\delta$
\[
\left\|\begin{pmatrix}
           \bw_{t+1}-\bw_{t+1}' \\
           \bv_{t+1}-\bv_{t+1}'
         \end{pmatrix}\right\|_2^2\leq 8(1+1/p)G^2\eta^2\exp\big(L^2t\eta^2\big)\big(1+p\big)^{t/n+\log(1/\delta)+\sqrt{2tn^{-1}\log(1/\delta)}}\big(t/n+\log(1/\delta)+\sqrt{2tn^{-1}\log(1/\delta)}\big).
\]
We can choose $p=\frac{1}{t/n+\log(1/\delta)+\sqrt{2tn^{-1}\log(1/\delta)}}$ and derive the following inequality with probability at least $1-\delta$
\[
\left\|\begin{pmatrix}
           \bw_{t+1}-\bw_{t+1}' \\
           \bv_{t+1}-\bv_{t+1}'
         \end{pmatrix}\right\|_2^2\leq 8eG^2\eta^2\exp\big(L^2t\eta^2\big)
         \Big(1+t/n+\log(1/\delta)+\sqrt{2tn^{-1}\log(1/\delta)}\Big)^2.
\]
The stated bound then follows.
\subsection{Proof of Part (e)}

If $i_t\neq n$, we can analyze analogously to \eqref{stab-gda-1} excepting using the strong convexity, and show
\[
\left\|\begin{pmatrix}
           \bw_{t+1}-\bw_{t+1}' \\
           \bv_{t+1}-\bv_{t+1}'
         \end{pmatrix}\right\|_2^2
         \leq (1-2\rho\eta_t)\left\|\begin{pmatrix}
           \bw_{t}-\bw_{t}' \\
           \bv_{t}-\bv_{t}'
         \end{pmatrix}\right\|_2^2+8G^2\eta_t^2.
\]
If $i_t=n$, then \eqref{stab-gda-2} holds. We can combine the above two cases and derive
\begin{align*}
  & \ebb_{i_t}\left[\left\|\begin{pmatrix}
           \bw_{t+1}-\bw_{t+1}' \\
           \bv_{t+1}-\bv_{t+1}'
         \end{pmatrix}\right\|_2^2\right] \\
         & \leq \frac{n-1}{n}\left((1-2\rho\eta_t)\left\|\begin{pmatrix}
           \bw_{t}-\bw_{t}' \\
           \bv_{t}-\bv_{t}'
         \end{pmatrix}\right\|_2^2+8G^2\eta_t^2\right) + \frac{1+p}{n}\left\|\begin{pmatrix}
           \bw_{t}-\bw_{t}' \\
           \bv_{t}-\bv_{t}'
         \end{pmatrix}\right\|_2^2+\frac{8(1+1/p)}{n}\eta_t^2G^2 \\
         & = (1-2\rho\eta_t+(2\rho\eta_t+p)/n)\left\|\begin{pmatrix}
           \bw_{t}-\bw_{t}' \\
           \bv_{t}-\bv_{t}'
         \end{pmatrix}\right\|_2^2+8\eta_t^2G^2(1+1/(np)).
\end{align*}
We can choose $p=\rho\eta_t(n-2)$ to derive
\[
 \ebb_{i_t}\left[\left\|\begin{pmatrix}
           \bw_{t+1}-\bw_{t+1}' \\
           \bv_{t+1}-\bv_{t+1}'
         \end{pmatrix}\right\|_2^2\right]\leq (1-\rho\eta_t)\left\|\begin{pmatrix}
           \bw_{t}-\bw_{t}' \\
           \bv_{t}-\bv_{t}'
         \end{pmatrix}\right\|_2^2+8\eta_t^2G^2\Big(1+\frac{1}{n(n-2)\rho\eta_t}\Big).
\]
It then follows that
\[
 \ebb_{A}\left[\left\|\begin{pmatrix}
           \bw_{t+1}-\bw_{t+1}' \\
           \bv_{t+1}-\bv_{t+1}'
         \end{pmatrix}\right\|_2^2\right]\leq 8G^2\sum_{j=1}^{t}\eta_j\Big(\eta_j+\frac{1}{n(n-2)\rho}\Big)\prod_{k=j+1}^{t}(1-\rho\eta_k).
\]
For $\eta_t=1/(\rho t)$, it follows from the identity $\prod_{k=j+1}^{t}(1-1/k)=j/t$ that
\[
 \ebb_{A}\left[\left\|\begin{pmatrix}
           \bw_{t+1}-\bw_{t+1}' \\
           \bv_{t+1}-\bv_{t+1}'
         \end{pmatrix}\right\|_2^2\right]\leq \frac{8G^2}{t\rho}\sum_{j=1}^{t}\Big((\rho j)^{-1}+\frac{1}{n(n-2)\rho}\Big)\leq \frac{8G^2}{\rho^2}\Big(\frac{\log(et)}{t}+\frac{1}{n(n-2)}\Big).
\]
The stated result then follows from the Jensen's inequality.

\section{Optimization Error Bounds: Convex-Concave Case}

In this section, we present optimization error bounds for SGDA, which are standard in the literature~\citep{nemirovski2009robust,nedic2009subgradient}. We give both bounds in expectation and bounds with high probability. The high-probability analysis requires to use concentration inequalities for martingales. Lemma \ref{lem:martingale} is an Azuma-Hoeffding inequality for real-valued martingale difference sequence~\citep{hoeffding1963probability}, while Lemma \ref{lem:pinelis} is a Bernstein-type inequality for martingale difference sequences in a Hilbert space~\citep{tarres2014online}.
\begin{lemma}\label{lem:martingale}
Let $\{\xi_k:k\in\nbb\}$ be a martingale difference sequence taking values in $\rbb$, i.e., $\ebb[\xi_k|\xi_1,\ldots,\xi_{k-1}]=0$.
 Assume that $|\xi_k-\ebb_{z_k}[\xi_k]|\leq b_k$ for each $k$. For $\delta\in(0,1)$, with probability at least $1-\delta$ we have
    \begin{equation}\label{hoeffding}
      \sum_{k=1}^{n}\xi_k\leq \Big(2\sum_{k=1}^{n}b_k^2\log\frac{1}{\delta}\Big)^{\frac{1}{2}}.
    \end{equation}
\end{lemma}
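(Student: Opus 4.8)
The plan is to apply the standard Chernoff/exponential-moment method for martingales, exactly as in the original argument of \citet{hoeffding1963probability}. Fix $\lambda>0$ and write $M_n=\sum_{k=1}^n\xi_k$. Since $\ebb[\xi_k\mid\xi_1,\dots,\xi_{k-1}]=\ebb_{z_k}[\xi_k]=0$, the hypothesis $|\xi_k-\ebb_{z_k}[\xi_k]|\le b_k$ simply says $|\xi_k|\le b_k$ almost surely, so conditionally on $\xi_1,\dots,\xi_{k-1}$ the variable $\xi_k$ is centered and supported in an interval of length at most $2b_k$.

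First I would record the conditional form of Hoeffding's lemma: if $X$ is a (conditionally) centered random variable taking values in an interval of length $\ell$, then $\ebb[e^{\lambda X}]\le e^{\lambda^2\ell^2/8}$; this follows from convexity of $x\mapsto e^{\lambda x}$ together with a second-order expansion of the cumulant generating function of the interpolating two-point law. Applying this with $X=\xi_k$ and $\ell=2b_k$, conditionally on the past, gives $\ebb\big[e^{\lambda\xi_k}\mid\xi_1,\dots,\xi_{k-1}\big]\le e^{\lambda^2 b_k^2/2}$. Next I would peel off the increments one at a time. By the tower rule,
\[
\ebb\big[e^{\lambda M_n}\big]=\ebb\Big[e^{\lambda M_{n-1}}\,\ebb\big[e^{\lambda\xi_n}\mid\xi_1,\dots,\xi_{n-1}\big]\Big]\le e^{\lambda^2 b_n^2/2}\,\ebb\big[e^{\lambda M_{n-1}}\big],
\]
and iterating down to $k=1$ yields $\ebb[e^{\lambda M_n}]\le\exp\big(\tfrac{\lambda^2}{2}\sum_{k=1}^n b_k^2\big)$. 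Markov's inequality then gives, for any $t>0$, $\pbb[M_n\ge t]\le\exp\big(-\lambda t+\tfrac{\lambda^2}{2}\sum_k b_k^2\big)$; optimizing over $\lambda$ by the choice $\lambda=t/\sum_k b_k^2$ produces $\pbb[M_n\ge t]\le\exp\big(-t^2/(2\sum_k b_k^2)\big)$. Setting the right-hand side equal to $\delta$, i.e. taking $t=\big(2\sum_k b_k^2\log(1/\delta)\big)^{1/2}$, gives the stated bound.

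There is no genuine obstacle here — the result is classical and the proof is essentially bookkeeping. The only points requiring a little care are that the conditional-expectation steps are legitimate (the $\sigma$-algebras generated by $(\xi_1,\dots,\xi_k)$ are nested and $\xi_k$ is measurable at stage $k$), and the self-contained derivation of the conditional Hoeffding lemma with the sharp constant $\ell^2/8$. If one is content with a slightly worse constant, the conditional sub-Gaussian estimate can instead be read off from the pointwise bound $e^{\lambda x}\le\cosh(\lambda b_k)+\tfrac{x}{b_k}\sinh(\lambda b_k)$ valid for $|x|\le b_k$, together with $\cosh u\le e^{u^2/2}$, after taking conditional expectations.
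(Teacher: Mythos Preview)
Your proof is correct and is precisely the classical Chernoff--Hoeffding argument. The paper itself does not supply a proof of this lemma: it simply states the inequality and cites \citet{hoeffding1963probability} as the source, treating it as a standard concentration tool. Your derivation via the conditional Hoeffding lemma and the tower property is exactly the textbook route behind that citation.
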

\begin{lemma}\label{lem:pinelis}
  Let $\{\xi_k:k\in\nbb\}$ be a martingale difference sequence in a Hilbert space with the norm $\|\cdot\|_2$. Suppose that almost surely $\|\xi_k\|\leq B$ and $\sum_{k=1}^{t}\ebb[\|\xi_k\|^2|\xi_1,\ldots,\xi_{k-1}]\leq\sigma_t^2$ for $\sigma_t\geq0$. Then, for any $0<\delta<1$, the following inequality holds with probability at least $1-\delta$
  $$
    \max_{1\leq j\leq t}\Big\|\sum_{k=1}^{j}\xi_k\Big\|\leq 2\Big(\frac{B}{3}+\sigma_t\Big)\log\frac{2}{\delta}.
  $$
\end{lemma}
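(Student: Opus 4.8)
The plan is to reduce the statement to a Bernstein-type exponential tail inequality for Hilbert-space martingales and then invert it. Writing $M_j=\sum_{k=1}^{j}\xi_k$ and $\mathcal{F}_{k-1}=\sigma(\xi_1,\dots,\xi_{k-1})$, I would first establish the tail estimate
\[
\pbb\Big(\max_{1\le j\le t}\|M_j\|_2\ge r\Big)\le 2\exp\Big(-\frac{r^2}{2(\sigma_t^2+Br/3)}\Big),\qquad r>0,
\]
and then choose $r$ so that the right-hand side equals $\delta$. The maximum over $j$ will be supplied for free by a maximal inequality in the first step, so no separate argument is needed for the $\max_{1\le j\le t}$.

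The \emph{main technical step}, and the hard part, is constructing a nonnegative exponential supermartingale out of the norm process. I rely on two facts. First, the Hilbert identity $\|M_k\|_2^2=\|M_{k-1}\|_2^2+2\langle M_{k-1},\xi_k\rangle+\|\xi_k\|_2^2$ together with $\ebb[\xi_k\mid\mathcal{F}_{k-1}]=0$ gives $\ebb[\langle M_{k-1},\xi_k\rangle\mid\mathcal{F}_{k-1}]=0$, so that $\|M_j\|_2^2$ is a submartingale whose compensator is exactly $V_j:=\sum_{k\le j}\ebb[\|\xi_k\|_2^2\mid\mathcal{F}_{k-1}]\le\sigma_t^2$ for $j\le t$. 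Second, I compose the norm with a convex increasing function (the standard choice being $x\mapsto\cosh(\lambda x)$, which also produces the factor $2$) and establish a one-step recursion of Bernstein type,
\[
\ebb\big[\cosh(\lambda\|M_k\|_2)\mid\mathcal{F}_{k-1}\big]\le\cosh(\lambda\|M_{k-1}\|_2)\,\exp\big(\varphi(\lambda B)\,\lambda^2\,\ebb[\|\xi_k\|_2^2\mid\mathcal{F}_{k-1}]\big),
\]
where $\varphi(u)=(e^{u}-1-u)/u^2$. The difficulty here is that $\|\cdot\|_2$ is not smooth, so this recursion cannot be obtained by a naive second-order Taylor expansion; instead one exploits the $2$-smoothness of the Hilbert norm (with constant $1$) to control the quadratic term and the bound $\|\xi_k\|_2\le B$ to dominate the higher-order terms, following Pinelis' argument. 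Discounting by $\exp(-\varphi(\lambda B)\lambda^2 V_j)$ then turns $\cosh(\lambda\|M_j\|_2)\exp(-\varphi(\lambda B)\lambda^2 V_j)$ into a nonnegative supermartingale $Z_j$ with $Z_0=1$. Applying Ville's maximal inequality $\pbb(\max_{j\le t}Z_j\ge s)\le\ebb[Z_0]/s$, bounding $\cosh(\lambda r)\ge\tfrac12 e^{\lambda r}$ and $V_j\le\sigma_t^2$, and finally optimizing over $\lambda$ in the usual Bernstein fashion yields the displayed tail bound with the maximum already inside the probability.

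The \emph{inversion and cleanup} is routine. Setting the tail bound equal to $\delta$ and abbreviating $c=\log(2/\delta)$ gives the quadratic $r^2-\tfrac{2cB}{3}r-2c\sigma_t^2=0$, whose positive root satisfies $r\le\tfrac{2cB}{3}+\sqrt{2c}\,\sigma_t$ after using $\sqrt{a+b}\le\sqrt a+\sqrt b$ on the discriminant. Since $\delta\in(0,1)$ forces $c=\log(2/\delta)>\log 2>\tfrac12$, we have $\sqrt{2c}\le 2c$, and therefore
\[
r\le\tfrac{2cB}{3}+2c\,\sigma_t=2\Big(\tfrac{B}{3}+\sigma_t\Big)\log\tfrac{2}{\delta},
\]
which is exactly the claimed bound. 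In summary, all the work is concentrated in the supermartingale construction for the nonsmooth norm; once the Bernstein tail is in hand, the rest is elementary algebra.
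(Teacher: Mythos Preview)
The paper does not prove this lemma at all: it is stated as a known result and attributed to \citet{tarres2014online} (ultimately going back to Pinelis). So there is no ``paper's own proof'' to compare against. Your sketch is a faithful outline of the standard Pinelis argument (the $\cosh$ supermartingale built from $2$-smoothness of the Hilbert norm, followed by Ville's inequality and a Bernstein-type inversion), and the algebra in your cleanup step is correct; in particular your use of $\log(2/\delta)>\log 2>1/2$ to absorb $\sqrt{2c}\le 2c$ is exactly what produces the stated constant $2$.
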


\begin{lemma}\label{lem:opt}
Let $\{\bw_t,\bv_t\}$ be the sequence produced by \eqref{gda} with $\eta_t=\eta$.
Let Assumption \ref{ass:lipschitz} hold and $F_S$ be convex-concave. Assume $\sup_{\bw\in\wcal}\|\bw\|_2\leq B_W$ and $\sup_{\bv\in\vcal}\|\bv\|_2\leq B_V$.
Then the following inequality holds
\begin{equation}\label{opt-a}
\ebb_A\big[\sup_{\bv\in\vcal}F_S(\bar{\bw}_T,\bv)-\inf_{\bw\in\wcal}F_S(\bw,\bar{\bv}_T)\big]\leq \eta G^2 + \frac{B_W^2+B_V^2}{2\eta T}+ \frac{G(B_W+B_V)}{\sqrt{T}},
\end{equation}
where $\big(\bar{\bw}_T,\bar{\bv}_T\big)$ is defined in \eqref{bar-w-v}.
Let $\delta\in(0,1)$. Then with probability at least $1-\delta$ we have
\begin{equation}\label{opt-b}
\sup_{\bv\in\vcal}F_S(\bar{\bw}_T,\bv)-\inf_{\bw\in\wcal}F_S(\bw,\bar{\bv}_T)\leq
\eta G^2+\frac{B_W^2+B_V^2}{2T\eta}+\frac{G\big(B_W+B_V\big)\big(9\log(6/\delta)+2)}{\sqrt{T}}.
\end{equation}
\end{lemma}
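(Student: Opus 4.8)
The plan is to follow the classical analysis of stochastic subgradient methods for saddle point problems \citep{nemirovski2009robust,nedic2009subgradient}, organised so that both the in-expectation bound \eqref{opt-a} and the high-probability bound \eqref{opt-b} come out of the same computation. The starting point: because $\eta_t\equiv\eta$ makes $\bar{\bw}_T,\bar{\bv}_T$ plain averages of the iterates, and because $F_S(\cdot,\bv)$ is convex and $F_S(\bw,\cdot)$ is concave, Jensen's inequality together with the (sub)gradient inequalities gives, for every fixed $(\bw,\bv)\in\wcal\times\vcal$,
\[
F_S(\bar{\bw}_T,\bv)-F_S(\bw,\bar{\bv}_T)\leq\frac1T\sum_{t=1}^T\Big(\langle\bw_t-\bw,\nabla_{\bw}F_S(\bw_t,\bv_t)\rangle+\langle\bv-\bv_t,\nabla_{\bv}F_S(\bw_t,\bv_t)\rangle\Big),
\]
where $\nabla$ denotes a subgradient when $F_S$ is nonsmooth. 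Since the left-hand sides of \eqref{opt-a}, \eqref{opt-b} equal $\sup_{(\bw,\bv)}\big(F_S(\bar{\bw}_T,\bv)-F_S(\bw,\bar{\bv}_T)\big)$, it suffices to bound the right-hand side uniformly in $(\bw,\bv)$.

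Next I would split each full (sub)gradient of $F_S$ into the stochastic (sub)gradient actually used by \eqref{gda} plus a noise term, $\nabla_{\bw}F_S(\bw_t,\bv_t)=\nabla_{\bw}f(\bw_t,\bv_t;z_{i_t})+\Delta_{\bw,t}$ and likewise $\Delta_{\bv,t}$, where, conditioned on the history up to step $t$, we have $\ebb_{i_t}[\Delta_{\bw,t}]=\ebb_{i_t}[\Delta_{\bv,t}]=0$, $\|\Delta_{\bw,t}\|_2,\|\Delta_{\bv,t}\|_2\leq2G$, and $\ebb_{i_t}\|\Delta_{\bw,t}\|_2^2,\ebb_{i_t}\|\Delta_{\bv,t}\|_2^2\leq G^2$ (the variance is bounded by the second moment). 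For the stochastic-gradient part, the nonexpansiveness of the projection yields $\langle\bw_t-\bw,\nabla_{\bw}f(\bw_t,\bv_t;z_{i_t})\rangle\leq\frac1{2\eta}(\|\bw_t-\bw\|_2^2-\|\bw_{t+1}-\bw\|_2^2)+\frac\eta2\|\nabla_{\bw}f(\bw_t,\bv_t;z_{i_t})\|_2^2$ together with its mirror image for the $\bv$-update; summing over $t$ makes these telescope, and using $\bw_1=\bv_1=0$, $\|\bw\|_2\leq B_W$, $\|\bv\|_2\leq B_V$ and Assumption \ref{ass:lipschitz}, this part contributes, after dividing by $T$, exactly $\frac{B_W^2+B_V^2}{2\eta T}+\eta G^2$. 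This is deterministic, hence valid both in expectation and with high probability.

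The crux is the noise contribution $\frac1T\sum_t\langle\bw_t-\bw,\Delta_{\bw,t}\rangle+\frac1T\sum_t\langle\bv-\bv_t,\Delta_{\bv,t}\rangle$ after taking the supremum over the comparators, which may depend on both the data and the algorithm's randomness. I would use the decomposition $\sum_t\langle\bw_t-\bw,\Delta_{\bw,t}\rangle=\langle-\bw,\sum_t\Delta_{\bw,t}\rangle+\sum_t\langle\bw_t,\Delta_{\bw,t}\rangle$ (and the symmetric one for $\bv$): the first term is linear in $\bw$, so its supremum over $\bw$ is merely $B_W\|\sum_t\Delta_{\bw,t}\|_2$, while the second term is a zero-mean martingale difference sum that does not involve $\bw$ at all. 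For \eqref{opt-a} I then take $\ebb_A$: the martingale part vanishes and $\ebb_A\|\sum_t\Delta_{\bw,t}\|_2\leq(\sum_t\ebb_A\|\Delta_{\bw,t}\|_2^2)^{1/2}\leq G\sqrt T$, so the $\bw$-side noise is at most $B_WG/\sqrt T$; adding the $\bv$-side gives $\frac{G(B_W+B_V)}{\sqrt T}$. For \eqref{opt-b} I instead bound $\sum_t\langle\bw_t,\Delta_{\bw,t}\rangle$ by the Azuma--Hoeffding inequality (Lemma \ref{lem:martingale}, increments bounded by $2B_WG$) and $\|\sum_t\Delta_{\bw,t}\|_2$ by the Bernstein-type inequality for Hilbert-space martingales (Lemma \ref{lem:pinelis}, with $B=2G$ and $\sigma_T^2=G^2T$), repeat for the $\bv$-coordinate, and combine via a union bound; after absorbing the $\sqrt{\log(\cdot)}$ terms and the numerical constants of the two lemmas this produces the stated $\frac{G(B_W+B_V)(9\log(6/\delta)+2)}{\sqrt T}$.

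I expect the main obstacle to be exactly this last step: the comparators in the noise term are not measurable with respect to the past, so one cannot invoke the martingale property for them directly — the linearization trick above is what makes the supremum tractable — and reproducing the precise constant $9\log(6/\delta)+2$ in \eqref{opt-b} requires a careful allocation of the confidence budget across the (up to four) concentration events. Everything else, namely Jensen's inequality, the telescoping projection bound, and the moment/variance estimates on $\Delta_{\bw,t},\Delta_{\bv,t}$, is routine.
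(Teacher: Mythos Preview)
Your proposal is correct and follows essentially the same approach as the paper's proof: the telescoping bound via projection nonexpansiveness, the decomposition of the noise term $\langle\bw_t-\bw,\Delta_{\bw,t}\rangle$ into the martingale piece $\langle\bw_t,\Delta_{\bw,t}\rangle$ and the linear-in-comparator piece $\langle-\bw,\Delta_{\bw,t}\rangle$ (whose supremum is $B_W\|\sum_t\Delta_{\bw,t}\|_2$), and the use of Lemmas~\ref{lem:martingale} and~\ref{lem:pinelis} for the high-probability part are exactly what the paper does. The only cosmetic difference is that the paper treats the $\bw$- and $\bv$-halves separately (deriving \eqref{opt-2} and \eqref{opt-3} and then adding), whereas you set up the joint duality gap from the start; the underlying computation is identical.
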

\begin{proof}
According to the non-expansiveness of projection and \eqref{gda}, we know
\begin{align*}
  & \|\bw_{t+1}-\bw\|_2^2 \leq \|\bw_t-\eta_t\nabla_{\bw} f(\bw_t,\bv_t;z_{i_t})-\bw\|_2^2\\
  & = \|\bw_t-\bw\|_2^2+\eta_t^2\|\nabla_{\bw} f(\bw_t,\bv_t;z_{i_t})\|_2^2 + 2\eta_t\langle\bw-\bw_t,\nabla_{\bw} f(\bw_t,\bv_t;z_{i_t})\rangle\\
  & \leq  \|\bw_t-\bw\|_2^2+\eta_t^2G^2+2\eta_t\langle\bw-\bw_t,\nabla_{\bw} F_S(\bw_t;\bv_t)\rangle
  + 2\eta_t\langle\bw-\bw_t,\nabla_{\bw} f(\bw_t,\bv_t;z_{i_t})-\nabla_{\bw} F_S(\bw_t,\bv_t)\rangle,
\end{align*}
where we have used Assumption \ref{ass:lipschitz}.
According to the convexity of $F_S(\cdot,\bv_t)$, we know
\begin{multline}\label{opt-001}
  2\eta_t\big(F_S(\bw_t,\bv_t)-F_S(\bw,\bv_t)\big) \leq \|\bw_t-\bw\|_2^2 - \|\bw_{t+1}-\bw\|_2^2
  +\\
  \eta_t^2G^2+2\eta_t\langle\bw-\bw_t,\nabla_{\bw} f(\bw_t,\bv_t;z_{i_t})-\nabla_{\bw} F_S(\bw_t,\bv_t)\rangle.
\end{multline}
Taking a summation of the above inequality from $t=1$ to $t=T$ ($\bw_1=0$), we derive
\begin{multline*}
2\eta\sum_{t=1}^{T}\big(F_S(\bw_t,\bv_t)-F_S(\bw,\bv_t)\big) \leq \|\bw\|_2^2 + T\eta^2G^2 \\
+ 2\eta\sum_{t=1}^{T}\langle\bw_t,\nabla_{\bw} F_S(\bw_t,\bv_t)-\nabla_{\bw} f(\bw_t,\bv_t;z_{i_t})\rangle+2\eta\sum_{t=1}^{T}\langle\bw,\nabla_{\bw} f(\bw_t,\bv_t;z_{i_t})-\nabla_{\bw} F_S(\bw_t,\bv_t)\rangle.
\end{multline*}
It then follows from the concavity of $F_S(\bw,\cdot)$ and Schwartz's inequality that
\begin{multline*}
2\eta\sum_{t=1}^{T}\big(F_S(\bw_t,\bv_t)-F_S(\bw,\bar{\bv}_T)\big) \leq B_W^2 + T\eta^2G^2 \\
+ 2\eta\sum_{t=1}^{T}\langle\bw_t,\nabla_{\bw} F_S(\bw_t,\bv_t)-\nabla_{\bw} f(\bw_t,\bv_t;z_{i_t})\rangle+2\eta B_W\Big\|\sum_{t=1}^{T}\big(\nabla_{\bw} f(\bw_t,\bv_t;z_{i_t})-\nabla_{\bw} F_S(\bw_t,\bv_t)\big)\Big\|_2.
\end{multline*}
Since the above inequality holds for all $\bw$, we further get
\begin{multline}
2\eta\sum_{t=1}^{T}\big(F_S(\bw_t,\bv_t)-\inf_{\bw}F_S(\bw,\bar{\bv}_T)\big) \leq B_W^2 + T\eta^2G^2 \\
+ 2\eta\sum_{t=1}^{T}\langle\bw_t,\nabla_{\bw} F_S(\bw_t,\bv_t)-\nabla_{\bw} f(\bw_t,\bv_t;z_{i_t})\rangle+2\eta B_W\Big\|\sum_{t=1}^{T}\big(\nabla_{\bw} f(\bw_t,\bv_t;z_{i_t})-\nabla_{\bw} F_S(\bw_t,\bv_t)\big)\Big\|_2.\label{opt-hp-1}
\end{multline}
Note
\begin{equation}\label{opt-1}
\ebb_{i_t}\big[\langle\bw_t,\nabla_{\bw} F_S(\bw_t,\bv_t)-\nabla_{\bw} f(\bw_t,\bv_t;z_{i_t})\rangle\big]=0.
\end{equation}
We can take an expectation over both sides of \eqref{opt-hp-1} and get
\[
\frac{1}{T}\sum_{t=1}^{T}\ebb_A\big[F_S(\bw_t,\bv_t)\big]-\ebb_A\big[\inf_{\bw}F_S(\bw,\bar{\bv}_T)\big]\leq \frac{B_W^2}{2\eta T} + \frac{\eta G^2}{2} +  \frac{B_W}{T}\ebb_A\Big[\Big\|\sum_{t=1}^{T}\big(\nabla_{\bw} f(\bw_t,\bv_t;z_{i_t})-\nabla_{\bw} F_S(\bw_t,\bv_t)\big)\Big\|_2\Big].
\]
According to Jensen's inequality and \eqref{opt-1}, we know
\begin{align*}
  \Big(\ebb_A\Big[\Big\|\sum_{t=1}^{T}\big(\nabla_{\bw} f(\bw_t,\bv_t;z_{i_t})-\nabla_{\bw} F_S(\bw_t,\bv_t)\big)\Big\|_2\Big]\Big)^2
  & \leq \ebb_A\Big[\Big\|\sum_{t=1}^{T}\big(\nabla_{\bw} f(\bw_t,\bv_t;z_{i_t})-\nabla_{\bw} F_S(\bw_t,\bv_t)\big)\Big\|_2^2\Big]\\
  & = \sum_{t=1}^{T}\ebb_A\Big[\Big\|\nabla_{\bw} f(\bw_t,\bv_t;z_{i_t})-\nabla_{\bw} F_S(\bw_t,\bv_t)\Big\|_2^2\Big] \leq TG^2.
\end{align*}
It then follows that
\begin{equation}\label{opt-2}
\frac{1}{T}\sum_{t=1}^{T}\ebb_A\big[F_S(\bw_t,\bv_t)\big]-\ebb_A\big[\inf_{\bw}F_S(\bw,\bar{\bv}_T)\big]\leq \frac{B_W^2}{2\eta T} + \frac{\eta G^2}{2} + \frac{B_WG}{\sqrt{T}}.
\end{equation}
In a similar way, we can show  that
\begin{equation}\label{opt-3}
\ebb_A\big[\sup_{\bv}F_S(\bar{\bw}_T,\bv)\big]-\frac{1}{T}\sum_{t=1}^{T}\ebb_A\big[F_S(\bw_t,\bv_t)\big]\leq \frac{B_V^2}{2\eta T} + \frac{\eta G^2}{2} + \frac{B_VG}{\sqrt{T}}.
\end{equation}
The stated bound \eqref{opt-a} then follows from \eqref{opt-2} and \eqref{opt-3}.

We now turn to \eqref{opt-b}.
It is clear that $\big|\langle\bw_t,\nabla F_S(\bw_t,\bv_t)-\nabla f(\bw_t,\bv_t;z_{i_t})\rangle\big|\leq 2GB_W$, and therefore we can apply Lemma \ref{lem:martingale} to derive the following inequality with probability at least $1-\delta/6$ that
\begin{equation}\label{opt-hp-2}
\sum_{t=1}^{T}\langle\bw_t,\nabla_{\bw} F_S(\bw_t,\bv_t)-\nabla_{\bw} f(\bw_t,\bv_t;z_{i_t})\rangle
\leq 2GB_W\Big(2T\log(6/\delta)\Big)^{\frac{1}{2}}.
\end{equation}
For any $t\in\nbb$, define $\xi_t=\nabla_{\bw} f(\bw_t,\bv_t;z_{i_t})-\nabla_{\bw} F_S(\bw_t,\bv_t)$. Then it is clear that $\|\xi_t\|_2\leq 2G$ and \[\sum_{t=1}^{T}\ebb[\|\xi_t\|_2^2|\xi_1,\ldots,\xi_{t-1}]\leq 4TG^2.\] Therefore, we can apply Lemma \ref{lem:pinelis} to derive the following inequality with probability at least $1-\delta/3$
\[
\Big\|\sum_{t=1}^{T}\xi_t\Big\|_2\leq 2\big(\frac{2G}{3}+2G\sqrt{T}\big)\log(6/\delta).
\]
Then, the following inequality holds with probability at least $1-\delta/3$
\[\Big\|\sum_{t=1}^{T}\big(\nabla_{\bw} f(\bw_t,\bv_t;z_{i_t})-\nabla_{\bw} F_S(\bw_t,\bv_t)\big)\Big\|_2\leq 4G\big(1+\sqrt{T}\Big)\log(6/\delta).
\]
We can plug the above inequality and \eqref{opt-hp-2} back into \eqref{opt-hp-1}, and derive the following inequality with probability at least $1-\delta/2$
\[
  \frac{1}{T}\sum_{t=1}^{T}F_S(\bw_t,\bv_t)-\inf_{\bw}F_S(\bw,\bar{\bv}_T)\leq \frac{B_W^2}{2T\eta}+\frac{\eta G^2}{2}+
  \frac{2GB_W\sqrt{2\log(6/\delta)}}{\sqrt{T}}+\frac{8B_WG\log(6/\delta)}{\sqrt{T}}.
\]
In a similar way, we can get the following inequality with probability at least $1-\delta/2$ 
\[
\sup_{\bv\in\vcal}F_S(\bar{\bw}_T,\bv)-\frac{1}{T}\sum_{t=1}^{T}F_S(\bw_t,\bv_t)\leq \frac{B_V^2}{2T\eta}+\frac{\eta G^2}{2}+\frac{9B_VG\log(6/\delta)+2B_VG}{\sqrt{T}}.
\]
Combining the above two inequalities together we get the stated inequality with probability at least $1-\delta$. 
The proof is complete.
\end{proof}

The following lemma gives optimization error bounds for SC-SC problems.
\begin{lemma}\label{lem:opt-sc}
Let Assumption \ref{ass:lipschitz} hold, $t_0\geq0$ and $F_S(\cdot,\cdot)$ be $\rho$-SC-SC with $\rho>0$.
Let $\{\bw_t,\bv_t\}$ be the sequence produced by \eqref{gda} with $\eta_t=1/(\rho (t+t_0))$.
If $t_0=0$, then for $\big(\bar{\bw}_T,\bar{\bv}_T\big)$ defined in \eqref{bar-w-v} we have
\begin{equation}\label{opt-sc-a}
\ebb_A\big[\sup_{\bv\in\vcal}F_S(\bar{\bw}_T,\bv)-\inf_{\bw\in\wcal}F_S(\bw,\bar{\bv}_T)\big]\leq \frac{G^2\log(eT)}{\rho T}+\frac{(B_W+B_V)G}{\sqrt{T}}.
\end{equation}
If $\sup_{\bw\in\wcal}\|\bw\|_2\leq B_W$ and $\sup_{\bv\in\vcal}\|\bv\|_2\leq B_V$, then
\begin{equation}\label{opt-sc-b}
\triangle^w_S(\bar{\bw}_T,\bar{\bv}_T)\leq \frac{2\rho t_0(B_W^2+B_V^2)}{T}+\frac{G^2\log(eT)}{\rho T}.
\end{equation}
\end{lemma}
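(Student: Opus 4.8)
The plan is to run the same argument as in the proof of Lemma~\ref{lem:opt}, but with the decreasing step size $\eta_t=1/(\rho(t+t_0))$ and using strong convexity/strong concavity of $F_S$ in place of mere convexity/concavity, so that the recursion on the squared distances contracts. First I would fix an arbitrary comparison pair $(\bw,\bv)\in\wcal\times\vcal$ and write $\Delta_t=\|\bw_t-\bw\|_2^2+\|\bv_t-\bv\|_2^2$. Exactly as in the derivation of~\eqref{opt-001}, nonexpansiveness of the projections in~\eqref{gda} gives $\|\bw_{t+1}-\bw\|_2^2\le\|\bw_t-\bw\|_2^2-2\eta_t\langle\bw_t-\bw,\nabla_{\bw}f(\bw_t,\bv_t;z_{i_t})\rangle+\eta_t^2G^2$ (and similarly for $\bv$), and now I invoke $\rho$-strong convexity of $F_S(\cdot,\bv_t)$ and $\rho$-strong concavity of $F_S(\bw_t,\cdot)$ to extract the extra $-\rho\eta_t\|\bw_t-\bw\|_2^2$ and $-\rho\eta_t\|\bv_t-\bv\|_2^2$. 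Summing the $\bw$- and $\bv$-parts produces, for every $(\bw,\bv)$,
\[
2\eta_t\big(F_S(\bw_t,\bv)-F_S(\bw,\bv_t)\big)\le (1-\rho\eta_t)\Delta_t-\Delta_{t+1}+2\eta_t^2G^2+2\eta_t\,\zeta_t(\bw,\bv),
\]
where $\zeta_t(\bw,\bv)=\langle\bw-\bw_t,\nabla_{\bw}f(\bw_t,\bv_t;z_{i_t})-\nabla_{\bw}F_S(\bw_t,\bv_t)\rangle+\langle\bv_t-\bv,\nabla_{\bv}f(\bw_t,\bv_t;z_{i_t})-\nabla_{\bv}F_S(\bw_t,\bv_t)\rangle$ has zero conditional expectation by~\eqref{opt-1}.

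\textbf{Weighted telescoping.} Next I would multiply the $t$-th inequality by the weight $a_t:=t+t_0$. Since $\eta_t=1/(\rho(t+t_0))$ we have $a_t(1-\rho\eta_t)=(t+t_0)-1=a_{t-1}$ and $a_t\eta_t=1/\rho$, so summing over $t=1,\dots,T$ telescopes the distance terms into $a_0\Delta_1-a_T\Delta_{T+1}\le t_0\Delta_1$ (which \emph{vanishes} when $t_0=0$), converts $\sum_t a_t\eta_t^2G^2=\tfrac{G^2}{\rho}\sum_t(t+t_0)^{-1}\le\tfrac{G^2}{\rho}\log(eT)$, and leaves the gap terms with the constant weight $a_t\eta_t=1/\rho$. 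Dividing by $2/\rho$ and applying Jensen's inequality with convexity of $F_S(\cdot,\bv)$ and concavity of $F_S(\bw,\cdot)$ to the running average $(\bar\bw_T,\bar\bv_T)$ of~\eqref{bar-w-v} gives
\[
\frac{2T}{\rho}\big(F_S(\bar\bw_T,\bv)-F_S(\bw,\bar\bv_T)\big)\le t_0\Delta_1+\frac{2G^2}{\rho^2}\log(eT)+\frac{2}{\rho}\sum_{t=1}^{T}\zeta_t(\bw,\bv).
\]

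\textbf{Controlling the noise and splitting the two cases.} For~\eqref{opt-sc-a} I would mimic Lemma~\ref{lem:opt}: split $\zeta_t(\bw,\bv)=\langle\bw,\xi_t^{\bw}\rangle-\langle\bw_t,\xi_t^{\bw}\rangle-\langle\bv,\xi_t^{\bv}\rangle+\langle\bv_t,\xi_t^{\bv}\rangle$ with $\xi_t^{\bw}=\nabla_{\bw}f(\bw_t,\bv_t;z_{i_t})-\nabla_{\bw}F_S(\bw_t,\bv_t)$, bound $\langle\bw,\sum_t\xi_t^{\bw}\rangle\le B_W\|\sum_t\xi_t^{\bw}\|_2$ (and likewise in $\bv$) so the estimate holds for all $(\bw,\bv)$ and hence after $\sup_{\bv}\inf_{\bw}$, then take $\ebb_A$, killing the $\langle\bw_t,\xi_t^{\bw}\rangle,\langle\bv_t,\xi_t^{\bv}\rangle$ martingale terms via~\eqref{opt-1} and using $\ebb_A\|\sum_t\xi_t^{\bw}\|_2\le(\sum_t\ebb_A\|\xi_t^{\bw}\|_2^2)^{1/2}\le G\sqrt T$ from the variance bound $\ebb_A\|\xi_t^{\bw}\|_2^2\le G^2$ (Assumption~\ref{ass:lipschitz}); with $t_0=0$ the $\Delta_1$ term disappears and dividing by $2T/\rho$ yields~\eqref{opt-sc-a}. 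For~\eqref{opt-sc-b}, since in $\triangle^w_S$ the $\sup$ and $\inf$ sit \emph{outside} the expectation, I may fix the comparison points $(\bw',\bv')$ before taking $\ebb_A$; then $\ebb_A[\zeta_t(\bw',\bv')]=0$ and the noise drops out completely, leaving $\ebb_A[F_S(\bar\bw_T,\bv')-F_S(\bw',\bar\bv_T)]\le \frac{\rho t_0}{2T}(\|\bw'\|_2^2+\|\bv'\|_2^2)+\frac{G^2}{\rho T}\log(eT)$; bounding $\|\bw'\|_2^2+\|\bv'\|_2^2\le B_W^2+B_V^2$ (using $\bw_1=\bv_1=0$) and taking $\sup_{\bv'}\inf_{\bw'}$ gives~\eqref{opt-sc-b} up to the stated constant.

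\textbf{Main obstacle.} The delicate part is the joint bookkeeping of the step size, the telescoping weight, and the averaging scheme: the weight $a_t=t+t_0$ is forced by $a_t(1-\rho\eta_t)=a_{t-1}$, and it is precisely this choice that makes the initial-distance term $t_0\Delta_1$ drop out when $t_0=0$ (so that no boundedness of $\wcal,\vcal$ is needed there), while one must simultaneously check that the induced convex combination of iterates matches the average~\eqref{bar-w-v} so that Jensen applies. The second subtlety is the genuinely different handling of the stochastic term in the two claims — a uniform Cauchy--Schwarz bound over all comparison points for the random duality gap in~\eqref{opt-sc-a}, versus fixing the comparison point first and exploiting that the noise is mean-zero for the expected gap $\triangle^w_S$ in~\eqref{opt-sc-b} — which is exactly the weak/strong distinction emphasized elsewhere in the paper.
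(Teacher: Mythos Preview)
Your proposal is correct and follows essentially the same route as the paper's proof: start from the one–step projected SGDA inequality, replace convexity by $\rho$-strong convexity/concavity to get the $(1-\rho\eta_t)$ contraction factor, multiply by $t+t_0$ so that the distance terms telescope, bound $\sum_t(t+t_0)^{-1}\le\log(eT)$, and then treat the stochastic term in two different ways—Cauchy--Schwarz plus the martingale variance bound for the random duality gap in~\eqref{opt-sc-a}, versus fixing $(\bw',\bv')$ before taking $\ebb_A$ so the noise drops out for $\triangle^w_S$ in~\eqref{opt-sc-b}. The only cosmetic difference is that you add the $\bw$- and $\bv$-recursions immediately to obtain the combined gap $F_S(\bw_t,\bv)-F_S(\bw,\bv_t)$, whereas the paper derives the $\bw$-part (with $F_S(\bw_t,\bv_t)-F_S(\bw,\bv_t)$ on the left) in full, states the symmetric $\bv$-part, and cancels $F_S(\bw_t,\bv_t)$ only at the end; the telescoping weight $a_t=t+t_0$ and all estimates are identical.

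One small caveat you already flagged under ``Main obstacle'': after multiplying by $a_t$ the gap terms carry \emph{uniform} weights, so Jensen produces the \emph{uniform} average $\frac{1}{T}\sum_t\bw_t$, not the $\eta_t$-weighted average in~\eqref{bar-w-v}; the paper's proof has exactly the same slip, so your argument matches it step for step, but be aware the displayed $(\bar\bw_T,\bar\bv_T)$ in the lemma is tacitly the uniform average here.
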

\begin{proof}
Analyzing analogously to \eqref{opt-001} but using the strong convexity of $\bw\mapsto F_S(\bw,\bv)$, we derive
\[
2\eta_t\big[F_S(\bw_t,\bv_t)-F_S(\bw,\bv_t)\big] \leq (1-\eta_t\rho)\|\bw_t-\bw\|_2^2 - \|\bw_{t+1}-\bw\|_2^2
  +\eta_t^2G^2+\xi_t(\bw),
\]
where $\xi_t(\bw)=2\eta_t\langle\bw-\bw_t,\nabla_{\bw} f(\bw_t,\bv_t;z_{i_t})-\nabla_{\bw} F_S(\bw_t,\bv_t)\rangle$. Since $\eta_t=1/(\rho (t+t_0))$, we further get
\[
\frac{2}{\rho (t+t_0)}\big[F_S(\bw_t,\bv_t)-F_S(\bw,\bv_t)\big] \leq (1-1/(t+t_0))\|\bw_t-\bw\|_2^2 - \|\bw_{t+1}-\bw\|_2^2
  +\frac{G^2}{\rho^2(t+t_0)^2}+\xi_t(\bw).
\]
Multiplying both sides by $t+t_0$ gives
\[
\frac{2}{\rho}\big[F_S(\bw_t,\bv_t)-F_S(\bw,\bv_t)\big] \leq (t+t_0-1)\|\bw_t-\bw\|_2^2- (t+t_0)\|\bw_{t+1}-\bw\|_2^2+(t+t_0)\xi_t(\bw)
  +\frac{G^2}{\rho^2(t+t_0)}.
\]
Taking a summation of the above inequality further gives
\[
\sum_{t=1}^{T}\big[F_S(\bw_t,\bv_t)-F_S(\bw,\bv_t)\big]\leq 2\rho t_0B_W^2+\frac{G^2\log(eT)}{2\rho}+\frac{\rho}{2}\sum_{t=1}^{T}(t+t_0)\xi_t(\bw),
\]
where we have used $\sum_{t=1}^{T}t^{-1}\leq\log(eT)$.
This together with the concavity of $\bv\mapsto F_S(\bw,\bv)$ gives
\begin{equation}\label{opt-sc-1}
\sum_{t=1}^{T}\big[F_S(\bw_t,\bv_t)-F_S(\bw,\bar{\bv}_T)\big]\leq 2\rho t_0B_W^2+\frac{G^2\log(eT)}{2\rho}+\frac{\rho}{2}\sum_{t=1}^{T}(t+t_0)\xi_t(\bw).
\end{equation}
Since the above inequality holds for any $\bw$, we know
\[
\sum_{t=1}^{T}\big[F_S(\bw_t,\bv_t)-\inf_{\bw\in\wcal}F_S(\bw,\bar{\bv}_T)\big]\leq 2\rho t_0B_W^2+\frac{G^2\log(eT)}{2\rho}+\frac{\rho}{2}\sup_{\bw\in\wcal}\sum_{t=1}^{T}(t+t_0)\xi_t(\bw).
\]
Since $\ebb_A[\langle\bw_t,\nabla_{\bw} f(\bw_t,\bv_t;z_{i_t})-\nabla_{\bw} F_S(\bw_t,\bv_t)\rangle]=0$ we know
\begin{align*}
\ebb_A\big[\sup_{\bw\in\wcal}\sum_{t=1}^{T}(t+t_0)\xi_t(\bw)\big] &=2\ebb_A\Big[\sup_{\bw\in\wcal}\sum_{t=1}^{T}(t+t_0)\eta_t\langle\bw,\nabla_{\bw} f(\bw_t,\bv_t;z_{i_t})-\nabla_{\bw} F_S(\bw_t,\bv_t)\rangle\Big]\\
& \leq 2\sup_{\bw\in\wcal}\|\bw\|_2\ebb_A\Big\|\sum_{t=1}^{T}(t+t_0)\eta_t\big(\nabla_{\bw} f(\bw_t,\bv_t;z_{i_t})-\nabla_{\bw} F_S(\bw_t,\bv_t)\big)\Big\|_2\\
& \leq 2B_W\bigg(\ebb_A\bigg\|\sum_{t=1}^{T}(t+t_0)\eta_t\big(\nabla_{\bw} f(\bw_t,\bv_t;z_{i_t})-\nabla_{\bw} F_S(\bw_t,\bv_t)\big)\bigg\|_2^2\bigg)^{1/2}\\
& \leq 2B_W\left(\sum_{t=1}^{T}(t+t_0)^2\eta_t^2\ebb_A\|\nabla_{\bw} f(\bw_t,\bv_t;z_{i_t})\|_2^2\right)^{1/2}\leq 2B_WG\rho^{-1}\sqrt{T}.
\end{align*}
We can combine the above two inequalities together and derive
\[
\sum_{t=1}^{T}\ebb_A\big[F_S(\bw_t,\bv_t)-\inf_{\bw\in\wcal}F_S(\bw,\bar{\bv}_T)\big]\leq 2\rho t_0B_W^2+\frac{G^2\log(eT)}{2\rho}+B_WG\sqrt{T}.
\]
In a similar way one can show
\[
\sum_{t=1}^{T}\ebb_A\big[\sup_{\bv\in\vcal}F_S(\bar{\bw}_T,\bv)-F_S(\bw_t,\bv_t)\big]\leq 2\rho t_0B_V^2+\frac{G^2\log(eT)}{2\rho}+B_VG\sqrt{T}.
\]
We can combine the above two inequalities together, and get the following optimization error bounds
\[
T\ebb_A\big[\sup_{\bv\in\vcal}F_S(\bar{\bw}_T,\bv)-\inf_{\bw\in\wcal}F_S(\bw,\bar{\bv}_T)\big]\leq 2\rho t_0(B_W^2+B_V^2)+\frac{G^2\log(eT)}{\rho}+(B_W+B_V)G\sqrt{T}.
\]
This proves \eqref{opt-sc-a} with $t_0=0$.

We now turn to \eqref{opt-sc-b}. Since $\ebb_A[\xi_t(\bw)]=0$, it follows from \eqref{opt-sc-1} that
\[
\sum_{t=1}^{T}\ebb_A\big[F_S(\bw_t,\bv_t)-F_S(\bw,\bar{\bv}_T)\big]\leq 2\rho t_0B_W^2+\frac{G^2\log(eT)}{2\rho}.
\]
In a similar way, one can show
\[
\sum_{t=1}^{T}\ebb_A\big[F_S(\bar{\bw}_T,\bv)-F_S(\bw_t,\bv_t)\big]\leq 2\rho t_0B_V^2+\frac{G^2\log(eT)}{2\rho}.
\]
We can combine the above two inequalities together and derive
\[
\ebb\big[F_S(\bar{\bw}_T,\bv)-F_S(\bw,\bar{\bv}_T\big]\leq \frac{2\rho t_0(B_W^2+B_V^2)}{T}+\frac{G^2\log(eT)}{\rho T}.
\]
The stated bound \eqref{opt-sc-b} then follows by taking the supremum over $\bw$ and $\bv$. The proof is complete.
\end{proof}

\section{Proofs on Generalization Bounds: Convex-Concave Case\label{sec:proof-gen-sgda}}
In this section, we prove the generalization bounds of SGDA in a convex-concave case. We first prove Theorem \ref{thm:gen} on bounds of weak PD population risks in expectation.
\begin{proof}[Proof of Theorem \ref{thm:gen}]
We first prove Part (a). We have the decomposition
\begin{equation}\label{decomposition-weak}
\triangle^w(\bar{\bw}_T,\bar{\bv}_T)=\triangle^w(\bar{\bw}_T,\bar{\bv}_T)-\triangle^w_S(\bar{\bw}_T,\bar{\bv}_T)+\triangle^w_S(\bar{\bw}_T,\bar{\bv}_T).
\end{equation}
 According to Part (a) of Theorem \ref{lem:stab-gda} we know the following inequality for all $t$
  \[
    \ebb_A\left[\left\|\begin{pmatrix}
           \bw_{t+1}-\bw_{t+1}' \\
           \bv_{t+1}-\bv_{t+1}'
         \end{pmatrix}\right\|_2\right]\leq 4\eta G\Big(\sqrt{t}+\frac{t}{n}\Big).
  \]
  It then follows from the convexity of a norm that
    \[
    \ebb_A\left[\left\|\begin{pmatrix}
           \bar{\bw}_T-\bar{\bw}_T' \\
           \bar{\bv}_T-\bar{\bv}_T'
         \end{pmatrix}\right\|_2\right]\leq 4\eta G\Big(\sqrt{T}+\frac{T}{n}\Big)
  \]
  and therefore
  \begin{multline*}
      \sup_{z}\Big(\sup_{\bv'\in\vcal}\ebb_A\big[f(\bar{\bw}_T,\bv';z)-f(\bar{\bw}_T',\bv';z)\big]+\sup_{\bw'\in\wcal}\ebb_A\big[f(\bw',\bar{\bv}_T;z)-f(\bw',\bar{\bv}_T';z)\big]\Big)\\
      \leq G\Big(\ebb_A\big[\big\|\bar{\bw}_T-\bar{\bw}_T'\|_2\big]+\ebb_A\big[\big\|\bar{\bv}_T-\bar{\bv}_T'\big\|_2\big]\Big)\leq 4\sqrt{2}\eta G^2\Big(\sqrt{T}+\frac{T}{n}\Big).
  \end{multline*}
According to Part (a) of Theorem \ref{thm:stab-gen}, we know
\[
\triangle^w(\bar{\bw}_T,\bar{\bv}_T)-\triangle^w_S(\bar{\bw}_T,\bar{\bv}_T)\leq 4\sqrt{2}\eta G^2\Big(\sqrt{T}+\frac{T}{n}\Big).
\]
According to Eq. \eqref{opt-a}, we know
\[
\triangle^w_S(\bar{\bw}_T,\bar{\bv}_T)\leq \eta G^2 + \frac{B_W^2+B_V^2}{2\eta T}+\frac{G(B_W+B_V)}{\sqrt{T}}.
\]
The bound \eqref{gen-a} then follows directly from \eqref{decomposition-weak}.

Eq. \eqref{gen-c} in Part (b) can be proved in a similar way (e.g., by combining the stability bounds in Part (b) of Theorem \ref{lem:stab-gda} and optimization error bounds in Eq. \eqref{opt-a} together). We omit the proof for brevity.

We now turn to Part (c). According to Part (e) of Theorem \ref{lem:stab-gda} and the convexity of norm, we know
    \[
    \ebb_A\left[\left\|\begin{pmatrix}
           \bar{\bw}_T-\bar{\bw}_T' \\
           \bar{\bv}_T-\bar{\bv}_T'
         \end{pmatrix}\right\|_2\right]\leq \frac{2\sqrt{2}G}{\rho}\Big(\frac{\log^{\frac{1}{2}}(eT)}{\sqrt{T}}+\frac{1}{\sqrt{n(n-2)}}\Big).
  \]
  Analyzing analogous to Part (a), we further know
  \[
  \triangle^w(\bar{\bw}_T,\bar{\bv}_T)-\triangle^w_S(\bar{\bw}_T,\bar{\bv}_T)\leq\frac{4G^2}{\rho}\Big(\frac{\log^{\frac{1}{2}}(eT)}{\sqrt{T}}+\frac{1}{\sqrt{n(n-2)}}\Big).
  \]
  This together with the optimization error bounds in Lemma \ref{lem:opt-sc} and \eqref{decomposition-weak} gives
  \[
  \triangle^w(\bar{\bw}_T,\bar{\bv}_T)\leq \frac{4G^2}{\rho}\Big(\frac{\log^{\frac{1}{2}}(eT)}{\sqrt{T}}+\frac{1}{\sqrt{n(n-2)}}\Big)
  +  \frac{G^2\log(eT)}{\rho T}+\frac{(B_W+B_V)G}{\sqrt{T}}.
 \]
  The stated bound then follows from the choice of $T$. The proof is complete.

  Finally, we consider Part (d). Since $t_0\geq L^2/\rho^2$ we know
  $
  \eta_t=1/(\rho(t+t_0))\leq \rho/L^2.
  $
  The stability analysis in \citet{farnia2020train}\footnote{\citet{farnia2020train} considered the constant step size $\eta_t=\eta\leq \rho/L^2$. It is direct to extend the analysis there to any step size $\eta_t\leq\rho/L^2$ since an algorithm would be more stable if the step size decreases.} then shows that $A$ is $\epsilon$-argument stable with
  $
  \epsilon=O(1/(\rho n)).
  $
  This together with Part (a) of Theorem \ref{thm:stab-gen} then shows that
  \[
  \triangle^w(\bar{\bw}_T,\bar{\bv}_T)-\triangle^w_S(\bar{\bw}_T,\bar{\bv}_T)=O(1/(\rho n)).
  \]
  We can combine the above generalization bound and the optimization error bound in \eqref{opt-sc-b} together, and get
  \[
  \triangle^w(\bar{\bw}_T,\bar{\bv}_T)=O(1/(\rho n))+O\Big(\frac{\rho}{T}+\frac{\log(eT)}{\rho T}\Big).
  \]
  The stated bound then follows from $T\asymp n$. The proof is complete.
\end{proof}

We now present proofs of Theorem \ref{thm:gen-strong} on primal population risks.
\begin{proof}[Proof of Theorem \ref{thm:gen-strong}]
We have the decomposition
\begin{multline*}
R(\bar{\bw}_T)-R(\bw^*)=\big(R(\bar{\bw}_T)-R_S(\bar{\bw}_T)\big)
+\big(R_S(\bar{\bw}_T)-F_S(\bw^*,\bar{\bv}_T)\big)\\+\big(F_S(\bw^*,\bar{\bv}_T)
-F(\bw^*,\bar{\bv}_T)\big)+\big(F(\bw^*,\bar{\bv}_T)-F(\bw^*,\bv^*)\big).
\end{multline*}
Since $F(\bw^*,\bar{\bv}_T)\leq F(\bw^*,\bv^*)$, it then follows that
\begin{equation}\label{decomposition-hp-primal}
R(\bar{\bw}_T)-R(\bw^*)\leq \big(R(\bar{\bw}_T)-R_S(\bar{\bw}_T)\big)
+\big(R_S(\bar{\bw}_T)-F_S(\bw^*,\bar{\bv}_T)\big)+\big(F_S(\bw^*,\bar{\bv}_T)
-F(\bw^*,\bar{\bv}_T)\big).
\end{equation}
Taking an expectation on both sides gives
\begin{equation}\label{decomposition-hp-primal-1}
\ebb\big[R(\bar{\bw}_T)-R(\bw^*)\big]\leq \ebb\big[R(\bar{\bw}_T)-R_S(\bar{\bw}_T)\big]
+\ebb\big[R_S(\bar{\bw}_T)-F_S(\bw^*,\bar{\bv}_T)\big]+\ebb\big[F_S(\bw^*,\bar{\bv}_T)
-F(\bw^*,\bar{\bv}_T)\big].
\end{equation}
Note that the first and the third term on the right-hand side is related to generalization, while the second term $R_S(\bar{\bw}_T)-F_S(\bw^*,\bar{\bv}_T)$ is related to optimization.
According to Part (b) of Theorem \ref{lem:stab-gda} we know the following inequality for all $t$
  \[
    \ebb_A\left[\left\|\begin{pmatrix}
           \bw_{t+1}-\bw_{t+1}' \\
           \bv_{t+1}-\bv_{t+1}'
         \end{pmatrix}\right\|_2\right]\leq \frac{G\sqrt{8e(t+t^2/n)}}{\sqrt{n}}\exp\Big(L^2t\eta^2/2\Big)\eta.
  \]
  It then follows from the convexity of a norm that
    \begin{equation}\label{primal-01}
    \ebb_A\left[\left\|\begin{pmatrix}
           \bar{\bw}_T-\bar{\bw}_T' \\
           \bar{\bv}_T-\bar{\bv}_T'
         \end{pmatrix}\right\|_2\right]\leq \frac{G\sqrt{8e(T+T^2/n)}}{\sqrt{n}}\exp\Big(L^2T\eta^2/2\Big)\eta.
  \end{equation}
  This together with Part (b) of Theorem \ref{thm:stab-gen} implies that
  \[
  \ebb_{S,A}\Big[R(\bar{\bw}_T)-R_S(\bar{\bw}_T)\Big]\leq
  \frac{\big(1+L/\rho\big)G^2\eta\sqrt{8e(T+T^2/n)}\exp\big(L^2T\eta^2/2\big)}{\sqrt{n}}.
  \]
  Similarly, the stability bound \eqref{primal-01} also implies the following bound on the gap between the population and empirical risk
  \[
  \ebb_{S,A}\Big[F_S(\bw^*,\bar{\bv}_T)-F(\bw^*,\bar{\bv}_T)\Big]\leq
  \frac{\big(1+L/\rho\big)G^2\eta\sqrt{8e(T+T^2/n)}\exp\big(L^2T\eta^2/2\big)}{\sqrt{n}}.
  \]
  According to Lemma \ref{lem:opt}, we know
  \[
  \ebb_A\big[R_S(\bar{\bw}_T)-F_S(\bw^*,\bar{\bv}_T)\big]\leq \ebb_A\big[\sup_{\bv\in\vcal}F_S(\bar{\bw}_T,\bv)-\inf_{\bw\in\wcal}F_S(\bw,\bar{\bv}_T)\big]\leq \eta G^2 + \frac{B_W^2+B_V^2}{2\eta T}+ \frac{G(B_W+B_V)}{\sqrt{T}}.
  \]
  We can plug the above three inequalities back  into \eqref{decomposition-hp-primal-1}, and derive the stated bound on the excess primal population risk in expectation.

  We now turn to the high-probability bounds. According to Assumption \ref{ass:lipschitz} and Part (d) of Theorem \ref{lem:stab-gda}, we know that with probability at least $1-\delta/4$ that SGDA is $\epsilon$-uniformly stable, where $\epsilon$ satisfies
\begin{equation}\label{epsilon-1}
\epsilon=O\Big(\eta\exp(L^2T\eta^2/2)\big(Tn^{-1}+\log(1/\delta)\big)\Big).
\end{equation}
This together with Part (d) of Theorem \ref{thm:stab-gen} implies the following inequality with probability at least $1-\delta/2$
\[
  R(\bar{\bw}_T)-R_S(\bar{\bw}_T)=O\Big(L\rho^{-1}\epsilon\log n\log(1/\delta)+n^{-\frac{1}2}\sqrt{\log(1/\delta)}\Big),
\]
where $\epsilon$ satisfies \eqref{epsilon-1}.
In a similar way, one can use Part (d) of Theorem \ref{thm:stab-gen} and stability bounds in Part (d) of Theorem \ref{lem:stab-gda} to show the following inequality with probability at least $1-\delta/4$
\begin{equation}\label{gen-hp-3}
F_S(\bw^*,\bar{\bv}_T)-F(\bw^*,\bar{\bv}_T)=
O\Big(\log n\log(1/\delta)\epsilon\Big)+O(n^{-\frac{1}{2}}\log^{\frac{1}2}(1/\delta)).
\end{equation}
According to \eqref{opt-b}, we derive the following inequality with probability at least $1-\delta/4$
\begin{align*}
    R_S(\bar{\bw}_T)-F_S(\bw^*,\bar{\bv}_T) 
    & = \sup_{\bv\in\vcal}F_S(\bar{\bw}_T,\bv)-F_S(\bw^*,\bar{\bv}_T)
     = O\Big(\eta + (T\eta)^{-1}+T^{-\frac{1}{2}}\log(1/\delta)\Big).
\end{align*}
We can plug the above three inequalities back into \eqref{decomposition-hp-primal} and derive the following inequality with probability at least $1-\delta$
\begin{multline}\label{hp-primal}
R(\bar{\bw}_T)-R(\bw^*)=O\Big(L\rho^{-1}\eta\exp(L^2T\eta^2/2)\log n\log(1/\delta)\big(Tn^{-1}+\log(1/\delta)\big)\Big)+O(n^{-\frac{1}{2}}\sqrt{\log(1/\delta)})\\
+O\Big(\eta + (T\eta)^{-1}+T^{-\frac{1}{2}}\log(1/\delta)\Big).
\end{multline}
The high-probability bound \eqref{primal-hp} then follows from the choice of $T$ and $\eta$.
The proof is complete.
\end{proof}

Finally, we present high-probability bounds of plain generalization errors for SGDA.
\begin{theorem}[High-probability bounds]\label{thm:gen-hp}
  Let $\{\bw_t,\bv_t\}$ be the sequence produced by \eqref{gda} with $\eta_t=\eta$. Assume for all $z$, the function $(\bw,\bv)\mapsto f(\bw,\bv;z)$ is convex-concave. Let $A$ be defined by $A_{\bw}(S)=\bar{\bw}_T$ and $A_{\bv}(S)=\bar{\bv}_T$ for $(\bar{\bw}_T,\bar{\bv}_T)$ in \eqref{bar-w-v}. Let $\sup_{\bw\in\wcal}\|\bw\|_2\leq B_W,\sup_{\bv\in\vcal}\|\bv\|_2\leq B_V$ and $\delta\in(0,1)$. Let $\widetilde{\triangle}_T=\big|F(\bar{\bw}_T,\bar{\bv}_T) - F(\bw^*,\bv^*)\big|$.
  \begin{enumerate}[label=(\alph*)]
    \item If Assumption \ref{ass:lipschitz} holds, then with probability at least $1-\delta$
    \[\widetilde{\triangle}_T=
    O\Big(\eta\log n\log(1/\delta)\big(\sqrt{T}+Tn^{-1}+\log(1/\delta)\big)\Big)+
    O(n^{-\frac{1}{2}}\log^{\frac{1}2}(1/\delta))
    +O\Big((T\eta)^{-1}+T^{-\frac{1}{2}}\log(1/\delta)\Big).\]
    If we choose $T\asymp n^2$ and $\eta\asymp T^{-3/4}$ then we get the following inequality with probability at least $1-\delta$
    \begin{equation}\label{gen-hp-b}
        \widetilde{\triangle}_T=O(n^{-1/2}\log n\log^2(1/\delta)).
   \end{equation}
    \item If Assumptions \ref{ass:lipschitz}, \ref{ass:smooth} hold, then the following inequality holds with probability at least $1-\delta$
    \[\widetilde{\triangle}_T=O\Big(\eta\log n\log(1/\delta)
    \exp\big(L^2T\eta^2/2\big)\big(Tn^{-1}+\log(1/\delta)\big)
    +n^{-\frac{1}{2}}\log^{\frac{1}2}(1/\delta)
    +(T\eta)^{-1}+T^{-\frac{1}{2}}\log(1/\delta)\Big).\]
    In particular, we can choose $T\asymp n$ and $\eta\asymp T^{-1/2}$ to derive \eqref{gen-hp-b} with probability at least $1-\delta$.
  \end{enumerate}
\end{theorem}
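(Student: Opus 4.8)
The plan is to deduce Theorem~\ref{thm:gen-hp} from the high-probability plain-generalization bound of Part~(e) of Theorem~\ref{thm:stab-gen}, the argument/uniform stability bounds for SGDA in Parts~(c) and~(d) of Theorem~\ref{lem:stab-gda}, and the high-probability optimization error bound \eqref{opt-b} of Lemma~\ref{lem:opt}. A pleasant feature, absent in the proof of Theorem~\ref{thm:gen-strong}, is that the reference point $(\bw^*,\bv^*)$ is the \emph{deterministic} population saddle point, so there is no correlation between it and the training set $S$: no analogue of $\bv_S^*$ appears, and the estimation part reduces to plain-generalization-type terms plus an optimization term.

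For the upper bound, since $\bv^*$ maximizes $F(\bw^*,\cdot)$ we have $F(\bw^*,\bar{\bv}_T)\le F(\bw^*,\bv^*)$, hence
\[
F(\bar{\bw}_T,\bar{\bv}_T)-F(\bw^*,\bv^*)\le\underbrace{\big(F(\bar{\bw}_T,\bar{\bv}_T)-F_S(\bar{\bw}_T,\bar{\bv}_T)\big)}_{(\mathrm{I})}+\underbrace{\big(F_S(\bar{\bw}_T,\bar{\bv}_T)-F_S(\bw^*,\bar{\bv}_T)\big)}_{(\mathrm{II})}+\underbrace{\big(F_S(\bw^*,\bar{\bv}_T)-F(\bw^*,\bar{\bv}_T)\big)}_{(\mathrm{III})}.
\]
I would bound $(\mathrm{II})$ by the optimization error, using $F_S(\bar{\bw}_T,\bar{\bv}_T)-F_S(\bw^*,\bar{\bv}_T)\le\sup_{\bv}F_S(\bar{\bw}_T,\bv)-\inf_{\bw}F_S(\bw,\bar{\bv}_T)$ together with \eqref{opt-b}, which is $O(\eta+(T\eta)^{-1}+T^{-1/2}\log(1/\delta))$. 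Term $(\mathrm{I})$ is the plain generalization error of $A$ with $A_{\bw}(S)=\bar{\bw}_T,A_{\bv}(S)=\bar{\bv}_T$: Parts~(c) and~(d) of Theorem~\ref{lem:stab-gda} with Assumption~\ref{ass:lipschitz} give that $A$ is $\epsilon$-uniformly stable (on a high-probability event over the internal randomness) with $\epsilon=O\big(G\eta(\sqrt{T}+T/n+\log(1/\delta))\big)$ in the nonsmooth case and $\epsilon=O\big(G\eta\exp(L^2T\eta^2/2)(1+T/n+\log(1/\delta))\big)$ in the smooth case, after absorbing the cross term $\sqrt{Tn^{-1}\log(1/\delta)}$ into $T/n+\log(1/\delta)$; feeding this into Part~(e) of Theorem~\ref{thm:stab-gen} bounds $|(\mathrm{I})|$ by $O\big(\epsilon\log n\log(1/\delta)+Rn^{-1/2}\sqrt{\log(1/\delta)}\big)$. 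Term $(\mathrm{III})$ is precisely the plain generalization error of the ``dummy'' randomized algorithm $S\mapsto(\bw^*,\bar{\bv}_T)$, so Part~(e) of Theorem~\ref{thm:stab-gen} applies to it verbatim; its uniform stability is at most $G\|\bar{\bv}_T(S)-\bar{\bv}_T(S')\|_2\le G\epsilon_{\mathrm{arg}}$ by the Lipschitz continuity of $f$ in $\bv$, so $|(\mathrm{III})|$ obeys the same bound as $(\mathrm{I})$. The lower bound is symmetric: since $\bw^*$ minimizes $F(\cdot,\bv^*)$, $F(\bw^*,\bv^*)\le F(\bar{\bw}_T,\bv^*)$, and the decomposition with $F_S(\bar{\bw}_T,\bv^*)$ in the middle, together with the dummy algorithm $S\mapsto(\bar{\bw}_T,\bv^*)$, gives a matching lower bound. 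A union bound over the constantly many favourable events produces the displayed high-probability bounds in Parts~(a) and~(b). It then remains to substitute the stated parameters: in Part~(a), $T\asymp n^2$ and $\eta\asymp T^{-3/4}$ make $\eta\sqrt{T}\asymp\eta T/n\asymp(T\eta)^{-1}\asymp n^{-1/2}$ and $T^{-1/2}=n^{-1}$ negligible; in Part~(b), $T\asymp n$ and $\eta\asymp T^{-1/2}$ force $\exp(L^2T\eta^2/2)=O(1)$ and $\eta T/n\asymp(T\eta)^{-1}\asymp T^{-1/2}\asymp n^{-1/2}$; in both cases the bound collapses to $O(n^{-1/2}\log n\log^2(1/\delta))$, i.e.\ \eqref{gen-hp-b}.

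The main obstacle is not a single estimate but the care needed in transferring stability to high-probability generalization: the SGDA stability bounds of Theorem~\ref{lem:stab-gda}(c)--(d) hold only with high probability over the internal randomness of $A$ (they rely on Chernoff's bound for $\sum_j\ibb_{[i_j=n]}$), while Part~(e) of Theorem~\ref{thm:stab-gen} is phrased for almost-sure uniform stability, so the concentration argument must be run on the intersection with the stability event and its failure probability folded into $\delta$---exactly the device already used in the proof of Theorem~\ref{thm:gen-strong}. Beyond that, one has to track the logarithmic factors and check that the cross terms (such as $\sqrt{Tn^{-1}\log(1/\delta)}$ and the $\eta G^2$ term in \eqref{opt-b}) are dominated by the leading terms under the chosen $T,\eta$.
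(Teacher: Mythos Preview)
Your proposal is correct and follows essentially the same route as the paper: the same four-term decomposition through $F_S(\bar{\bw}_T,\bar{\bv}_T)$ and $F_S(\bw^*,\bar{\bv}_T)$, the saddle-point inequality $F(\bw^*,\bar{\bv}_T)\le F(\bw^*,\bv^*)$, the high-probability stability bounds of Theorem~\ref{lem:stab-gda}(c)--(d) fed into Theorem~\ref{thm:stab-gen}(e), the optimization bound \eqref{opt-b}, and the symmetric decomposition via $F_S(\bar{\bw}_T,\bv^*)$ for the lower bound. You also correctly flag the one genuine subtlety---that Part~(e) is stated for a.s.\ uniform stability while Theorem~\ref{lem:stab-gda}(c)--(d) only deliver it on a high-probability event---and the paper handles it exactly as you describe, by allotting a $\delta/4$ budget to the stability event and union-bounding.
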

\begin{proof}
We use the error decomposition
\begin{multline}\label{gen-hp-1}
    F(\bar{\bw}_T,\bar{\bv}_T) - F(\bw^*,\bv^*)= F(\bar{\bw}_T,\bar{\bv}_T) - F_S(\bar{\bw}_T,\bar{\bv}_T) + F_S(\bar{\bw}_T,\bar{\bv}_T) -F_S(\bw^*,\bar{\bv}_T)\\
    +F_S(\bw^*,\bar{\bv}_T)-F(\bw^*,\bar{\bv}_T)
    +F(\bw^*,\bar{\bv}_T)-F(\bw^*,\bv^*).
\end{multline}
We first prove Part (a).
According to Assumption \ref{ass:lipschitz} and Part (c) of Theorem \ref{lem:stab-gda}, we know that SGDA is $\epsilon$-uniformly stable with probability at least $1-\delta/4$, where
\[
\epsilon=O\Big(\eta\big(\sqrt{T}+Tn^{-1}+\log(1/\delta)\big)\Big).
\]
This together with Part (e) of Theorem \ref{thm:stab-gen} implies the following inequality with probability at least $1-\delta/2$
\begin{equation}\label{gen-hp-2}
F(\bar{\bw}_T,\bar{\bv}_T)-F_S(\bar{\bw}_T,\bar{\bv}_T)=
O\Big(\eta\log n\log(1/\delta)\big(\sqrt{T}+Tn^{-1}+\log(1/\delta)\big)\Big)+O(n^{-\frac{1}{2}}\log^{\frac{1}2}(1/\delta)).
\end{equation}
Similarly, the following inequality holds with probability at least $1-\delta/4$
\begin{equation}\label{gen-hp-33}
    F_S(\bw^*,\bar{\bv}_T)-F(\bw^*,\bar{\bv}_T)=O\Big(\eta\log n\log(1/\delta)\big(\sqrt{T}+Tn^{-1}+\log(1/\delta)\big)\Big)+O(n^{-\frac{1}{2}}\log^{\frac{1}2}(1/\delta)).
\end{equation}
According to Lemma \ref{lem:opt}, the following inequality holds with probability at least $1-\delta/4$
\begin{align}
F_S(\bar{\bw}_T,\bar{\bv}_T) -F_S(\bw^*,\bar{\bv}_T) &\leq
\sup_{\bv}F_S(\bar{\bw}_T,\bv) -\inf_{\bw} F_S(\bw,\bar{\bv}_T) = O\Big(\eta +(T\eta)^{-1}+T^{-\frac{1}{2}}\log(1/\delta)\Big).\label{gen-hp-4}
\end{align}
According to the definition of $(\bw^*,\bv^*)$, we know $F(\bw^*,\bar{\bv}_T)\leq F(\bw^*,\bv^*)$. We can plug this inequality and \eqref{gen-hp-2}, \eqref{gen-hp-33}, \eqref{gen-hp-4} back into \eqref{gen-hp-1}, and derive the following inequality with probability at least $1-\delta/2$
\begin{multline*}
F(\bar{\bw}_T,\bar{\bv}_T) - F(\bw^*,\bv^*)=O\Big(\eta\log n\log(1/\delta)\big(\sqrt{T}+Tn^{-1}+\log(1/\delta)\big)\Big)\\
    +O(n^{-\frac{1}{2}}\log^{\frac{1}2}(1/\delta))
    +O\Big((T\eta)^{-1}+T^{-\frac{1}{2}}\log(1/\delta)\Big).
\end{multline*}
Analyzing in a similar way but using the error decomposition
\begin{multline*}
    F(\bw^*,\bv^*)-F(\bar{\bw}_T,\bar{\bv}_T)= F(\bw^*,\bv^*)-F(\bar{\bw}_T,\bv^*)+F(\bar{\bw}_T,\bv^*)-F_S(\bar{\bw}_T,\bv^*)\\
    +F_S(\bar{\bw}_T,\bv^*)-F_S(\bar{\bw}_T,\bar{\bv}_T)+F_S(\bar{\bw}_T,\bar{\bv}_T)-F(\bar{\bw}_T,\bar{\bv}_T),
\end{multline*}
one can derive the following inequality with probability at least $1-\delta/2$
\begin{multline*}
 F(\bw^*,\bv^*)-F(\bar{\bw}_T,\bar{\bv}_T)=O\Big(\eta\log n\log(1/\delta)\big(\sqrt{T}+Tn^{-1}+\log(1/\delta)\big)\Big)\\
    +O(n^{-\frac{1}{2}}\log^{\frac{1}2}(1/\delta))
    +O\Big((T\eta)^{-1}+T^{-\frac{1}{2}}\log(1/\delta)\Big).
\end{multline*}
The stated bound then follows as a combination of the above two inequalities.

Part (b) can be derived similarly excepting using the stability bounds in Part (d) of Theorem \ref{lem:stab-gda}. We omit the proof for brevity. The proof is complete.
\end{proof}


\section{Stability and Generalization Bounds of SGDA on Non-Convex Objectives}

\subsection{Proof of Theorem \ref{thm:stability-wcwc}\label{sec:sgda-wcwc}}


In this section, we show the stability and generalization bounds of SGDA for weakly-convex-weakly-concave objectives. We first introduce some lemmas. As an extension of a lemma in \citet{hardt2016train}, the next lemma is motivated by the fact that SGDA typically runs several iterations before encountering the different example between $S$ and $S'$.

\begin{lemma}\label{lem:stab-gen-nonconvex}
Assume $|f(\cdot,\cdot, z)| \leq 1$ for any $z$ and let Assumption \ref{ass:lipschitz} hold. Let $S=\{z_1,\ldots,z_n\}$ and $S'=\{z_1,\ldots,z_{n-1},z_n'\}$. Let $\{\bw_t,\bv_t\}$ and $\{\bw_t',\bv_t'\}$ be the sequence produced by \eqref{gda} w.r.t. $S$ and $S'$, respectively. Denote
\begin{align}\label{eq:event-E}
\Delta_t = 	\left\|\begin{pmatrix}
    \bw_t-\bw'_t \\
    \bv_t-\bv'_t
  \end{pmatrix}\right\|_2.
\end{align}
Then for any $t_0\in\nbb$ and any $\bw', \bv'$ we have
\begin{align*}
\ebb[f(\bw_T,\bv';z) - f(\bw'_T,\bv';z) + f(\bw',\bv_T;z) - f(\bw',\bv'_T;z)] \leq \frac{4t_0}{n} + \sqrt{2}G\ebb[\Delta_T | \Delta_{t_0}=0] .
\end{align*}
\end{lemma}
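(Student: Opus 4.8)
The plan is to follow the standard ``convergence after a burn-in'' argument of \citet{hardt2016train}, adapted to the minimax/two-sequence setting. The key observation is that SGDA processes $S$ and $S'$ identically until the first time the index $i_t$ equals $n$ (the position where the two datasets differ), so if we condition on the event $\{\Delta_{t_0}=0\}$ we are really only paying for the steps after iteration $t_0$, plus a small probability that $i_t$ hits $n$ before $t_0$.

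\textbf{Step 1: Conditioning decomposition.} Fix $t_0\in\nbb$. Let $\mathcal{E}$ denote the event $\{\Delta_{t_0}=0\}$. By boundedness $|f(\cdot,\cdot;z)|\le 1$, the quantity $f(\bw_T,\bv';z)-f(\bw'_T,\bv';z)+f(\bw',\bv_T;z)-f(\bw',\bv'_T;z)$ is at most $4$ in absolute value. Hence
\[
\ebb\big[f(\bw_T,\bv';z) - f(\bw'_T,\bv';z) + f(\bw',\bv_T;z) - f(\bw',\bv'_T;z)\big]
\le 4\,\pbb[\mathcal{E}^c] + \ebb\big[\,\cdot\mid\mathcal{E}\big].
\]
Here the second expectation is the same expression conditioned on $\mathcal{E}$. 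This is the decomposition that separates the ``burn-in cost'' from the ``post-burn-in cost.''

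\textbf{Step 2: Bounding $\pbb[\mathcal{E}^c]$.} The event $\mathcal{E}^c$ means that at least one of the indices $i_1,\dots,i_{t_0-1}$ equals $n$, because as long as $i_t\ne n$ for all $t<t_0$ the SGDA iterates on $S$ and $S'$ coincide exactly and $\Delta_{t_0}=0$. Since each $i_t$ is uniform on $[n]$, a union bound gives $\pbb[\mathcal{E}^c]\le (t_0-1)/n\le t_0/n$, so the first term is at most $4t_0/n$.

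\textbf{Step 3: Bounding the conditional term via Lipschitzness.} Conditioned on $\mathcal{E}$, apply Assumption~\ref{ass:lipschitz}: $f(\bw_T,\bv';z)-f(\bw'_T,\bv';z)\le G\|\bw_T-\bw'_T\|_2$ and $f(\bw',\bv_T;z)-f(\bw',\bv'_T;z)\le G\|\bv_T-\bv'_T\|_2$. Summing and using $a+b\le\sqrt{2(a^2+b^2)}=\sqrt{2}\,\Delta_T$ yields $\ebb[\,\cdot\mid\mathcal{E}]\le\sqrt{2}\,G\,\ebb[\Delta_T\mid\Delta_{t_0}=0]$. Combining Steps 1--3 gives the claimed bound $\frac{4t_0}{n}+\sqrt{2}G\,\ebb[\Delta_T\mid\Delta_{t_0}=0]$.

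The argument is essentially routine once the conditioning structure is set up correctly; the only subtlety — and the step I would be most careful about — is Step 2, namely verifying that $\{i_t\ne n \text{ for all } t<t_0\}$ genuinely forces $\Delta_{t_0}=0$. This requires that the two runs use the \emph{same} random indices $i_t$ (a coupling assumption implicit in the stability definition) and that when $i_t\ne n$ the gradient $\nabla f(\cdot,\cdot;z_{i_t})$ is computed on an example common to $S$ and $S'$, so the update maps are identical; then an easy induction on $t\le t_0$ shows $(\bw_t,\bv_t)=(\bw'_t,\bv'_t)$. I would state this coupling explicitly at the start of the proof. The rest — the union bound probability estimate and the Lipschitz/Cauchy-Schwarz manipulation — is immediate.
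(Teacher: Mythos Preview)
Your proposal is correct and follows essentially the same approach as the paper: define $\mathcal{E}=\{\Delta_{t_0}=0\}$, split the expectation into the $\mathcal{E}$ and $\mathcal{E}^c$ parts, bound the $\mathcal{E}^c$ part by $4\,\pbb[\mathcal{E}^c]\le 4t_0/n$ via a union bound on $\{i_t=n\}$, and bound the $\mathcal{E}$ part using the Lipschitz inequality $a+b\le\sqrt{2(a^2+b^2)}=\sqrt{2}\,\Delta_T$. Your explicit mention of the coupling (same random index sequence for both runs) is a useful clarification that the paper leaves implicit.
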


\begin{proof}
According to Assumption \ref{ass:lipschitz}, we know
\begin{equation}\label{stab-gen-non-convex-1}
  f(\bw_T,\bv';z) - f(\bw'_T,\bv';z) + f(\bw',\bv_T;z) - f(\bw',\bv'_T;z)\leq G\sqrt{2}\Delta_T.
\end{equation}
Let $\ecal$ denote the event that $\Delta_{t_0} = 0$. Then we have
\begin{align*}
& \ebb[f(\bw_T,\bv';z) - f(\bw'_T,\bv';z) + f(\bw',\bv_T;z) - f(\bw',\bv'_T;z)]\\
= & \pbb[\ecal]\ebb[f(\bw_T,\bv';z) - f(\bw'_T,\bv';z) + f(\bw',\bv_T;z) - f(\bw',\bv'_T;z) | \ecal] \\
& + \pbb[\ecal^c]\ebb[f(\bw_T,\bv';z) - f(\bw'_T,\bv';z) + f(\bw',\bv_T;z) - f(\bw',\bv'_T;z) | \ecal^c]\\
\leq & \sqrt{2}G\ebb[\Delta_T | \ecal]  + 4\pbb[\ecal^c],
\end{align*}
where in the last step we have used \eqref{stab-gen-non-convex-1} and the condition $|f(\cdot,\cdot, z)| \leq 1$.
Using the union bound on the outcome $i_t = n$ we obtain that
\begin{align*}
\pbb[\ecal^c] \leq \sum_{t=1}^{t_0}\pbb[i_t = n] = \frac{t_0}{n}.
\end{align*}
The proof is complete by combining the above two inequalities together.
\end{proof}

Lemma \ref{lem:monotone-wcwc} shows the monotonity of the gradient for weakly-convex-weakly-concave functions. Its proof is well known in the literature~\citep{rockafellar1976monotone,liu2020firstorder}.
\begin{lemma}\label{lem:monotone-wcwc}
Let $f$ be a $\rho$-weakly-convex-weakly-concave function. Then
\begin{equation}\label{eq:monotone-wcwc}
  \bigg\langle\begin{pmatrix}
    \bw-\bw' \\
    \bv-\bv'
  \end{pmatrix},\begin{pmatrix}
                  \nabla_{\bw}f(\bw,\bv)-\nabla_{\bw}f(\bw',\bv') \\
                  \nabla_{\bv}f(\bw',\bv')-\nabla_{\bv}f(\bw,\bv)
                \end{pmatrix}\bigg\rangle \geq -\rho\bigg\|\begin{pmatrix}
  \bw-\bw' \\
  \bv-\bv'
  \end{pmatrix}\bigg\|_2^2.
  \end{equation}
\end{lemma}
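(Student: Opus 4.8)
The final statement is Lemma~\ref{lem:monotone-wcwc}, the monotonicity-type inequality for $\rho$-weakly-convex-weakly-concave functions.

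\textbf{Plan.} The plan is to reduce Lemma~\ref{lem:monotone-wcwc} to the already-available monotonicity inequality \eqref{non-expansive-key} in Lemma~\ref{lem:non-expansive-key}, which holds for $\rho$-SC-SC functions with $\rho \geq 0$, by passing through the \emph{definition} of weak-convexity-weak-concavity. Recall that $f$ is $\rho$-WC-WC means that the auxiliary function
\[
g(\bw,\bv) := f(\bw,\bv) + \frac{\rho}{2}\big(\|\bw\|_2^2 - \|\bv\|_2^2\big)
\]
is convex-concave, i.e., $0$-SC-SC. First I would apply Lemma~\ref{lem:non-expansive-key} to $g$ with $\rho=0$, which gives
\[
\bigg\langle\begin{pmatrix} \bw-\bw' \\ \bv-\bv' \end{pmatrix},\begin{pmatrix} \nabla_{\bw}g(\bw,\bv)-\nabla_{\bw}g(\bw',\bv') \\ \nabla_{\bv}g(\bw',\bv')-\nabla_{\bv}g(\bw,\bv) \end{pmatrix}\bigg\rangle \geq 0.
\]

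\textbf{Key steps.} Next I would substitute the gradients of $g$ in terms of those of $f$: $\nabla_{\bw}g(\bw,\bv) = \nabla_{\bw}f(\bw,\bv) + \rho\bw$ and $\nabla_{\bv}g(\bw,\bv) = \nabla_{\bv}f(\bw,\bv) - \rho\bv$. Plugging these in, the left-hand side becomes
\[
\bigg\langle\begin{pmatrix} \bw-\bw' \\ \bv-\bv' \end{pmatrix},\begin{pmatrix} \nabla_{\bw}f(\bw,\bv)-\nabla_{\bw}f(\bw',\bv') \\ \nabla_{\bv}f(\bw',\bv')-\nabla_{\bv}f(\bw,\bv) \end{pmatrix}\bigg\rangle + \rho\langle \bw-\bw', \bw-\bw'\rangle + \rho\langle \bv-\bv', \bv-\bv'\rangle,
\]
where the cross terms from the $\bv$-block produce $+\rho\|\bv-\bv'\|_2^2$ after accounting for the sign flip in the dual component. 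Hence the whole expression equals the quantity in \eqref{eq:monotone-wcwc} plus $\rho\big(\|\bw-\bw'\|_2^2 + \|\bv-\bv'\|_2^2\big) = \rho\big\|(\bw-\bw',\bv-\bv')\big\|_2^2$, and since this sum is $\geq 0$, rearranging yields exactly \eqref{eq:monotone-wcwc}. If one prefers not to rely on Lemma~\ref{lem:non-expansive-key}, the same conclusion follows directly from the first-order characterization of convexity/concavity of $g$ in each block, added over the two pairs of points.

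\textbf{Main obstacle.} This is essentially a bookkeeping argument, so there is no substantive obstacle; the only point requiring care is the sign convention in the dual block. The inequality \eqref{non-expansive-key} uses $\nabla_{\bv}g(\bw',\bv') - \nabla_{\bv}g(\bw,\bv)$ paired against $\bv-\bv'$, so when substituting $\nabla_{\bv}g = \nabla_{\bv}f - \rho\bv$ one must verify that the $\rho$-term contributes $\langle \bv-\bv', -\rho\bv' + \rho\bv\rangle = \rho\|\bv-\bv'\|_2^2$ with a \emph{positive} sign, matching the $\bw$-block contribution, rather than cancelling it. Once that is checked, the result is immediate.
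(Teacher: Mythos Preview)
Your proposal is correct and is precisely the standard reduction: apply the convex-concave monotonicity inequality (Lemma~\ref{lem:non-expansive-key} with $\rho=0$) to the shifted function $g=f+\tfrac{\rho}{2}(\|\bw\|_2^2-\|\bv\|_2^2)$ and unwind the gradient shift. The paper does not spell out a proof of Lemma~\ref{lem:monotone-wcwc} at all---it simply cites \citet{rockafellar1976monotone,liu2020firstorder} as the source of this well-known fact---so your argument is exactly the kind of verification those references contain, and your sign check in the dual block is correct.
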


We are now ready to prove Theorem \ref{thm:stability-wcwc}.
\begin{proof}[Proof of Theorem \ref{thm:stability-wcwc}]
Note that the projection step is nonexpansive. We consider two cases at the $t$-th iteration. If $i_t\neq n$, then it follows from Lemma \ref{lem:monotone-wcwc} and the Lipschitz continuity of $f$ that
\begin{align*}\label{nonconvex-sgda-1}
& \left\|\begin{pmatrix}
           \bw_{t+1}-\bw_{t+1}' \\
           \bv_{t+1}-\bv_{t+1}'
         \end{pmatrix}\right\|_2^2
         \leq \left\|\begin{pmatrix}
                   \bw_t-\eta_t\nabla_{\bw}f(\bw_t,\bv_t;z_{i_t})-\bw_t'+\eta_t\nabla_{\bw}f(\bw_t',\bv'_t;z_{i_t}) \\
                   \bv_t+\eta_t\nabla_{\bv}f(\bw_t,\bv_t;z_{i_t})-\bv_t'-\eta_t\nabla_{\bv}f(\bw_t',\bv'_t;z_{i_t})
                 \end{pmatrix}\right\|_2^2
        =  \bigg\|\begin{pmatrix}
                                                           \bw_t-\bw'_t \\
                                                           \bv_t-\bv'_t
                                                         \end{pmatrix}\bigg\|_2^2\\
  &+ \eta_t^2\bigg\|\begin{pmatrix}
  \nabla_{\bw}f(\bw'_t,\bv'_t;z_{i_t}) - \nabla_{\bw}f(\bw_t,\bv_t;z_{i_t})\\
  \nabla_{\bv}f(\bw_t,\bv_t;z_{i_t}) - \nabla_{\bv}f(\bw'_t,\bv'_t;z_{i_t})
  \end{pmatrix}\bigg\|_2^2 -2\eta_t\bigg\langle\begin{pmatrix}
                      \bw_t-\bw'_t \\
                      \bv_t-\bv'_t
                    \end{pmatrix},\begin{pmatrix}
                                    \nabla_{\bw}f(\bw_t,\bv_t;z_{i_t})-\nabla_{\bw}f(\bw'_t,\bv'_t;z_{i_t}) \\
                                    \nabla_{\bv}f(\bw'_t,\bv'_t;z_{i_t})-\nabla_{\bv}f(\bw_t,\bv_t;z_{i_t})
                                  \end{pmatrix}\bigg\rangle \\
         \leq & (1+2\eta_t\rho)\left\|\begin{pmatrix}
           \bw_{t}-\bw_{t}' \\
           \bv_{t}-\bv_{t}'
         \end{pmatrix}\right\|_2^2+8G^2\eta_t^2. \numberthis
\end{align*}
If $i_t=n$, then it follows from the elementary inequality $(a+b)^2\leq(1+p)a^2+(1+1/p)b^2$ that
\begin{align}
  \left\|\begin{pmatrix}
           \bw_{t+1}-\bw_{t+1}' \\
           \bv_{t+1}-\bv_{t+1}'
         \end{pmatrix}\right\|_2^2
         &\leq\left\|\begin{pmatrix}
                   \bw_t-\eta_t\nabla_{\bw}f(\bw_t,\bv_t;z_{n})-\bw_t'+\eta_t\nabla_{\bw}f(\bw_t',\bv'_t;z'_{n}) \\
                   \bv_t+\eta_t\nabla_{\bv}f(\bw_t,\bv_t;z_{n})-\bv_t'-\eta_t\nabla_{\bv}f(\bw_t',\bv'_t;z'_{n})
                 \end{pmatrix}\right\|_2^2\notag\\
         & \leq (1+p)\left\|\begin{pmatrix}
           \bw_{t}-\bw_{t}' \\
           \bv_{t}-\bv_{t}'
         \end{pmatrix}\right\|_2^2
        +(1+1/p)\eta_t^2\left\|\begin{pmatrix}
                                 \nabla_{\bw}f(\bw_t,\bv_t;z_n)-\nabla_{\bw}f(\bw_t',\bv_t';z'_n) \\
                                 \nabla_{\bv}f(\bw_t,\bv_t;z_n)-\nabla_{\bv}f(\bw_t',\bv_t';z'_n)
                               \end{pmatrix}\right\|_2^2.\label{nonconvex-sgda-2}
\end{align}
Note that the event $i_t\neq n$ happens with probability $1-1/n$ and the event $i_t=n$ happens with probability $1/n$. Therefore,
we know
\begin{align*}
 \ebb_{i_t}\left[\left\|\begin{pmatrix}
           \bw_{t+1}-\bw_{t+1}' \\
           \bv_{t+1}-\bv_{t+1}'
         \end{pmatrix}\right\|_2^2\right] & \leq \frac{n-1}{n}\left((1+2\eta_t\rho)\left\|\begin{pmatrix}
           \bw_{t}-\bw_{t}' \\
           \bv_{t}-\bv_{t}'
         \end{pmatrix}\right\|_2^2+8G^2\eta_t^2\right) + \frac{1+p}{n}\left\|\begin{pmatrix}
           \bw_{t}-\bw_{t}' \\
           \bv_{t}-\bv_{t}'
         \end{pmatrix}\right\|_2^2+\frac{8(1+1/p)}{n}\eta_t^2G^2 \\
         & \leq (1+2\eta_t\rho + p/n)\left\|\begin{pmatrix}
           \bw_{t}-\bw_{t}' \\
           \bv_{t}-\bv_{t}'
         \end{pmatrix}\right\|_2^2+8\eta_t^2G^2(1+1/(np)).
\end{align*}
Let $t_0\in\nbb$ and $\ecal$ be defined as in the proof of Lemma \ref{lem:stab-gen-nonconvex}.
We apply the above equation recursively from $t=t_0+1$ to $T$, then
\begin{align*}
  \ebb_A\left[\left\|\begin{pmatrix}
           \bw_{T}-\bw_{T}' \\
           \bv_{T}-\bv_{T}'
         \end{pmatrix}\right\|_2^2\bigg|\ \ecal\right] \leq 8G^2\big(1+1/(np)\big)\sum_{t=t_0 +1}^T\eta_t^2\prod_{k=t+1}^T\big(1+2\eta_k\rho + p/n\big).
\end{align*}
By the elementary inequality $1+a\leq \exp(a)$ and $\eta_t = \frac{c}{t}$, we further derive
\begin{align*}
  \ebb_A\left[\left\|\begin{pmatrix}
           \bw_{t+1}-\bw_{t+1}' \\
           \bv_{t+1}-\bv_{t+1}'
         \end{pmatrix}\right\|_2^2\bigg|\ \ecal\right] \leq & 8G^2\big(1+1/(np)\big)\sum_{t=t_0 + 1}^T\frac{c^2}{t^2}\prod_{k=t+1}^T\exp\Big(\frac{2c\rho}{k} + \frac{p}{n}\Big)\\
         \leq & 8G^2\big(1+1/(np)\big)\sum_{t=t_0 + 1}^T\frac{c^2}{t^2}\exp\Big(\sum_{k=t+1}^T\frac{2c\rho}{k} + \frac{pT}{n}\Big).
\end{align*}
By taking $p=n/T$ in the above inequality, we further derive
\begin{align*}
  \ebb_A\left[\left\|\begin{pmatrix}
           \bw_{t+1}-\bw_{t+1}' \\
           \bv_{t+1}-\bv_{t+1}'
         \end{pmatrix}\right\|_2^2\bigg|\ \ecal\right] \leq & 8eG^2\Big(1+\frac{T}{n^2}\Big)\sum_{t=t_0 + 1}^T\frac{c^2}{t^2}\exp\Big(\sum_{k=t+1}^T\frac{2c\rho}{k} \Big)\\
         \leq & 8eG^2\Big(1+\frac{T}{n^2}\Big)\sum_{t=t_0 + 1}^T\frac{c^2}{t^2}\exp\Big(2c\rho\log\Big(\frac{T}{t}\Big) \Big)\\
         \leq & 8c^2eG^2\Big(1+\frac{T}{n^2}\Big)T^{2c\rho}\sum_{t=t_0 + 1}^T\frac{1}{t^{2c\rho + 2}}\\
         \leq & \frac{8c^2eG^2}{2c\rho + 1}\Big(1+\frac{T}{n^2}\Big)\Big(\frac{T}{t_0}\Big)^{2c\rho}\frac{1}{t_0}.
\end{align*}
Combining the above inequality and Lemma \ref{lem:stab-gen-nonconvex} together, we obtain
\begin{align}\label{new-eq-bound-b}
  \ebb_A[f(\bw_T,\bv';z) - f(\bw'_T,\bv';z) + f(\bw',\bv_T;z) - f(\bw',\bv'_T;z)] \leq &\frac{4t_0}{n} + \frac{4\sqrt{e}cG^2}{\sqrt{2c\rho + 1}}\Big(1+\frac{\sqrt{T}}{n}\Big)\Big(\frac{T}{t_0}\Big)^{c\rho}\frac{1}{\sqrt{t_0}}.
\end{align}
The right hand side is approximately minimized when
\[
t_0=\bigg(\frac{\sqrt{e}cG^2}{\sqrt{2c\rho + 1}}\Big(1+\frac{\sqrt{T}}{n}\Big)T^{c\rho}n\bigg)^{\frac{2}{2c\rho+3}}.
\]
Plugging it into the Eq. \eqref{new-eq-bound-b} we have (for simplicity we assume the above $t_0$ is an integer)
\begin{align*}
\ebb_A[f(\bw_T,\bv';z) - f(\bw'_T,\bv';z) + f(\bw',\bv_T;z) - f(\bw',\bv'_T;z)] \leq & 8\bigg(\frac{\sqrt{e}cG^2}{\sqrt{2c\rho + 1}}\Big(1+\frac{\sqrt{T}}{n}\Big)T^{c\rho}\bigg)^{\frac{2}{2c\rho+3}}
         \bigg(\frac{1}{n}\bigg)^{\frac{2c\rho+1}{2c\rho+3}}.
\end{align*}
Since the above bound holds for all $z, S, S'$ and $\bw',\bv'$, we immediately get the same upper bound on the weak stability. Finally the theorem holds by calling Theorem \ref{thm:stab-gen}, Part (a).
\end{proof}

\subsection{High-Probability Stability and Generalization Bounds\label{sec:sgda-hp-nonconvex}}

In this section, we give stability and generalization bounds of SGDA with nonconvex-nonconcave smooth objectives with high probability. The analysis requires a tail bound for a linear combination of independent Bernoulli random variables \cite{raghavan1988probabilistic}.
\begin{lemma}\label{lem:concentration}
Let $c_t \in (0,1]$ and let $X_1, \cdots, X_T$ be independent Bernoulli random variables with the success rate of $X_t$ being $p_t\in[0,1]$. Denote $s = \sum_{t=1}^Tc_tp_t$. Then, for all $a > 0$,
\begin{align*}
\pbb\Big[\sum_{t=1}^T c_tX_t \geq (1+a)s\Big] \leq \Big(\frac{e^a}{(1+a)^{(1+a)}}\Big)^s.
\end{align*}
In particular, for all $\delta \in (0,1)$ such that $\log(1/\delta) < s$ with probability at least $1 - \delta$ we have
\begin{align*}
    \sum_{t=1}^T c_tX_t \leq s + (e-1)\sqrt{\log(1/\delta)s}.
\end{align*}
\end{lemma}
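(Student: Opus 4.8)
\textbf{Proof proposal for Lemma \ref{lem:concentration}.}

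The plan is the standard Chernoff/moment-generating-function argument, with the only nonroutine ingredient being the elementary inequality needed to extract the clean second bound. First I would fix $\lambda>0$ and apply Markov's inequality to $e^{\lambda\sum_t c_t X_t}$, so that $\pbb[\sum_t c_tX_t\geq(1+a)s]\leq e^{-\lambda(1+a)s}\prod_{t=1}^T\ebb[e^{\lambda c_tX_t}]$ by independence. For each factor, since $c_t\in(0,1]$ and $x\mapsto e^{\lambda x}$ is convex on $[0,1]$, one has $e^{\lambda c_t}-1\leq c_t(e^{\lambda}-1)$, hence $\ebb[e^{\lambda c_tX_t}]=1+p_t(e^{\lambda c_t}-1)\leq 1+p_tc_t(e^{\lambda}-1)\leq\exp\big(p_tc_t(e^{\lambda}-1)\big)$ using $1+x\leq e^x$. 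Multiplying and recalling $s=\sum_t c_tp_t$ gives $\pbb[\sum_t c_tX_t\geq(1+a)s]\leq\exp\big(-\lambda(1+a)s+(e^{\lambda}-1)s\big)$.

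Next I would optimize the exponent over $\lambda>0$: differentiating shows the minimum is attained at $e^{\lambda}=1+a$ (legitimate since $a>0$), and substituting yields $\exp\big(s(a-(1+a)\log(1+a))\big)=\big(e^a/(1+a)^{1+a}\big)^s$, which is the first claim.

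For the ``in particular'' part, I want to choose $a$ so that $\big(e^a/(1+a)^{1+a}\big)^s\leq\delta$, i.e. $s\big((1+a)\log(1+a)-a\big)\geq\log(1/\delta)$. The key elementary lemma I would establish is
\[
(1+a)\log(1+a)-a\geq\frac{a^2}{(e-1)^2}\qquad\text{for all }a\in[0,e-1].
\]
This I would prove by setting $h(a)=(1+a)\log(1+a)-a-a^2/(e-1)^2$, noting $h(0)=h(e-1)=0$ (the right endpoint uses $e\log e-(e-1)=1=(e-1)^2/(e-1)^2$), and checking via $h'(0)=0$ together with the sign pattern of $h''(a)=\frac1{1+a}-\frac2{(e-1)^2}$ (positive then negative) that $h$ is nonnegative on the interval. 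Given this, when $\log(1/\delta)<s$ I would set $a=(e-1)\sqrt{\log(1/\delta)/s}$, which lies in $(0,e-1)$, so that $s\big((1+a)\log(1+a)-a\big)\geq s\cdot a^2/(e-1)^2=\log(1/\delta)$; therefore $\pbb[\sum_t c_tX_t\geq(1+a)s]\leq\delta$, and on the complementary event $\sum_t c_tX_t<(1+a)s=s+(e-1)\sqrt{\log(1/\delta)\,s}$, as claimed.

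The MGF manipulations and the $\lambda$-optimization are entirely routine; the main (mild) obstacle is pinning down the elementary inequality $(1+a)\log(1+a)-a\geq a^2/(e-1)^2$ with exactly the constant $e-1$, since this is what produces the stated additive deviation term — any looser constant would weaken the bound. Establishing it cleanly via the endpoint values and the convexity/concavity switch of $h$ is the one place calling for a little care.
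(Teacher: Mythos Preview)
Your proof is correct. The paper does not actually prove this lemma; it cites it from \citet{raghavan1988probabilistic} and uses it as a black box, so there is no in-paper argument to compare against. Your MGF derivation of the first bound is exactly the standard Raghavan/Chernoff argument (the convexity step $e^{\lambda c_t}-1\le c_t(e^\lambda-1)$ being the one place where $c_t\in(0,1]$ is used), and your extraction of the ``in particular'' clause via the sharp inequality $(1+a)\log(1+a)-a\ge a^2/(e-1)^2$ on $[0,e-1]$, verified by the endpoint-and-concavity argument you sketch, is clean and yields precisely the stated constant.
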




\begin{theorem}\label{thm:stability-wcwc-prob}
Let $\{\bw_t, \bv_t\}$  be the sequence produced by \eqref{gda} with $\eta_t \leq \frac{c}{t}$ for some $c > 0$. Assume Assumption \ref{ass:lipschitz}, \ref{ass:smooth} hold and $|f(\cdot,\cdot;z)| \leq 1$. For any $\delta \in (0,1)$, if $c \leq \frac{1}{(n\log(2/\delta)-1)L}$, then with probability at least $1-\delta$ we have
\[
\big|F(\bw_T,\bv_T) - F_S(\bw_T,\bv_T)\big| = \ocal\Big(T^{cL}\log(n)\log^{3/2}(1/\delta)n^{-1/2} + n^{-1/2}\log^{1/2}(1/\delta)\Big).
\]
\end{theorem}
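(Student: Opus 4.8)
The plan is to first establish a high‑probability bound on the uniform stability of SGDA in this smooth nonconvex‑nonconcave setting, and then convert it into a generalization bound via Theorem~\ref{thm:stab-gen}(e) with $R=1$ (since $|f(\cdot,\cdot;z)|\le 1$). Fix neighboring datasets $S,S'$ differing only in the last example, run \eqref{gda} on both with the \emph{same} indices $i_1,i_2,\dots$, and let $\Delta_t=\big(\|\bw_t-\bw_t'\|_2^2+\|\bv_t-\bv_t'\|_2^2\big)^{1/2}$. By Assumption~\ref{ass:lipschitz}, $\sup_z\big[f(\bw_T,\bv_T;z)-f(\bw_T',\bv_T';z)\big]\le \sqrt2\,G\,\Delta_T$, so it suffices to control $\Delta_T$ with probability at least $1-\delta/2$ over the algorithm's internal randomness.

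Next I would set up the one‑step recursion. By Assumption~\ref{ass:smooth} the SGDA gradient map $(\bw,\bv)\mapsto(\bw-\eta\nabla_\bw f(\bw,\bv;z),\,\bv+\eta\nabla_\bv f(\bw,\bv;z))$ is $(1+\eta L)$‑Lipschitz (triangle inequality plus $L$‑smoothness, no convexity needed), and the projection is nonexpansive; hence $\Delta_{t+1}\le(1+\eta_tL)\Delta_t$ when $i_t\neq n$, and when $i_t=n$ the two updates differ by an additional $\le 2\sqrt2\,G\eta_t$ (bounded gradient difference via Assumption~\ref{ass:lipschitz}), so $\Delta_{t+1}\le(1+\eta_tL)\Delta_t+2\sqrt2\,G\eta_t\,\ibb_{[i_t=n]}$. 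Since $\bw_1=\bv_1=0$ for both datasets, $\Delta_1=0$; unrolling, using $\eta_t\le c/t$ and $\prod_{k=t+1}^{T-1}(1+cL/k)\le(T/t)^{cL}$, gives $\Delta_T\le 2\sqrt2\,cG\,T^{cL}\sum_{t=1}^{T-1}t^{-(1+cL)}\,\ibb_{[i_t=n]}$. (Optionally, following Lemma~\ref{lem:stab-gen-nonconvex}, one may instead condition on the event that $z_n$ is not sampled in the first $t_0$ iterations, paying an extra $O(t_0/n)$ term and starting the sum at $t_0$; this shifts the subsequent concentration step into its favorable regime.)

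The remaining quantity is a linear combination $\sum_t c_tX_t$ of independent Bernoulli$(1/n)$ variables $X_t=\ibb_{[i_t=n]}$ with coefficients $c_t=t^{-(1+cL)}\in(0,1]$, so Lemma~\ref{lem:concentration} applies. Here $s=\tfrac1n\sum_t t^{-(1+cL)}$ is of order roughly $1/(ncL)$, and the hypothesis $c\le\frac{1}{(n\log(2/\delta)-1)L}$ is precisely the quantitative requirement that positions $s$ relative to $\log(2/\delta)$ so that Lemma~\ref{lem:concentration} yields $\sum_tc_tX_t=O\big(s+\sqrt{s\log(1/\delta)}\big)$ with probability at least $1-\delta/2$. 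Substituting back and bounding $s$ shows that, with probability at least $1-\delta/2$, SGDA (with output $(\bw_T,\bv_T)$) is $\epsilon$‑uniformly stable with $\epsilon=O\big(G\,T^{cL}\,n^{-1/2}\sqrt{\log(1/\delta)}\big)$. Tracking this interplay among $c$, $\delta$, and the valid window of Lemma~\ref{lem:concentration} is, I expect, one of the two delicate points.

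The other delicate point — and the main obstacle — is that Theorem~\ref{thm:stab-gen}(e) is stated for algorithms that are $\epsilon$‑uniformly stable \emph{almost surely}, whereas the bound above holds only on a good event $\mathcal{G}$ in the algorithm's internal randomness $\omega=(i_1,\dots,i_{T-1})$. I would handle this by conditioning on $\mathcal{G}$ (which has probability $\ge 1-\delta/2$): for each fixed $\omega\in\mathcal{G}$ the map $S\mapsto A_\omega(S)$ is deterministic and genuinely $\epsilon$‑uniformly stable, and since $\omega$ is independent of $S$, Theorem~\ref{thm:stab-gen}(e) applies to $A_\omega$ over the draw of $S$, giving $\big|F(\bw_T,\bv_T)-F_S(\bw_T,\bv_T)\big|=O\big(\epsilon\log n\log(1/\delta)+n^{-1/2}\sqrt{\log(1/\delta)}\big)$ with conditional probability $\ge 1-\delta/2$; a union bound over $\mathcal{G}^c$ and the failure of (e) yields the claim with probability $\ge 1-\delta$. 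Inserting $\epsilon=O\big(GT^{cL}n^{-1/2}\sqrt{\log(1/\delta)}\big)$ produces the stated $O\big(T^{cL}\log(n)\log^{3/2}(1/\delta)n^{-1/2}+n^{-1/2}\log^{1/2}(1/\delta)\big)$. The rest is a routine adaptation of the nonconvex SGDA stability recursion already used in Appendix~\ref{sec:sgda-wcwc}.
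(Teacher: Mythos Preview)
Your proposal is correct and follows essentially the same route as the paper: derive the one‑step recursion $\Delta_{t+1}\le(1+L\eta_t)\Delta_t+O(G\eta_t)\ibb_{[i_t=n]}$ from smoothness and Lipschitzness, unroll to a weighted sum of Bernoulli indicators, apply Lemma~\ref{lem:concentration} (with the hypothesis on $c$ ensuring its applicability), and then invoke Theorem~\ref{thm:stab-gen}(e). Your explicit conditioning argument to bridge high‑probability stability and the almost‑sure hypothesis of Theorem~\ref{thm:stab-gen}(e) is a point the paper glosses over, so your treatment is in fact slightly more careful there.
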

\begin{proof}
Let $S'=\{z_1,\ldots,z_{n-1},z_n'\}$ and $\{\bw_t',\bv_t'\}$ be the sequence produced by \eqref{gda} w.r.t. $S'$. 
If $i_{t}\neq n$, it follows from the $L$-smoothness of $f$ that
\begin{align*}
  \left\|\begin{pmatrix}
           \bw_{t+1}-\bw_{t+1}' \\
           \bv_{t+1}-\bv_{t+1}'
         \end{pmatrix}\right\|_2\leq & \left\|\begin{pmatrix}
           \bw_t-\bw_t' \\
           \bv_t-\bv_t'
         \end{pmatrix}\right\|_2 +
         \eta_t\left\|\begin{pmatrix}
                   \nabla_{\bw}f(\bw_t,\bv_t;z_{i_t})-\nabla_{\bw}f(\bw_t',\bv'_t;z_{i_t}) \\
                  \nabla_{\bv}f(\bw_t,\bv_t;z_{i_t})-\nabla_{\bv}f(\bw_t',\bv'_t;z_{i_t})
                 \end{pmatrix}\right\|_2
\leq  (1+ L\eta_t)\left\|\begin{pmatrix}
           \bw_t-\bw_t' \\
           \bv_t-\bv_t'
         \end{pmatrix}\right\|_2.
\end{align*}
If $i_t = n$, we have
\begin{align*}
  \left\|\begin{pmatrix}
           \bw_{t+1}-\bw_{t+1}' \\
           \bv_{t+1}-\bv_{t+1}'
         \end{pmatrix}\right\|_2 \leq & \left\|\begin{pmatrix}
           \bw_t-\bw_t' \\
           \bv_t-\bv_t'
         \end{pmatrix}\right\|_2 + 4\eta_t G.
\end{align*}
We can combine the above two inequalities together and get
\begin{align*}
\left\|\begin{pmatrix}
           \bw_{t+1}-\bw_{t+1}' \\
           \bv_{t+1}-\bv_{t+1}'
         \end{pmatrix}\right\|_2 \leq & (1 + L\eta_t ) \left\|\begin{pmatrix}
           \bw_{t}-\bw_{t}' \\
           \bv_{t}-\bv_{t}'
         \end{pmatrix}\right\|_2 +4G \eta_t \ibb_{[i_t=n]}.
\end{align*}
We apply the above inequality recursively from $t=1$ to $T$ and get
\begin{align*}
\left\|\begin{pmatrix}
           \bw_T-\bw_T' \\
           \bv_T-\bv_T'
         \end{pmatrix}\right\|_2\leq
         4G\sum_{t=1}^{T}\eta_t \ibb_{[i_t = n]}\prod_{k=t+1}^{T}\Big(1 + L\eta_k\Big).
\end{align*}
By the elementary inequality $1+a\leq\exp(a)$ and $\eta_t \leq \frac{c}{t}$, we further derive
\begin{align*}
\left\|\begin{pmatrix}
           \bw_T-\bw_T' \\
           \bv_T-\bv_T'
         \end{pmatrix}\right\|_2 \leq & 4cG\sum_{t=1}^{T}\frac{\ibb_{[i_t = n]}}{t}\prod_{k=t+1}^{T}\exp\Big(\frac{cL}{k}\Big)
         = 4cG\sum_{t=1}^{T}\frac{\ibb_{[i_t = n]}}{t}\exp\Big(\sum_{k=t+1}^{T}\frac{cL}{k}\Big)\\
         \leq & 4cG\sum_{t=1}^{T}\frac{\ibb_{[i_t = n]}}{t}\exp\Big(cL\log\Big(\frac{T}{t}\Big)\Big)
         \leq 4cGT^{cL}\sum_{t=1}^{T}\frac{\ibb_{[i_t = n]}}{t^{cL+1}}.
\end{align*}
By Lemma \ref{lem:concentration}, for any $\delta > 0$ such that $\log(2/\delta) < \sum_{t=1}^{T}\frac{1}{t^{cL+1}n}$, with probability at least $1- \delta/2$ we have
\begin{align*}\label{eq:sum-t}
\left\|\begin{pmatrix}
           \bw_T-\bw_T' \\
           \bv_T-\bv_T'
         \end{pmatrix}\right\|_2 \leq & 4cGT^{cL}\Big(\sum_{t=1}^{T}\frac{1}{t^{cL+1}n} + (e-1)\sqrt{\log(1/\delta)\sum_{t=1}^{T}\frac{1}{t^{cL+1}n}}\Big). \numberthis
\end{align*}
Note that
\begin{align*}
\sum_{t=1}^{T}\frac{1}{t^{cL+1}} \leq & 1 + \int_{t=1}^T\frac{\mathrm{d}t}{t^{cL+1}} \leq 1 + \frac{1}{cL}.
\end{align*}
Plugging the above bound into Equation \eqref{eq:sum-t} , we know with probability at least $1-\delta/2$
\begin{align*}
\left\|\begin{pmatrix}
           \bw_T-\bw_T' \\
           \bv_T-\bv_T'
         \end{pmatrix}\right\|_2 \leq & 4cGT^{cL}\Big(\frac{cL+1}{cLn} + (e-1)\sqrt{\frac{(cL+1)\log(1/\delta)}{cLn}}\Big).
\end{align*}
By the Lipschitz continuity of $f$, the above equation implies SGDA is $\epsilon$-uniformly stable with probability at least $1-\delta/2$ and
\[
\epsilon = \ocal\Big(T^{cL}\sqrt{\log(1/\delta)}n^{-\frac{1}{2}}\Big).
\]
This together with Part (e) of Theorem \ref{thm:stab-gen} implies the following inequality with probability at least $1-\delta$
\[
\big|F(\bw,\bv) - F_S(\bw_T,\bv_T)\big| = \ocal\Big(T^{cL}\log(n)\log^{3/2}(1/\delta)n^{-1/2} + n^{-1/2}\log^{1/2}(1/\delta)\Big).
\]
The proof is complete.
\end{proof}

%

\subsection{Proof of Theorem \ref{thm:wcwc-diminishing}\label{sec:wcwc-diminishing}}
In this section, we prove Theorem \ref{thm:wcwc-diminishing} on generalization bounds under a regularity condition on the decay of weak-convexity-weak-concavity parameter along the optimization process.
\begin{proof}[Proof of Theorem \ref{thm:wcwc-diminishing}]
Let $S=\{z_1,\ldots,z_n\}$ and $S'=\{z_1',\ldots,z_n'\}$ be two neighboring datasets. Without loss of generality, we assume $z_i=z_i'$ for $i\in[n-1]$.
If $i_t\neq n$, then it follows from Assumption \ref{ass:smooth} that
\[
\left\|\begin{pmatrix}
                   \nabla_{\bw}f(\bw_t,\bv_t;z_{i_t})-\nabla_{\bw}f(\bw_t',\bv'_t;z'_{i_t}) \\
                  \nabla_{\bv}f(\bw_t,\bv_t;z_{i_t})-\nabla_{\bv}f(\bw_t',\bv'_t;z'_{i_t})
                 \end{pmatrix}\right\|_2^2=\left\|\begin{pmatrix}
                   \nabla_{\bw}f(\bw_t,\bv_t;z_{i_t})-\nabla_{\bw}f(\bw_t',\bv'_t;z_{i_t}) \\
                  \nabla_{\bv}f(\bw_t,\bv_t;z_{i_t})-\nabla_{\bv}f(\bw_t',\bv'_t;z_{i_t})
                 \end{pmatrix}\right\|_2^2 \leq L^2\left\|\begin{pmatrix}
           \bw_t-\bw_t' \\
           \bv_t-\bv_t'
         \end{pmatrix}\right\|_2^2
\]
If $i_t=n$, then it follows from Assumption \ref{ass:lipschitz} that
\[
\left\|\begin{pmatrix}
                   \nabla_{\bw}f(\bw_t,\bv_t;z_{i_t})-\nabla_{\bw}f(\bw_t',\bv'_t;z'_{i_t}) \\
                  \nabla_{\bv}f(\bw_t,\bv_t;z_{i_t})-\nabla_{\bv}f(\bw_t',\bv'_t;z'_{i_t})
                 \end{pmatrix}\right\|_2^2\leq 8G^2.
\]
Therefore, we have
\begin{equation}\label{wcwc-sgda-1}
  \ebb_{i_t}\left\|\begin{pmatrix}
                   \nabla_{\bw}f(\bw_t,\bv_t;z_{i_t})-\nabla_{\bw}f(\bw_t',\bv'_t;z'_{i_t}) \\
                  \nabla_{\bv}f(\bw_t,\bv_t;z_{i_t})-\nabla_{\bv}f(\bw_t',\bv'_t;z'_{i_t})
                 \end{pmatrix}\right\|_2^2\leq \frac{(n-1)L^2}{n}\left\|\begin{pmatrix}
           \bw_t-\bw_t' \\
           \bv_t-\bv_t'
         \end{pmatrix}\right\|_2^2+\frac{8G^2}{n}.
\end{equation}
According to \eqref{gda}, we know
\begin{multline*}
\left\|\begin{pmatrix}
           \bw_{t+1}-\bw_{t+1}' \\
           \bv_{t+1}-\bv_{t+1}'
         \end{pmatrix}\right\|_2^2\leq  \left\|\begin{pmatrix}
           \bw_t-\bw_t' \\
           \bv_t-\bv_t'
         \end{pmatrix}\right\|_2^2 +
         \eta_t^2\left\|\begin{pmatrix}
                   \nabla_{\bw}f(\bw_t,\bv_t;z_{i_t})-\nabla_{\bw}f(\bw_t',\bv'_t;z'_{i_t}) \\
                  \nabla_{\bv}f(\bw_t,\bv_t;z_{i_t})-\nabla_{\bv}f(\bw_t',\bv'_t;z'_{i_t})
                 \end{pmatrix}\right\|_2^2\\-2\eta_t\Big\langle\begin{pmatrix}
           \bw_t-\bw_t' \\
           \bv_t-\bv_t'
         \end{pmatrix},\begin{pmatrix}
           \nabla_{\bw}f(\bw_t,\bv_t;z_{i_t})-\nabla_{\bw}f(\bw_t',\bv'_t;z'_{i_t}) \\
                  \nabla_{\bv}f(\bw_t',\bv'_t;z'_{i_t})-\nabla_{\bv}f(\bw_t,\bv_t;z_{i_t})
         \end{pmatrix}\Big\rangle.
\end{multline*}
Taking a conditional expectation w.r.t. $i_t$ gives
\begin{align*}
  &\ebb_{i_t}\left\|\begin{pmatrix}
           \bw_{t+1}-\bw_{t+1}' \\
           \bv_{t+1}-\bv_{t+1}'
         \end{pmatrix}\right\|_2^2\\
         &\leq  \left\|\begin{pmatrix}
           \bw_t-\bw_t' \\
           \bv_t-\bv_t'
         \end{pmatrix}\right\|_2^2 +
         L^2\eta_t^2\left\|\begin{pmatrix}
           \bw_t-\bw_t' \\
           \bv_t-\bv_t'
         \end{pmatrix}\right\|_2^2+\frac{8G^2\eta_t^2}{n}-2\eta_t\ebb_{i_t}\Big\langle\begin{pmatrix}
           \bw_t-\bw_t' \\
           \bv_t-\bv_t'
         \end{pmatrix},\begin{pmatrix}
           \nabla_{\bw}f(\bw_t,\bv_t;z_{i_t})-\nabla_{\bw}f(\bw_t',\bv'_t;z'_{i_t}) \\
                  \nabla_{\bv}f(\bw_t',\bv'_t;z'_{i_t})-\nabla_{\bv}f(\bw_t,\bv_t;z_{i_t})
         \end{pmatrix}\Big\rangle\\
         & =  \left\|\begin{pmatrix}
           \bw_t-\bw_t' \\
           \bv_t-\bv_t'
         \end{pmatrix}\right\|_2^2 +
         L^2\eta_t^2\left\|\begin{pmatrix}
           \bw_t-\bw_t' \\
           \bv_t-\bv_t'
         \end{pmatrix}\right\|_2^2+\frac{8G^2\eta_t^2}{n}-2\eta_t\Big\langle\begin{pmatrix}
           \bw_t-\bw_t'\\
           \bv_t-\bv_t'
         \end{pmatrix},\begin{pmatrix}
           \nabla_{\bw} F_S(\bw_t,\bv_t)-\nabla_{\bw} F_{S'}(\bw_t',\bv'_t) \\
                  \nabla_{\bv}F_{S'}(\bw_t',\bv'_t)-\nabla_{\bv}F_S(\bw_t,\bv_t)
         \end{pmatrix}\Big\rangle,
\end{align*}
where we have used \eqref{wcwc-sgda-1} in the first step and used the fact \[\ebb_{i_t}\nabla f(\bw,\bv,z_{i_t})=\nabla F_S(\bw,\bv),\quad \ebb_{i_t}\nabla f(\bw,\bv,z'_{i_t})=\nabla F_{S'}(\bw,\bv)\] in the second step. According to \eqref{wcwc-diminishing-ass}, we know
\begin{align*}
  &\Big\langle\begin{pmatrix}
           \bw_t-\bw_t'\\
           \bv_t-\bv_t'
         \end{pmatrix},\begin{pmatrix}
           \nabla_{\bw} F_S(\bw_t,\bv_t)-\nabla_{\bw} F_{S'}(\bw_t',\bv'_t) \\
                  \nabla_{\bv}F_{S'}(\bw_t',\bv'_t)-\nabla_{\bv}F_S(\bw_t,\bv_t)
         \end{pmatrix}\Big\rangle\\
         &=\Big\langle\begin{pmatrix}
           \bw_t-\bw_t'\\
           \bv_t-\bv_t'
         \end{pmatrix},\begin{pmatrix}
           \nabla_{\bw} F_S(\bw_t,\bv_t)-\nabla_{\bw} F_S(\bw_t',\bv'_t) \\
                  \nabla_{\bv}F_{S}(\bw_t',\bv'_t)-\nabla_{\bv}F_S(\bw_t,\bv_t)
         \end{pmatrix}\Big\rangle+\Big\langle\begin{pmatrix}
           \bw_t-\bw_t'\\
           \bv_t-\bv_t'
         \end{pmatrix},\begin{pmatrix}
           \nabla_{\bw} F_S(\bw_t',\bv'_t)-\nabla_{\bw} F_{S'}(\bw_t',\bv'_t) \\
                  \nabla_{\bv}F_{S'}(\bw_t',\bv'_t)-\nabla_{\bv}F_S(\bw_t',\bv_t')
         \end{pmatrix}\Big\rangle\\
         & \geq -\rho_t\left\|\begin{pmatrix}
           \bw_t-\bw_t' \\
           \bv_t-\bv_t'
         \end{pmatrix}\right\|_2^2+\Big\langle\begin{pmatrix}
           \bw_t-\bw_t'\\
           \bv_t-\bv_t'
         \end{pmatrix},\begin{pmatrix}
           \nabla_{\bw} F_S(\bw_t',\bv'_t)-\nabla_{\bw} F_{S'}(\bw_t',\bv'_t) \\
                  \nabla_{\bv}F_{S'}(\bw_t',\bv'_t)-\nabla_{\bv}F_S(\bw_t',\bv_t')
         \end{pmatrix}\Big\rangle.
\end{align*}
It follows from Assumption \ref{ass:lipschitz} that
\begin{align*}
&\Big\langle\begin{pmatrix}
           \bw_t-\bw_t'\\
           \bv_t-\bv_t'
         \end{pmatrix},\begin{pmatrix}
           \nabla_{\bw} F_S(\bw_t',\bv'_t)-\nabla_{\bw} F_{S'}(\bw_t',\bv'_t) \\
                  \nabla_{\bv}F_{S'}(\bw_t',\bv'_t)-\nabla_{\bv}F_S(\bw_t',\bv_t')
         \end{pmatrix}\Big\rangle=\frac{1}{n}
         \Big\langle\begin{pmatrix}
           \bw_t-\bw_t'\\
           \bv_t-\bv_t'
         \end{pmatrix},\begin{pmatrix}
           \nabla_{\bw} f(\bw_t',\bv'_t;z_n)-\nabla_{\bw} f(\bw_t',\bv'_t;z_n') \\
                  \nabla_{\bv}f(\bw_t',\bv'_t;z_n')-\nabla_{\bv}f(\bw_t',\bv_t';z_n)
         \end{pmatrix}\Big\rangle\\
         & \geq -\frac{1}{n}\Big\|\begin{pmatrix}
           \bw_t-\bw_t'\\
           \bv_t-\bv_t'
         \end{pmatrix}\Big\|_2\Big\|\begin{pmatrix}
           \nabla_{\bw} f(\bw_t',\bv'_t;z_n)-\nabla_{\bw} f(\bw_t',\bv'_t;z_n') \\
                  \nabla_{\bv}f(\bw_t',\bv'_t;z_n')-\nabla_{\bv}f(\bw_t',\bv_t';z_n)
         \end{pmatrix}\Big\|_2\geq -\frac{2\sqrt{2}G}{n}\Big\|\begin{pmatrix}
           \bw_t-\bw_t'\\
           \bv_t-\bv_t'
         \end{pmatrix}\Big\|_2.
\end{align*}
We can combine the above three inequalities together and derive
\begin{align*}
\ebb_{i_t}\left\|\begin{pmatrix}
           \bw_{t+1}-\bw_{t+1}' \\
           \bv_{t+1}-\bv_{t+1}'
         \end{pmatrix}\right\|_2^2&\leq \big(1+2\rho_t\eta_t+L^2\eta_t^2\big)\left\|\begin{pmatrix}
           \bw_t-\bw_t' \\
           \bv_t-\bv_t'
         \end{pmatrix}\right\|_2^2 +
         \frac{8\eta_t^2G^2}{n}+\frac{4\sqrt{2}G\eta_t}{n}\left\|\begin{pmatrix}
           \bw_t-\bw_t' \\
           \bv_t-\bv_t'
         \end{pmatrix}\right\|_2\\
         & \leq \big(1+2\rho_t\eta_t+L^2\eta_t^2\big)\left\|\begin{pmatrix}
           \bw_t-\bw_t' \\
           \bv_t-\bv_t'
         \end{pmatrix}\right\|_2^2 + \frac{8\eta_t^2G^2}{n}+ \eta_t^2\left\|\begin{pmatrix}
           \bw_t-\bw_t' \\
           \bv_t-\bv_t'
         \end{pmatrix}\right\|_2^2+\frac{8G^2}{n^2}.
\end{align*}
Applying the above inequality recursively, we get
\[
\ebb_A\left\|\begin{pmatrix}
           \bw_{t+1}-\bw_{t+1}' \\
           \bv_{t+1}-\bv_{t+1}'
         \end{pmatrix}\right\|_2^2\leq \frac{8G^2}{n}\sum_{j=1}^{t}\Big(\eta_t^2+\frac{1}{n}\Big)\prod_{k=j+1}^{t}\big(1+2\rho_k\eta_k+L^2\eta_k^2+\eta_k^2\big).
\]
By the elementary inequality $1+a\leq\exp(a)$ we know
\[
\ebb_A\left\|\begin{pmatrix}
           \bw_{t+1}-\bw_{t+1}' \\
           \bv_{t+1}-\bv_{t+1}'
         \end{pmatrix}\right\|_2^2\leq \frac{8G^2}{n}\sum_{j=1}^{t}\Big(\eta_t^2+\frac{1}{n}\Big)\exp\Big(\sum_{k=j+1}^{t}\big(2\rho_k\eta_k+(L^2+1)\eta_k^2\big)\Big).
\]
It then follows from the Jensen's inequality that
\[
\ebb_A\left\|\begin{pmatrix}
           \bw_{t+1}-\bw_{t+1}' \\
           \bv_{t+1}-\bv_{t+1}'
         \end{pmatrix}\right\|_2\leq \frac{2\sqrt{2}G}{\sqrt{n}}\left(\sum_{j=1}^{t}\Big(\eta_t^2+\frac{1}{n}\Big)\exp\Big(\sum_{k=j+1}^{t}\big(2\rho_k\eta_k+(L^2+1)\eta_k^2\big)\Big)\right)^{\frac{1}{2}}.
\]
The stated bound then follows from Part (a) of Theorem \ref{thm:stab-gen} and Assumption \ref{ass:lipschitz}. The proof is complete.
\end{proof}

\section{Stability and Generalization Bounds of AGDA on Nonconvex-Nonconcave Objectives\label{sec:agda-nonconvex}}


In this section, we give the proof on the stability and generalization bounds of AGDA for nonconvex-nonconcave functions. The next lemma is similar to Lemma \ref{lem:stab-gen-nonconvex}, which shows AGDA typically runs several iterations before encountering the different example between $S$ and $S'$. 

\begin{lemma}\label{lem:stab-gen-nonconvex-2}
Assume $|f(\cdot,\cdot, z)| \leq 1$ for any $z$ and let Assumption \ref{ass:lipschitz} hold. Let $S=\{z_1,\ldots,z_n\}$ and $S'=\{z_1,\ldots,z_{n-1},z_n'\}$. Let $\{\bw_t,\bv_t\}$ and $\{\bw_t',\bv_t'\}$ be the sequence produced by \eqref{agda} w.r.t. $S$ and $S'$, respectively. Denote
\begin{align*}\label{eq:event-E-2}
\Delta_t = \|\bw_t-\bw'_t\|_2 + \|\bv_t-\bv'_t\|_2. \numberthis
\end{align*}
Then for any $t_0\in\nbb$ and any $\bw', \bv'$ we have
\begin{align*}
\ebb[f(\bw_T,\bv';z) - f(\bw'_T,\bv';z) + f(\bw',\bv_T;z) - f(\bw',\bv'_T;z)] \leq \frac{8t_0}{n} + G\ebb[\Delta_T | \Delta_{t_0}=0] .
\end{align*}
\end{lemma}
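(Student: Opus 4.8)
The plan is to mirror the proof of Lemma~\ref{lem:stab-gen-nonconvex}, adjusting for the two features specific to AGDA: the update \eqref{agda} draws \emph{two} independent indices $i_t,j_t$ per iteration, and here $\Delta_t=\|\bw_t-\bw_t'\|_2+\|\bv_t-\bv_t'\|_2$ is a sum of norms rather than the norm of the stacked difference.

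First I would invoke Assumption~\ref{ass:lipschitz}, which says $f(\cdot,\bv;z)$ is $G$-Lipschitz in its first argument and $f(\bw,\cdot;z)$ is $G$-Lipschitz in its second argument, so that
\[
f(\bw_T,\bv';z) - f(\bw'_T,\bv';z) + f(\bw',\bv_T;z) - f(\bw',\bv'_T;z) \le G\|\bw_T-\bw_T'\|_2 + G\|\bv_T-\bv_T'\|_2 = G\Delta_T.
\]
Next, letting $\ecal$ denote the event $\{\Delta_{t_0}=0\}$, I would split the expectation as $\pbb[\ecal]\,\ebb[\,\cdot\mid\ecal]+\pbb[\ecal^c]\,\ebb[\,\cdot\mid\ecal^c]$; the first term is at most $G\,\ebb[\Delta_T\mid\ecal]$ by the display above, while on $\ecal^c$ each of the two function-value differences is bounded by $2$ using $|f(\cdot,\cdot;z)|\le1$, so that term contributes at most $4\,\pbb[\ecal^c]$.

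The main step is bounding $\pbb[\ecal^c]$. Since $\bw_1=\bw_1'$ and $\bv_1=\bv_1'$, the two trajectories coincide at time $t_0$ provided that at every earlier iteration both the $\bw$-update and the $\bv$-update agree for $S$ and $S'$; because $S,S'$ differ only in the $n$-th example, agreement is guaranteed as long as $i_t\neq n$ and $j_t\neq n$ — here one must note that the $\bv$-update of \eqref{agda} uses the freshly computed $\bw_{t+1}$, but $\bw_{t+1}$ already agrees once all earlier $\bw$-updates agreed, so this alternating structure introduces no new way to diverge. Hence $\ecal^c\subseteq\bigcup_{t=1}^{t_0}\big(\{i_t=n\}\cup\{j_t=n\}\big)$, and the union bound with $\pbb[i_t=n]=\pbb[j_t=n]=1/n$ gives $\pbb[\ecal^c]\le 2t_0/n$, whence $4\,\pbb[\ecal^c]\le 8t_0/n$. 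Combining the three displays yields the claimed bound.

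I do not anticipate a genuine obstacle; the only point demanding care is the precise description of the event on which the two runs of AGDA coincide — in particular verifying, as sketched above, that the coupling through $\bw_{t+1}$ in the $\bv$-update does not create divergence beyond selecting index $n$ in either of the two draws — and correctly accounting for the factor of two in the union bound that distinguishes AGDA from the SGDA case.
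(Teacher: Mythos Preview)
Your proposal is correct and follows essentially the same argument as the paper's own proof: split by the event $\ecal=\{\Delta_{t_0}=0\}$, use Lipschitz continuity to get $G\Delta_T$ on $\ecal$, use $|f|\le1$ to get $4\,\pbb[\ecal^c]$ on $\ecal^c$, and bound $\pbb[\ecal^c]\le 2t_0/n$ via the union bound over $\{i_t=n\}\cup\{j_t=n\}$. Your extra remark that the alternating structure (the $\bv$-update using $\bw_{t+1}$) does not create an additional way for the trajectories to diverge is a nice clarification that the paper leaves implicit.
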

\begin{proof}
Let $\ecal$ denote the event that $\Delta_{t_0} = 0$. Then we have
\begin{align}
& \ebb[f(\bw_T,\bv';z) - f(\bw'_T,\bv';z) + f(\bw',\bv_T;z) - f(\bw',\bv'_T;z)]\notag\\
= & \pbb[\ecal]\ebb[f(\bw_T,\bv';z) - f(\bw'_T,\bv';z) + f(\bw',\bv_T;z) - f(\bw',\bv'_T;z) | \ecal] \notag\\
& + \pbb[\ecal^c]\ebb[f(\bw_T,\bv';z) - f(\bw'_T,\bv';z) + f(\bw',\bv_T;z) - f(\bw',\bv'_T;z) | \ecal^c]\notag\\
\leq & G\ebb[\Delta_T | \ecal]  + 4\pbb[\ecal^c],\label{event-E-00}
\end{align}
where we have used \eqref{stab-gen-non-convex-1} and the assumption $|f(\cdot,\cdot, z)| \leq 1$.
Using the union bound on the outcome $i_t = n$ and $j_t = n$ we obtain that
\begin{align*}
\pbb[\ecal^c] \leq \sum_{t=1}^{t_0}\big(\pbb[i_t = n] + \pbb[j_t = n]\big) = \frac{2t_0}{n}.
\end{align*}
The proof is complete by combining the above two inequalities together.
\end{proof}


\begin{proof}[Proof of Theorem \ref{thm:generalization-agda}]
Since $z_{i_t}$ and $z_{j_t}$ are i.i.d, we can analyze the update of $\bw$ and $\bv$ separately. Note that the projection step is nonexpansive. We consider two cases at the $t$-th iteration. If $i_t \neq n$, then it follows from Assumption \ref{ass:smooth} that
\begin{align*}
& \|\bw_{t+1} - \bw'_{t+1}\|_2 \\
\leq &	\|\bw_t - \eta_{\bw,t}\nabla_\bw f(\bw_t, \bv_t, z_{i_t}) - \bw'_t + \eta_{\bw,t}\nabla_\bw f(\bw'_t, \bv'_t, z_{i_t})\|_2\\
\leq  & 	\|\bw_t - \eta_{\bw,t}\nabla_\bw f(\bw_t, \bv_t, z_{i_t}) - \bw'_t + \eta_{\bw,t}\nabla_\bw f(\bw'_t, \bv_t, z_{i_t})\|_2 + \|\eta_{\bw,t}\nabla_\bw f(\bw'_t, \bv_t, z_{i_t}) - \eta_{\bw,t}\nabla_\bw f(\bw'_t, \bv'_t, z_{i_t})\|_2\\
\leq & (1 + L\eta_{\bw,t}) \|\bw_t - \bw'_t\|_2 + L\eta_{\bw,t}\|\bv_t - \bv'_t\|_2.
\end{align*}
If $i_t = n$, then it follows from Assumption \ref{ass:lipschitz} that
\begin{align*}
\|\bw_{t+1} - \bw'_{t+1}\|_2 \leq &	\|\bw_t - \eta_{\bw,t}\nabla_\bw f(\bw_t, \bv_t, z_{i_t}) - \bw'_t + \eta_{\bw,t}\nabla_\bw f(\bw'_t, \bv'_t, z_{i_t})\|_2\\
\leq & \|\bw_t - \bw'_t\|_2 + 2G\eta_{\bw,t}.
\end{align*}
According to the distribution of $i_t$, we have
\begin{align}
\ebb_A[\|\bw_{t+1}-\bw_{t+1}'\|_2] \leq & \frac{n-1}{n}\ebb_A\Big[(1+\eta_{\bw,t} L)\|\bw_t -\bw_t'\|_2 +  L\eta_{\bw,t}\|\bv_t - \bv'_t\|_2\Big] + \frac{1}{n}(\|\bw_t -\bw_t'\|_2 + 2 \eta_{\bw,t} G)\nonumber\\
\leq & (1+\eta_{\bw,t} L)\ebb_A[\|\bw_t -\bw_t'\|_2] + L\eta_{\bw,t}\ebb_A\big[\|\bv_t - \bv'_t\|_2\big] + \frac{2 \eta_{\bw,t} G}{n}. \label{eq:agda-w}
\end{align}
Similarly, for $\bv$ we also have
\begin{align*}\label{eq:agda-v}
\ebb_A[\|\bv_{t+1}-\bv_{t+1}'\|_2] \leq & (1+\eta_{\bv,t} L)\ebb_A[\|\bv_t -\bv_t'\|_2] + L\eta_{\bv,t}\ebb_A\big[\|\bw_t - \bw'_t\|_2\big] + \frac{2 \eta_{\bv,t} G}{n}. \numberthis
\end{align*}
Combining \eqref{eq:agda-w} and \eqref{eq:agda-v} we have
\begin{align*}
\ebb_A[\|\bw_{t+1}-\bw_{t+1}'\|_2 + \|\bv_{t+1}-\bv_{t+1}'\|_2] \leq & (1+ (\eta_{\bw,t} + \eta_{\bv,t}) L)\ebb_A\big[\|\bw_t-\bw_t'\|_2 + \|\bv_t -\bv_t'\|_2\big] + \frac{2 (\eta_{\bw,t} + \eta_{\bv,t}) G}{n}.
\end{align*}
Recalling the event $\ecal$ that $\Delta_{t_0} = 0$, we apply the above equation recursively from $t=t_0+1$ to $T$, then
\begin{align*}
\ebb_A\big[\|\bw_{t+1}-\bw_{t+1}'\|_2 + \|\bv_{t+1}-\bv_{t+1}'\|_2\big|\Delta_{t_0} = 0\big] \leq & \frac{2G}{n}\sum_{t=t_0+1}^T(\eta_{\bw,t} + \eta_{\bv,t}) \prod_{k=t+1}^T (1+ (\eta_{\bw,k} + \eta_{\bv,k}) L).
\end{align*}
By the elementary inequality $1+x \leq \exp(x)$ and $\eta_{\bw,t} + \eta_{\bv,t} \leq \frac{c}{t}$, we have
\begin{align*}
& \ebb_A\big[\|\bw_{t+1}-\bw_{t+1}'\|_2 + \|\bv_{t+1}-\bv_{t+1}'\|_2\big|\Delta_{t_0} = 0\big]\\
\leq & \frac{2cG}{n}\sum_{t=t_0+1}^T\frac{1}{t}\prod_{k=t+1}^T\exp\Big(\frac{cL}{k}\Big) = \frac{2cG}{n}\sum_{t=t_0+1}^T\frac{1}{t}\exp\Big(\sum_{k=t+1}^T\frac{cL}{k}\Big)\\
\leq & \frac{2cG}{n}\sum_{t=t_0+1}^T\frac{1}{t}\exp\Big(cL\log\Big(\frac{T}{t}\Big)\Big) \leq \frac{2cGT^{cL}}{n}\sum_{t=t_0+1}^T\frac{1}{t^{cL+1}} \leq \frac{2G}{Ln}\Big(\frac{T}{t_0}\Big)^{cL}.
\end{align*}
By Lemma \ref{lem:stab-gen-nonconvex-2} we have
\begin{align}\label{new-eq-bound}
\ebb[f(\bw_T,\bv';z) - f(\bw'_T,\bv';z) + f(\bw',\bv_T;z) - f(\bw',\bv'_T;z)]  \leq & \frac{8t_0}{n} + \frac{2G^2}{Ln}\Big(\frac{T}{t_0}\Big)^{cL}.
\end{align}
The right hand side of the above inequality is approximately minimized when
\begin{align*}
t_0 = \Big(\frac{G^2}{4L}\Big)^{\frac{1}{cL+1}}T^{\frac{cL}{cL+1}}.
\end{align*}
Plugging it into Eq. \eqref{new-eq-bound} we have (for simplicity we assume the above $t_0$ is an integer)
\begin{align*}
\ebb[f(\bw_T,\bv';z) - f(\bw'_T,\bv';z) + f(\bw',\bv_T;z) - f(\bw',\bv'_T;z)]   \leq & 16\Big(\frac{G^2}{4L}\Big)^{\frac{1}{cL+1}}n^{-1}T^{\frac{cL}{cL+1}}.
\end{align*}
Since the above bound holds for all $z, S, S'$ and $\bw',\bv'$, we immediately get the same upper bound on the weak stability. Finally the theorem holds by calling Theorem \ref{thm:stab-gen}, Part (a).
\end{proof}

We require an assumption on the existence of saddle point to address the optimization error of AGDA~\citep{yang2020global}.
\begin{assumption}[Existence of Saddle Point]\label{ass:exist}
Assume for any $S$, $F_S$ has at least one saddle point. Assume for any $\bv$, $\min_{\bw}F_S(\bw,\bv)$ has a nonempty solution set and a finite optimal value.
Assume for any $\bw$, $\max_{\bv}F_S(\bw,\bv)$ has a nonempty solution set and a finite optimal value.
\end{assumption}
The following lemma establishes the generalization bound for the empirical maximizer of a strongly concave objective. It is a direct extension of the stability analysis in \citet{shalev2010learnability} for strongly convex objectives. 
\begin{lemma}\label{lem:sc-stab-gen}
Assume that for any $\bw$ and $S$, the function $\bv \mapsto F_S(\bw,\bv)$ is $\rho$-strongly-concave. Suppose for any $\bw$, $\bv,\bv'$ and for any $z$ we have
\begin{equation}\label{sc-lip}
    \big|f(\bw,\bv;z) - f(\bw,\bv';z)\big| \leq G \|\bv - \bv'\|_2.
\end{equation}
Fix any $\bw$. Denote $\hat{\bv}^*_S = \arg\max_{\bv\in \vcal}F_S(\bw,\bv)$. Then
\begin{align*}
\ebb[F_S(\bw,\hat{\bv}^*_S) - F(\bw,\hat{\bv}^*_S)] \leq \frac{4G^2}{\rho n}.
\end{align*}
\end{lemma}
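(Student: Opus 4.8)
\textbf{Proof proposal for Lemma \ref{lem:sc-stab-gen}.}

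The plan is to follow the classical stability argument for strongly convex empirical risk minimization from \citet{shalev2010learnability}, transported to the strongly concave maximization setting by a sign flip. Fix $\bw$ throughout; all the relevant functions are functions of $\bv$ only, so write $g_S(\bv) := F_S(\bw,\bv) = \frac1n\sum_{i=1}^n f(\bw,\bv;z_i)$ and $g(\bv) := F(\bw,\bv)$, and let $\hat\bv_S^* = \arg\max_{\bv\in\vcal} g_S(\bv)$, which is well defined since $g_S$ is $\rho$-strongly concave. The first step is a \emph{uniform argument stability} estimate for this maximizer: for neighboring datasets $S$ and $S^{(i)}$ (differing only in the $i$-th example), I claim $\|\hat\bv_S^* - \hat\bv_{S^{(i)}}^*\|_2 \le \frac{2G}{\rho n}$. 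This follows from the standard two-inequality trick: strong concavity of $g_S$ at its maximizer gives $g_S(\hat\bv_S^*) - g_S(\hat\bv_{S^{(i)}}^*) \ge \frac{\rho}{2}\|\hat\bv_S^* - \hat\bv_{S^{(i)}}^*\|_2^2$, and symmetrically for $g_{S^{(i)}}$; adding the two, $g_S - g_{S^{(i)}} = \frac1n(f(\bw,\cdot;z_i) - f(\bw,\cdot;z_i'))$, and the Lipschitz bound \eqref{sc-lip} on the difference evaluated at $\hat\bv_S^*$ versus $\hat\bv_{S^{(i)}}^*$ controls $\rho\|\hat\bv_S^* - \hat\bv_{S^{(i)}}^*\|_2^2 \le \frac{2G}{n}\|\hat\bv_S^* - \hat\bv_{S^{(i)}}^*\|_2$, giving the claim after dividing.

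The second step converts argument stability into a bound on $\ebb[g_S(\hat\bv_S^*) - g(\hat\bv_S^*)]$ via the standard symmetrization/renaming identity. Let $S = \{z_1,\dots,z_n\}$ and $S' = \{z_1',\dots,z_n'\}$ be independent copies, with $S^{(i)}$ obtained from $S$ by replacing $z_i$ with $z_i'$. Since $z_i$ is independent of $\hat\bv_{S^{(i)}}^*$ (it does not appear in $S^{(i)}$), by symmetry of $z_i \leftrightarrow z_i'$ we get $\ebb[g(\hat\bv_S^*)] = \frac1n\sum_i \ebb[f(\bw,\hat\bv_S^*;z_i')] = \frac1n\sum_i \ebb[f(\bw,\hat\bv_{S^{(i)}}^*;z_i)]$, whereas $\ebb[g_S(\hat\bv_S^*)] = \frac1n\sum_i \ebb[f(\bw,\hat\bv_S^*;z_i)]$. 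Subtracting,
\[
\ebb[g_S(\hat\bv_S^*) - g(\hat\bv_S^*)] = \frac1n\sum_{i=1}^n \ebb\big[f(\bw,\hat\bv_S^*;z_i) - f(\bw,\hat\bv_{S^{(i)}}^*;z_i)\big] \le \frac1n\sum_{i=1}^n G\,\ebb\big[\|\hat\bv_S^* - \hat\bv_{S^{(i)}}^*\|_2\big],
\]
using \eqref{sc-lip} again. Plugging in the stability bound $\frac{2G}{\rho n}$ from the first step yields $\ebb[g_S(\hat\bv_S^*) - g(\hat\bv_S^*)] \le \frac{2G^2}{\rho n}$. Since the statement asks for $\frac{4G^2}{\rho n}$, either a slightly lossier constant in the argument-stability step (e.g. not using both inequalities) or simply the $2\times$ slack suffices, so I would not worry about optimizing it.

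The main obstacle — really the only subtle point — is making sure the argument-stability step is carried out cleanly when $\vcal$ is a constrained set: one must check that the first-order/strong-concavity inequality $g_S(\hat\bv_S^*) - g_S(\bv) \ge \frac{\rho}{2}\|\hat\bv_S^* - \bv\|_2^2$ holds for all $\bv\in\vcal$ (not just via vanishing gradient), which is exactly the variational characterization of the maximizer of a strongly concave function over a convex set, so it is fine. A second minor care point is that \eqref{sc-lip} is a Lipschitz bound in $\bv$ only, which is precisely what is used — $\bw$ is held fixed everywhere — so no smoothness or joint-Lipschitz assumption enters. I would present the two steps as displayed above and close with the constant bookkeeping.
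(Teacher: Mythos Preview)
Your proposal is correct and follows essentially the same approach as the paper: argument stability of the empirical maximizer via strong concavity, then the standard symmetrization identity to convert stability into a generalization gap. The only difference is that you use both strong-concavity inequalities (for $g_S$ and $g_{S^{(i)}}$) to obtain the sharper stability bound $\tfrac{2G}{\rho n}$, whereas the paper uses only one and gets $\tfrac{4G}{\rho n}$, which is exactly the ``lossier constant'' you anticipated.
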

\begin{proof}
Let $S'=\{z'_1,\ldots,z'_n\}$ be drawn independently from $\rho$. For any $i\in[n]$, define $S^{(i)}=\{z_1,\ldots,z_{i-1},z_i',z_{i+1},\ldots,z_n\}$. Denote $\hat{\bv}^*_{S^{(i)}} = \arg\max_{\bv\in \vcal}F_{S^{(i)}}(\bw,\bv)$. Then
\begin{align*}\label{eq:sc-stab}
F_S(\bw,\hat{\bv}^*_S) - F_S(\bw,\hat{\bv}^*_{S^{(i)}}) = & \frac{1}{n}\sum_{j\neq i}\Big(f(\bw,\hat{\bv}^*_S;z_j) - f(\bw,\hat{\bv}^*_{S^{(i)}};z_j) \Big)   + \frac{1}{n}\Big(f(\bw,\hat{\bv}^*_S;z_i) - f(\bw,\hat{\bv}^*_{S^{(i)}};z_i)\Big)\\
= &  \frac{1}{n}\Big( f(\bw,\hat{\bv}^*_{S^{(i)}};z'_i)-f(\bw,\hat{\bv}^*_S;z'_i) \Big) + \frac{1}{n}\Big(f(\bw,\hat{\bv}^*_S;z_i) - f(\bw,\hat{\bv}^*_{S^{(i)}};z_i)\Big)\\
& + F_{S^{(i)}}(\bw,\hat{\bv}^*_S) - F_{S^{(i)}}(\bw,\hat{\bv}^*_{S^{(i)}})\\
\leq & \frac{1}{n}\Big( f(\bw,\hat{\bv}^*_{S^{(i)}};z'_i)-f(\bw,\hat{\bv}^*_S;z'_i)\Big) + \frac{1}{n}\Big(f(\bw,\hat{\bv}^*_S;z_i) - f(\bw,\hat{\bv}^*_{S^{(i)}};z_i)\Big)\\
\leq & \frac{2G}{n}\big\|\hat{\bv}^*_S - \hat{\bv}^*_{S^{(i)}}\big\|_2, \numberthis
\end{align*}
where the first inequality follows from the fact that $\hat{\bv}^*_{S^{(i)}}$ is the maximizer of $F_{S^{(i)}}(\bw,\cdot)$ and the second inequality follows from \eqref{sc-lip}. Since $F_S$ is strongly-concave and $\hat{\bv}^*_S$ maximizes $F_S(\bw,\cdot)$, we know
\begin{align*}
\frac{\rho}{2}\big\|\hat{\bv}^*_S - \hat{\bv}^*_{S^{(i)}}\big\|_2^2 \leq F_S(\bw,\hat{\bv}^*_S) - F_S(\bw,\hat{\bv}^*_{S^{(i)}}).
\end{align*}
Combining it with \eqref{eq:sc-stab} we get $\big\|\hat{\bv}^*_S - \hat{\bv}^*_{S^{(i)}}\big\|_2 \leq 4G/(\rho n)$. By \eqref{sc-lip}, the following inequality holds for any $z$
\begin{align*}
\big|f(\bw,\hat{\bv}^*_S ;z) - f(\bw,\hat{\bv}^*_{S^{(i)}};z)\big|  \leq \frac{4G^2}{\rho n}.
\end{align*}
Since $z_i$ and $z'_i$ are i.i.d., we have
\begin{align*}
\ebb\big[F(\bw,\hat{\bv}^*_S)\big]   =   \ebb\big[F(\bw,\hat{\bv}^*_{S^{(i)}})\big] =  \frac{1}{n}\sum_{i=1}^n\ebb\big[f(\bw,\hat{\bv}^*_{S^{(i)}};z_i)\big],
\end{align*}
where the last identity holds since $z_i$ is independent of $\hat{\bv}^*_{S^{(i)}}$.
Therefore
\begin{align*}
\ebb\big[F_S(\bw,\hat{\bv}^*_S) - F(\bw,\hat{\bv}^*_S)\big] = \frac{1}{n}\sum_{i=1}^n \ebb\big[f(\bw,\hat{\bv}^*_S;z_i) - f(\bw,\hat{\bv}^*_{S^{(i)}};z_i)\big] \leq \frac{4G^2}{\rho n}.
\end{align*}
The proof is complete.
\end{proof}
\begin{corollary}\label{cor:pl-adga}
  Let $\beta_1,\rho>0$.
  Let Assumptions \ref{ass:lipschitz}, \ref{ass:smooth}, \ref{ass:pl-two} with $\beta_1(S)\geq \beta_1,\beta_2(S)\geq\rho$ and \ref{ass:exist} hold.  Assume for any $\bw$ and any $S$, the functions $\bv\mapsto F(\bw,\bv)$ and $\bv\mapsto F_S(\bw,\bv)$ are $\rho$-strongly concave.
  Let $\{\bw_t,\bv_t\}$ be the sequence produced by \eqref{agda} with $\eta_{\bw,t} \asymp1/({\beta_1t})$ and $\eta_{\bv,t} \asymp 1/{(\beta_1\rho^2t)}$. Then for $T\asymp\big(\frac{n}{\beta_1^2\rho^3}\big)^{\frac{cL+1}{2cL+1}}$, we have
    \[
  \ebb\big[R(\bw_T)-R(\bw^*)\big]=\ocal\Big(n^{-\frac{cL+1}{2cL+1}}\beta_1^{-\frac{2cL}{2cL+1}}\rho^{-\frac{5cL+1}{2cL+1}}\Big),
  \]
  where $c\asymp 1/(\beta_1\rho^2)$.
\end{corollary}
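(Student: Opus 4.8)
The plan is to bound the excess primal population risk $\ebb[R(\bw_T)]-R(\bw^*)$ by a generalization term, an optimization term and a lower-order term, and then to pick $T$ to trade off the first two. Since $(\bw^*,\bv^*)$ is a saddle point of $F$, the point $\bw^*$ minimizes $R$, so $R(\bw^*)=\inf_{\bw}R(\bw)$ and in particular $\inf_{\bw}R_S(\bw)\le R_S(\bw^*)$. With the decomposition
\[
R(\bw_T)-R(\bw^*) = \underbrace{\big(R(\bw_T)-R_S(\bw_T)\big)}_{\text{primal generalization of AGDA}} + \underbrace{\big(R_S(\bw_T)-\inf_{\bw}R_S(\bw)\big)}_{\text{primal optimization on }F_S} + \underbrace{\big(\inf_{\bw}R_S(\bw)-R(\bw^*)\big)}_{\text{bounded by }R_S(\bw^*)-R(\bw^*)},
\]
the last term has expectation at most $4G^2/(\rho n)$ by Lemma~\ref{lem:sc-stab-gen} applied to the fixed point $\bw^*$, whose dual slice $\bv\mapsto F_S(\bw^*,\bv)$ is $\rho$-strongly concave and $G$-Lipschitz.

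For the generalization term I would combine the stability analysis of AGDA from the proof of Theorem~\ref{thm:generalization-agda} with Theorem~\ref{thm:stab-gen}(b): because $\bv\mapsto F(\bw,\bv)$ is $\rho$-strongly concave and $f$ is Lipschitz and smooth, $\ebb[R(\bw_T)-R_S(\bw_T)]\le(1+L/\rho)G\epsilon$ once AGDA is shown $\epsilon$-argument-stable in expectation. Running the recursion $\ebb_A[\|\bw_{t+1}-\bw_{t+1}'\|_2+\|\bv_{t+1}-\bv_{t+1}'\|_2]\le (1+(\eta_{\bw,t}+\eta_{\bv,t})L)\ebb_A[\cdots]+2(\eta_{\bw,t}+\eta_{\bv,t})G/n$ together with the ``run $t_0$ iterations before meeting the perturbed sample'' device and the optimal choice of $t_0$ gives $\epsilon=\mathcal{O}(T^{cL/(cL+1)}/n)$ with $cL\asymp L/(\beta_1\rho^2)$, hence the generalization term is $\mathcal{O}\!\big((1+L/\rho)\,T^{cL/(cL+1)}/n\big)$, which up to $L,G$ constants is $\mathcal{O}\!\big(T^{cL/(cL+1)}/(\rho n)\big)$.

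For the optimization term I would quote the convergence result of \citet{yang2020global} for stochastic AGDA under the two-sided PL condition: with the stated step sizes $\eta_{\bw,t}\asymp 1/(\beta_1 t)$ and $\eta_{\bv,t}\asymp 1/(\beta_1\rho^2 t)$ (whose ratio is of the order of the squared dual condition number $L^2/\rho^2$), the primal gap $\ebb[R_S(\bw_T)-\inf_{\bw}R_S(\bw)]$ decays like $\mathcal{O}(1/T)$ with a prefactor polynomial in $1/\beta_1$ and $1/\rho$. Plugging both bounds into the decomposition and balancing the increasing term $T^{cL/(cL+1)}/(\rho n)$ against the decreasing term $\mathrm{poly}(1/\beta_1,1/\rho)/T$ amounts to solving $T^{cL/(cL+1)}/(\rho n)\asymp \mathrm{poly}/T$, i.e. $T^{(2cL+1)/(cL+1)}\asymp \rho n\cdot\mathrm{poly}$; collecting the $\beta_1$-, $\rho$- and $n$-powers yields $T\asymp (n\beta_1^{-2}\rho^{-3})^{(cL+1)/(2cL+1)}$ and the claimed bound $\mathcal{O}\!\big(n^{-\frac{cL+1}{2cL+1}}\beta_1^{-\frac{2cL}{2cL+1}}\rho^{-\frac{5cL+1}{2cL+1}}\big)$, with the $4G^2/(\rho n)$ term being of strictly lower order under this $T$.

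The main obstacle is the bookkeeping, not any single hard estimate: one must carry the dependence on $\beta_1$ and $\rho$ consistently — it enters the stability exponent through $c\asymp 1/(\beta_1\rho^2)$, the generalization constant through $L/\rho$ in Theorem~\ref{thm:stab-gen}(b), and the optimization constant through the condition numbers $L/\beta_1$ and $L/\rho$ in Yang et al.'s analysis — and then verify that the step-size schedule in the statement is \emph{simultaneously} admissible for the stability bound of Theorem~\ref{thm:generalization-agda} and for the optimization bound of \citet{yang2020global}. A secondary subtlety is justifying the $t_0$-refinement for \emph{argument} stability (rather than only for the weak, function-value stability where it is used in Theorem~\ref{thm:generalization-agda}), which requires either an unconditional control of $\Delta_T$ on the low-probability event that the perturbed sample is met early, or a direct passage from the weak PD generalization bound of AGDA to a primal one using the strong concavity; without the $T^{cL/(cL+1)}$ exponent the final balancing would not produce the stated $(cL+1)/(2cL+1)$-type rates.
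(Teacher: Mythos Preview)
Your proposal is correct and follows essentially the same route as the paper: the same three-term decomposition (the paper writes the middle term as $R_S(\bw_T)-R_S(\bw^*)$ rather than $R_S(\bw_T)-\inf_{\bw}R_S(\bw)$, but since $\inf_{\bw}R_S(\bw)\le R_S(\bw^*)$ this is an immaterial reshuffling), the same argument-stability bound $\mathcal{O}(T^{cL/(cL+1)}/n)$ for AGDA fed into Theorem~\ref{thm:stab-gen}(b), the same optimization bound $\mathcal{O}(1/(\beta_1^2\rho^4 T))$ from \citet{yang2020global}, and Lemma~\ref{lem:sc-stab-gen} for the $\mathcal{O}(1/(\rho n))$ lower-order term. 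The subtlety you flag about upgrading the $t_0$-device from weak stability to argument stability is exactly the one the paper handles with a one-line parenthetical (replacing the conditional expectation of function values in \eqref{event-E-00} by that of $\Delta_T$, which works because the projected iterates are bounded).
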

\begin{proof}
We have the error decomposition
\begin{equation}\label{pl-adga-0}
R(\bw_T) - R(\bw^*) = \big(R(\bw_T) - R_S(\bw_T)\big) + \big(R_S(\bw_T) - R_S(\bw^*)\big) + \big(R_S(\bw^*) - R(\bw^*)\big).
\end{equation}
First we consider the term $R(\bw_T) - R_S(\bw_T)$.  Analogous to the proof of Theorem \ref{thm:generalization-agda} (i.e., the only difference is to replace the conditional expectation of function values in \eqref{event-E-00} with the conditional expectation of $\ebb[\|\bw_T-\bw_T'\|_2+\|\bv_T-\bv_T'\|_2]$), one can show that AGDA is $\ocal\big(n^{-1}T^{\frac{cL}{cL+1}}\big)$-argument stable (note the step sizes satisfy $\eta_{\bw,t}+\eta_{\bv,t}\leq c/t$). This together with Part (b) of Theorem \ref{thm:stab-gen} implies that
  \begin{equation}\label{pl-adga-1}
  \ebb\big[R(\bw_T)-R_S(\bw_T)\big]=\ocal\big((\rho n)^{-1}T^{\frac{cL}{cL+1}}\big).
  \end{equation}
For the term $R_S(\bw_T) - R_S(\bw^*)$, the optimization error bounds in \citet{yang2020global} show that
  \begin{equation}\label{pl-adga-2}
  \ebb\big[R_S(\bw_T)-R_S(\bw^*)\big]=O\Big(\frac{1}{\beta_1^2\rho^4T}\Big).
  \end{equation}
Finally, for the term $R_S(\bw^*) - R(\bw^*)$,  we further decompose it as
\[
\ebb\big[R_S(\bw^*) - R(\bw^*)\big] = \ebb\big[F_S(\bw^*,\hat{\bv}^*_S) - F(\bw^*,\bv^*)\big] = \ebb\big[F_S(\bw^*,\hat{\bv}^*_S) - F(\bw^*,\hat{\bv}^*_S)\big] + \ebb\big[F(\bw^*,\hat{\bv}^*_S) - F(\bw^*,\bv^*)\big],
\]
where $\hat{\bv}^*_S=\arg\max_{\bv}F_S(\bw^*,\bv)$.
The second term $\ebb\big[F(\bw^*,\hat{\bv}^*_S) - F(\bw^*,\bv^*)\big] \leq 0$ since $(\bw^*,\bv^*)$ is a saddle point of $F$. Therefore by Lemma \ref{lem:sc-stab-gen} we have
\[
\ebb\big[R_S(\bw^*) - R(\bw^*)\big] \leq \ebb\big[F_S(\bw^*,\hat{\bv}^*_S) - F(\bw^*,\hat{\bv}^*_S)\big] = \ocal\Big(\frac{1}{\rho n}\Big).
\]
  We can plug the above inequality, \eqref{pl-adga-1}, \eqref{pl-adga-2} into \eqref{pl-adga-0}, and get
  \[
  \ebb\big[R(\bw_T)-R(\bw^*)\big]=\ocal\big((\rho n)^{-1}T^{\frac{cL}{cL+1}}\big)+O\Big(\frac{1}{\beta_1^2\rho^4T}\Big) + \ocal\Big(\frac{1}{\rho n}\Big).
  \]
  We can choose $T\asymp\big(\frac{n}{\beta_1^2\rho^3}\big)^{\frac{cL+1}{2cL+1}}$ to get the stated excess primal population risk bounds.
  The proof is complete.
\end{proof}

\section{Proof of Theorem \ref{thm:gen-pl}\label{sec:proof-pl}}
To prove Theorem \ref{thm:gen-pl}, we first introduce a lemma on relating the difference of function values to gradients.
\begin{lemma}\label{lem:grad-domin}
  Let Assumption \ref{ass:pl-two} hold. For any $\bu=(\bw,\bv)$ and any stationary point $\bu_{(S)}=(\bw_{(S)},\bv_{(S)})$ of $F_S$, we have
  \[
  -\frac{\|\nabla_{\bv} F_S(\bw,\bv)\|_2^2}{2\beta_2(S)}\leq F_S(\bu)-F_S(\bu_{(S)})\leq \frac{\|\nabla_{\bw}F_S(\bw,\bv)\|_2^2}{2\beta_1(S)}.
  \]
\end{lemma}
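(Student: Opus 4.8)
The plan is to exploit the fact that, under the two-sided PL condition, every stationary point of $F_S$ is automatically a saddle point of $F_S$, and then to chain this observation with the two PL inequalities evaluated at the arbitrary point $\bu=(\bw,\bv)$. Concretely, since $\beta_1(S),\beta_2(S)>0$, a point $\bu_{(S)}=(\bw_{(S)},\bv_{(S)})$ with $\nabla_{\bw}F_S(\bu_{(S)})=\nabla_{\bv}F_S(\bu_{(S)})=0$ cannot ``escape'' the PL bounds, forcing the usual saddle identities.

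First I would establish
\[
F_S(\bw_{(S)},\bv_{(S)})=\inf_{\bw'\in\wcal}F_S(\bw',\bv_{(S)})=\sup_{\bv'\in\vcal}F_S(\bw_{(S)},\bv').
\]
Indeed, plugging $(\bw,\bv)=(\bw_{(S)},\bv_{(S)})$ into the first inequality of Assumption \ref{ass:pl-two} and using $\nabla_{\bw}F_S(\bw_{(S)},\bv_{(S)})=0$ gives $F_S(\bw_{(S)},\bv_{(S)})-\inf_{\bw'}F_S(\bw',\bv_{(S)})\le 0$; the reverse inequality is trivial, so equality holds. The $\sup$-identity follows the same way from the second inequality and $\nabla_{\bv}F_S(\bw_{(S)},\bv_{(S)})=0$.

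Next I would sandwich $F_S(\bu_{(S)})$ between quantities tied to the point $\bu$. Using the $\sup$-identity,
\[
F_S(\bw_{(S)},\bv_{(S)})=\sup_{\bv'}F_S(\bw_{(S)},\bv')\ge F_S(\bw_{(S)},\bv)\ge \inf_{\bw'}F_S(\bw',\bv),
\]
and using the $\inf$-identity,
\[
F_S(\bw_{(S)},\bv_{(S)})=\inf_{\bw'}F_S(\bw',\bv_{(S)})\le F_S(\bw,\bv_{(S)})\le \sup_{\bv'}F_S(\bw,\bv').
\]
Hence $\inf_{\bw'}F_S(\bw',\bv)\le F_S(\bu_{(S)})\le \sup_{\bv'}F_S(\bw,\bv')$.

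Finally I would combine these bounds with the PL inequalities at $\bu$. For the upper estimate,
\[
F_S(\bu)-F_S(\bu_{(S)})\le F_S(\bw,\bv)-\inf_{\bw'}F_S(\bw',\bv)\le \frac{\|\nabla_{\bw}F_S(\bw,\bv)\|_2^2}{2\beta_1(S)},
\]
by the first inequality of Assumption \ref{ass:pl-two}; for the lower estimate,
\[
F_S(\bu)-F_S(\bu_{(S)})\ge F_S(\bw,\bv)-\sup_{\bv'}F_S(\bw,\bv')\ge -\frac{\|\nabla_{\bv}F_S(\bw,\bv)\|_2^2}{2\beta_2(S)},
\]
by the second inequality. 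This gives exactly the claimed two-sided bound. I do not anticipate any genuine obstacle here: the only point requiring care is keeping the directions of the $\inf/\sup$ comparisons straight, and confirming that ``stationary point'' is understood as vanishing of both partial gradients so that Assumption \ref{ass:pl-two} can be invoked at $\bu_{(S)}$.
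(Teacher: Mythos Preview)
Your proposal is correct and follows essentially the same approach as the paper: both arguments first observe that under Assumption~\ref{ass:pl-two} a stationary point $\bu_{(S)}$ is a saddle point of $F_S$ (the paper cites \citet{yang2020global} for this, while you derive it directly by plugging $\bu_{(S)}$ into the PL inequalities), and then combine the resulting saddle inequalities with the PL bounds at $\bu$. Your ``sandwich'' formulation $\inf_{\bw'}F_S(\bw',\bv)\le F_S(\bu_{(S)})\le\sup_{\bv'}F_S(\bw,\bv')$ is just a repackaging of the paper's add-and-subtract decomposition $F_S(\bu)-F_S(\bu_{(S)})=[F_S(\bw,\bv)-F_S(\bw_{(S)},\bv)]+[F_S(\bw_{(S)},\bv)-F_S(\bu_{(S)})]$; both routes reach the same key inequality $F_S(\bu)-F_S(\bu_{(S)})\le F_S(\bw,\bv)-\inf_{\bw'}F_S(\bw',\bv)$ before invoking PL.
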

\begin{proof}
Since $\bu_{(S)}$ is a stationary point, it is also a saddle point under the PL condition~\citep{yang2020global} which means that
\[
F_S(\bw_{(S)},\bv')\leq F_S(\bw_{(S)},\bv_{(S)})\leq F_S(\bw',\bv_{(S)}),\quad\forall \bw'\in\wcal,\bv'\in\vcal.
\]
It then follows that
\begin{align*}
  F_S(\bu)-F_S(\bu_{(S)}) & = F_S(\bw,\bv)-F_S(\bw_{(S)},\bv)+F_S(\bw_{(S)},\bv)-F_S(\bw_{(S)},\bv_{(S)})\\
  & \leq F_S(\bw,\bv)-F_S(\bw_{(S)},\bv)\leq F_S(\bw,\bv)-\inf_{\bw'\in\wcal}F_S(\bw',\bv)\leq \frac{1}{2\beta_1(S)}\|\nabla_{\bw} F_S(\bw,\bv)\|_2^2,
\end{align*}
where in the last inequality we have used Assumption \ref{ass:pl-two}.
In a similar way, we know
\begin{align*}
  F_S(\bu)-F_S(\bu_{(S)}) & = F_S(\bw,\bv)-F_S(\bw,\bv_{(S)})+F_S(\bw,\bv_{(S)})-F_S(\bw_{(S)},\bv_{(S)})\\
  & \geq F_S(\bw,\bv)-F_S(\bw,\bv_{(S)})\geq F_S(\bw,\bv)-\sup_{\bv'}F_S(\bw,\bv')\geq -\frac{1}{2\beta_2(S)}\|\nabla_{\bv} F_S(\bw,\bv)\|_2^2.
\end{align*}
The proof is complete.
\end{proof}
\begin{proof}[Proof of Theorem \ref{thm:gen-pl}]
Let $S'=\{z'_1,\ldots,z'_n\}$ be drawn independently from $\rho$. For any $i\in[n]$, define $S^{(i)}=\{z_1,\ldots,z_{i-1},z_i',z_{i+1},\ldots,z_n\}$.
Let $\bu_S=(A_{\bw}(S),A_{\bv}(S))$ and $\bu_S^{(S)}$ be the projection of $\bu_S$ onto the set of stationary points of $F_S$.
For each $i\in[n]$, we denote $\bu_i=(A_{\bw}(S^{(i)}),A_{\bv}(S^{(i)}))$ and $\bu_i^{(i)}$ the projection of $\bu_i$ onto the set of stationary points of $F_{S^{(i)}}$. Then $\nabla F_{S^{(i)}}(\bu_i^{(i)})=0$.

We decompose $f(\bu_i;z_i)-f(\bu_S;z_i)$ as follows
\begin{multline}\label{pl-1}
  f(\bu_i;z_i)-f(\bu_S;z_i) = \big(f(\bu_i;z_i)-f(\bu_i^{(i)};z_i)\big)
  + \big(f(\bu_i^{(i)};z_i)-f(\bu_S^{(S)};z_i)\big)+\big(f(\bu_S^{(S)};z_i)-f(\bu_S;z_i)\big).
\end{multline}
We now address the above three terms separately.

We {first} address $f(\bu_i^{(i)};z_i)-f(\bu_S^{(S)};z_i)$.
According to the definition of $F_S,S,S^{(i)}$, we know
\[
f(\bu_i^{(i)};z_i)=nF_S(\bu_i^{(i)})-nF_{S^{(i)}}(\bu_i^{(i)})+f(\bu_i^{(i)};z'_i).
\]
Since $z_i$ and $z'_i$ follow from the same distribution, we know
$\ebb[f(\bu_i^{(i)};z'_i)]=\ebb[f(\bu_S^{(S)};z_i)]$ and further get
\[
\ebb\big[f(\bu_i^{(i)};z_i)\big]=n\ebb\big[F_S(\bu_i^{(i)})\big]-n\ebb\big[F_{S^{(i)}}(\bu_i^{(i)})\big]+\ebb\big[f(\bu_S^{(S)};z_i)\big].
\]
It then follows that
\begin{equation}
  \ebb\big[f(\bu_i^{(i)};z_i)-f(\bu_S^{(S)};z_i)\big] = n\ebb\big[F_S(\bu_i^{(i)})-F_{S^{(i)}}(\bu_i^{(i)})\big]
   = n \ebb\Big[F_S(\bu_i^{(i)})-F_{S}(\bu_S^{(S)})\Big],\label{pl-00}
\end{equation}
where we have used the following identity due to the symmetry between $z_i$ and $z'_i$:
$
\ebb[F_{S^{(i)}}(\bu_i^{(i)})]=\ebb\big[F_{S}(\bu_S^{(S)})\big].
$
By the PL condition of $F_S$, it then follows from \eqref{pl-00} and Lemma \ref{lem:grad-domin} that
\begin{equation}\label{pl-0}
\ebb\big[f(\bu^{(i)};z_i)-f(\bu_S^{(S)};z_i)\big]\leq \frac{n}{2}\ebb\big[\frac{1}{\beta_1(S)}\|\nabla_{\bw} F_S(\bu_i^{(i)})\|_2^2\big].
\end{equation}
According to the definition of $\bu_i^{(i)}$ we know $\nabla_{\bw} F_{S^{(i)}}(\bu_i^{(i)})=0$ and therefore ($(a+b)^2\leq2a^2+2b^2$)
\begin{align}
  \|\nabla_{\bw} F_S(\bu_i^{(i)})\|_2^2 & = \Big\|\nabla_{\bw} F_{S^{(i)}}(\bu_i^{(i)})-\frac{1}{n}\nabla_{\bw} f(\bu_i^{(i)};z'_i)+\frac{1}{n}\nabla_{\bw} f(\bu_i^{(i)};z_i)\Big\|_2^2 \notag\\
   & \leq \frac{2}{n^2}\|\nabla_{\bw} f(\bu_i^{(i)};z'_i)\|_2^2 + \frac{2}{n^2}\|\nabla_{\bw} f(\bu_i^{(i)};z_i)\|_2^2
   \leq \frac{4G^2}{n^2},\label{pl-01}
\end{align}
where we have used Assumption \ref{ass:lipschitz}.
This together with \eqref{pl-0} gives
\begin{equation}\label{pl-2}
\ebb\big[f(\bu^{(i)};z_i)-f(\bu_S^{(S)};z_i)\big]\leq \frac{2G^2}{n}\ebb\Big[\frac{1}{\beta_1(S)}\Big].
\end{equation}

We {then} address $f(\bu_i;z_i)-f(\bu_i^{(i)};z_i)$.
Since $\bu_i$ and $\bu_i^{(i)}$ are independent of $z_i$, we know
\begin{equation}\label{pl-3}
  \ebb\big[f(\bu_i;z_i)-f(\bu_i^{(i)};z_i)\big]=\ebb\big[F(\bu_i)-F(\bu_i^{(i)})\big] = \ebb\big[F(\bu_S)-F(\bu_S^{(S)})\big],
\end{equation}
where we have used the symmetry between $z_i$ and $z'_i$.

Finally, we address $f(\bu_S^{(S)};z_i)-f(\bu_S;z_i)$. By the definition of $\bu_S^{(S)}$ we know
\begin{equation}\label{pl-9}
\sum_{i=1}^n\big(f(\bu_S^{(S)};z_i)-f(\bu_S;z_i)\big)=n\big(F_S(\bu_S^{(S)})-F_S(\bu_S)\big).
\end{equation}
Plugging \eqref{pl-2}, \eqref{pl-3} and the above inequality back into \eqref{pl-1}, we derive
\[
\sum_{i=1}^{n}\ebb\big[f(\bu_i;z_i)-f(\bu_S;z_i)\big]\leq
\ebb\Big[\frac{2G^2}{\beta_1(S)}\Big]+
n\ebb\big[F(\bu_S)-F(\bu_S^{(S)})\big]+n\ebb\big[F_S(\bu_S^{(S)})-F_S(\bu_S)\big].
\]
Since $z_i$ and $z'_i$ are drawn from the same distribution, we know
  \begin{align}
  \ebb\big[F(\bu_S)-F_S(\bu_S)\big] & = \frac{1}{n}\sum_{i=1}^{n}\ebb\big[F(\bu_i)-F_S(\bu_S)\big]
  = \frac{1}{n}\sum_{i=1}^{n}\ebb\big[f(\bu_i;z_i)-f(\bu_S;z_i)\big]\notag\\
  & \leq \frac{2G^2}{n}\ebb\Big[\frac{1}{\beta_1(S)}\Big]+
\ebb\big[F(\bu_S)-F(\bu_S^{(S)})\big]+\ebb\big[F_S(\bu_S^{(S)})-F_S(\bu_S)\big],\label{pl-4}
  \end{align}
  where the second identity holds since $z_i$ is independent of $\bu_i$. It then follows that
\begin{equation}\label{pl-5}
\ebb\big[F(\bu_S^{(S)})-F_S(\bu_S^{(S)})\big]\leq \frac{2G^2}{n}\ebb\Big[\frac{1}{\beta_1(S)}\Big].
\end{equation}
According to the Lipschitz continuity we know
\[
  \big|F(\bu_S) - F(\bu_S^{(S)})\big|\leq G\|\bu_S-\bu_S^{(S)}\|_2\quad\text{and}\quad
  \big|F_S(\bu_S) - F_S(\bu_S^{(S)})\big|\leq G\|\bu_S-\bu_S^{(S)}\|_2.
\]
Plugging the above inequality back into \eqref{pl-4}, we derive the following inequality
\begin{equation}\label{pl-6}
\ebb\big[F(\bu_S)-F_S(\bu_S)\big]\leq \frac{2G^2}{n}\ebb\Big[\frac{1}{\beta_1(S)}\Big]+2G\ebb\big[\|\bu_S-\bu_S^{(S)}\|_2\big].
\end{equation}
By Lemma \ref{lem:grad-domin} and \eqref{pl-00}, we can also have
\[
\ebb\big[f(\bu_i^{(i)};z_i)-f(\bu_S^{(S)};z_i)\big]\geq -\frac{n}{2}\ebb\big[\frac{1}{\beta_2(S)}\|\nabla_{\bv} F_S(\bu_i^{(i)})\|_2^2\big].
\]
Using this inequality, one can analyze analogously to \eqref{pl-6} and derive the following inequality
\[
\ebb\big[F(\bu_S)-F_S(\bu_S)\big]\geq -\frac{2G^2}{n}\ebb\Big[\frac{1}{\beta_2(S)}\Big]-2G\ebb\big[\|\bu_S-\bu_S^{(S)}\|_2\big].
\]
The stated inequality follows from the above inequality and \eqref{pl-6}. The proof is complete.
\end{proof}

\section{Additional Experiments\label{sec:exp-more}}

In this section, we investigate the stability of SGDA on a nonconvex-nonconcave problem. We consider the vanilla GAN structure proposed in \citet{goodfellow2014generative}. The generator and the discriminator consist of 4 fully connected layers, and use the leaky rectified linear activation before the output layer. The generator uses the hyperbolic tangent activation at the output layer. The discriminator uses the sigmoid activation at the output layer.
In order to make experiments more interpretable in terms of stability, we remove all forms of regularization such as the weight decay or dropout in the original paper. In order to truly implement SGDA, we generate only one noise for updating both the discriminator and the generator at each iteration. This differs from the common GAN training strategy, which uses different noises for updating the discriminator and the generator. We employ the \texttt{mnist} dataset \citep{lecun1998gradient} and build neighboring datasets $S$ and $S'$ by removing a randomly chosen datum indexed by $i$ from $S$ and $i+1$ from $S'$. The algorithm is run based on the same trajectory for $S$ and $S'$ by fixing the random seed. We randomly pick 5 different $i$'s and 5 different random seeds (total 25 runs). The step sizes for the discriminator and the generator are chosen as constants, i.e. $\eta = 0.0002$. We compute the Euclidean distance, i.e., Frobenius norm, between the parameters trained on the neighboring datasets. Note that we do not target at optimizing the test accuracy, but give an interpretable visualization to validate our theoretical findings. The results are given in Figure \ref{fig:gan}.

\begin{figure}[ht!]
\centering
\includegraphics[width=0.8\textwidth]{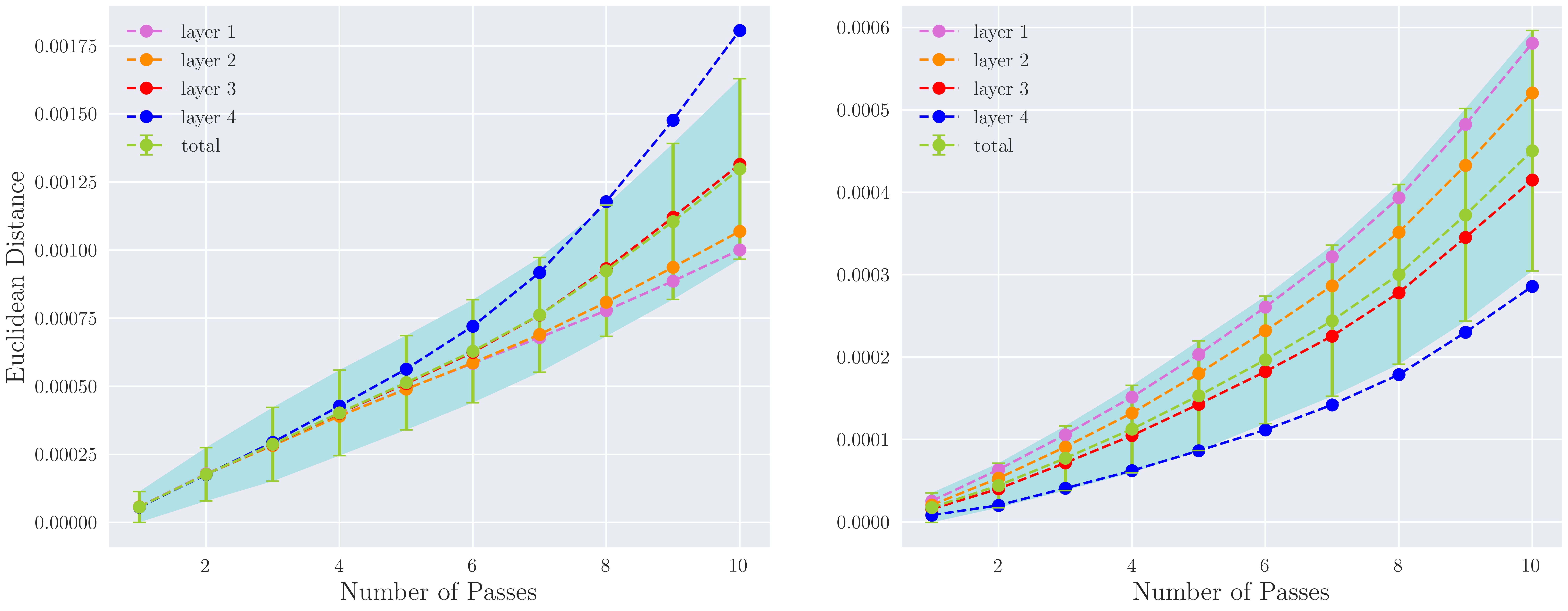}
\caption{\label{fig:gan}The parameter distance versus the number of passes. Left: generator, right: discriminator. 'total' is the mean normalized Euclidean distance across all layers and the shaded area is the standard deviation.} 
\end{figure}

It is clear that the parameter distances for both the generator and the discriminator continue to increase during the training process of SGDA, which is consistent with our analysis in Section \ref{sec:sgda-wcwc} and \ref{sec:wcwc-diminishing}.




\end{document}